\def\eqref#1{equation~\ref{#1}}
\def\Eqref#1{Equation~(\ref{#1})}
\def\1{\bm{1}}
\def\eps{{\epsilon}}
\DeclareMathAlphabet{\mathsfit}{\encodingdefault}{\sfdefault}{m}{sl}
\SetMathAlphabet{\mathsfit}{bold}{\encodingdefault}{\sfdefault}{bx}{n}
\newcommand{\pdata}{p_{\rm{data}}}
\newcommand{\E}{\mathbb{E}}
\DeclareMathOperator*{\argmax}{arg\,max}
\DeclareMathOperator*{\argmin}{arg\,min}
\newtheorem{Def}{Definition}
\renewcommand{\P}{\mathcal{P}}
\newcommand{\D}{\mathcal{D}}
\def\reals{{\mathcal R}}
\newcommand{\K}{\mathcal{K}}
\newcommand{\ignore}[1]{}
\def\reals{{\mathbb R}}
\def\bold0{\mathbf{0}}
\def\x{\mathbf{x}}
\def\z{\mathbf{z}}
\def\v{\mathbf{v}}
\renewcommand{\eps}{\varepsilon}
\newcommand{\half}{\frac{1}{2}}
\pgfplotsset{compat=1.14}
\renewcommand{\u}{{\mathbf u}}
\renewcommand{\v}{{\mathbf v}}
\newcommand{\thickhline}{%
    \noalign {\hrule height 2pt}%
}
\title{A Domain Agnostic Measure for Monitoring and Evaluating GANs}
\author{%
  Paulina Grnarova\thanks{Correspondence to \texttt{paulina.grnarova@inf.ethz.ch}}\\
  ETH Zurich\\
  \And
  Kfir Y. Levy\\
  Technion-Israel Institute of Technology\\
  \And
  Aurelien Lucchi\\
  ETH Zurich
  \And
  Nathanaël Perraudin\\
  Swiss Data Science Center\\
  \And
  Ian Goodfellow\\
  \And
  Thomas Hofmann\\
  ETH Zurich\\
  \And
  Andreas Krause\\
  ETH Zurich
  }
\begin{document}

\maketitle

\begin{abstract}
Generative Adversarial Networks (GANs) have shown remarkable results in modeling complex distributions, but their evaluation remains an unsettled issue. Evaluations are essential for: (i) relative assessment of different models and (ii) monitoring the progress of a single model throughout training. The latter cannot be determined by simply inspecting the generator and discriminator loss curves as they behave non-intuitively. We leverage the notion of duality gap from game theory to propose a measure that addresses both (i) and (ii) at a low computational cost. Extensive experiments show the effectiveness of this measure to rank different GAN models and capture the typical GAN failure scenarios, including mode collapse and non-convergent behaviours. This evaluation metric also provides meaningful monitoring on the progression of the loss during training. It highly correlates with FID on natural image datasets, and with domain specific scores for text, sound and cosmology data where FID is not directly suitable. In particular, our proposed metric requires no labels or a pretrained classifier, making it domain agnostic.
\end{abstract}
\section{Introduction}
In recent years, a large body of research has focused on practical and theoretical aspects of Generative adversarial networks (GANs) ~\citep{goodfellow2014generative}. This has led to the development of several GAN variants~\citep{nowozin16, arjovsky2017wasserstein} as well as some evaluation metrics such as FID or the Inception score that are both data-dependent and dedicated to images. A \emph{domain independent} quantitative metric is however still a key missing ingredient that hinders further developments.

One of the main reasons behind the lack of such a metric originates from the nature of GANs that implement an adversarial game between two players, namely a generator and a discriminator. Let us denote the data distribution by $\pdata(\x)$, the model distribution by $p_\u(\x)$ and the prior over latent variables by $p_\z$. A probabilistic discriminator is denoted by $D_\v: \x \mapsto [0;1]$ and a generator by $G_\u: \z \mapsto \x$. The GAN objective is:
\begin{align} 
\label{eq:GAN_objective}
\min_{\u} \max_{\v}M(\u,\v) &= \frac 12 \E_{\x \sim \pdata} \log D_\v(\x) + \frac 12 \E_{\z \sim p_\z} \log (1-D_\v(G_\u(\z))).
\end{align}

Each of the two players tries to optimize their own objective, which is exactly balanced by the loss of the other player, thus yielding a two-player zero-sum minimax game. The minimax nature of the objective and the use of neural networks as players make the process of learning a generative model challenging. We focus our attention on two of the central open issues behind these difficulties and how they translate to a need for an assessment metric.

\paragraph{i) Convergence metric}
The need for an adequate convergence metric is especially relevant given the difficulty of training GANs: current approaches often fail to converge~\citep{salimans2016improved} or oscillate between different modes of the data distribution~\citep{metz2016unrolled}. The ability of reliably detecting non-convergent behavior has been pointed out as an open problem in many previous works, e.g., by ~\citep{mescheder2018training} as a stepping stone towards a deeper analysis as to which GAN variants converge. Such a metric is not only important for driving the research efforts forward, but from a practical perspective as well. Deciding when to stop training is difficult as the curves of the discriminator and generator losses oscillate (see Fig.~\ref{fig:intro_table}) and are non-informative as to whether the model is improving or not~\citep{arjovsky2017wasserstein}. This is especially troublesome when a GAN is trained on non-image data in which case one might \emph{not} be able to use visual inspection or FID/Inception scores as a proxy. 

\paragraph{ii) Evaluation metric}
Another key problem we address is the relative comparison of the learned generative models. While several evaluation metrics exist, there is no clear consensus regarding which metric is the most appropriate. Many metrics achieve reasonable discriminability (i.e., ability to distinguish generated samples from real ones), but also tend to have a high computational cost. Some popular metrics are also specific to image data. We refer the reader to~\citep{borji2018pros} for an in-depth discussion of the merits and drawbacks of existing evaluation metrics.

In more traditional likelihood-based models, the train/test curves do address the problems raised in \textbf{i)} and \textbf{ii)}. For GANs, the generator/discriminator curves (see Fig.~\ref{fig:intro_table}) are however largely uninformative due to the minimax nature of GANs where both players can undo each other's progress.  

\begin{figure}[t]
\centering
\includegraphics[width=1.0\linewidth]{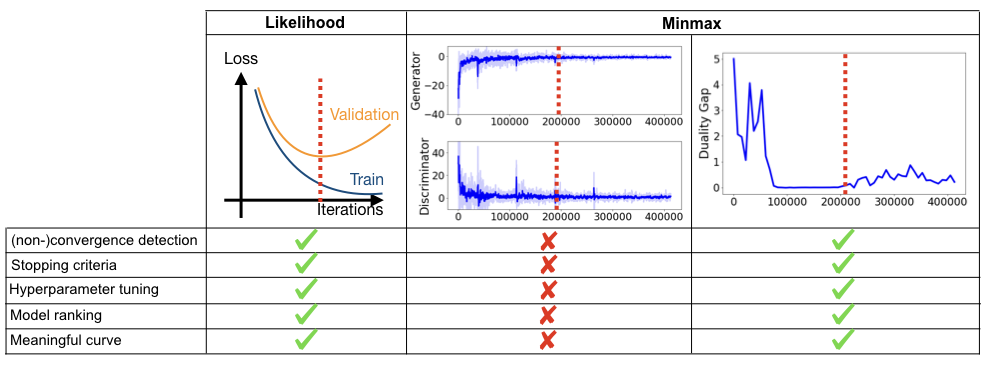} 
\caption{Comparison of information obtained by different metrics for likelihood and minmax-based models. The red dashed line corresponds to the optimal point for stopping the training.}
\label{fig:intro_table}
\end{figure}
In this paper, we leverage ideas from game theory to propose a simple and computationally efficient metric for GANs.
Our approach is to view GANs as a zero-sum game between a generator $G$ and discriminator $D$.
From this perspective, ``solving" the game is equivalent to finding an \emph{equilibrium}, i.e., a pair $(G^*,D^*)$ such that no side may increase its utility by unilateral deviation. 
A natural metric for measuring the sub-optimality (w.r.t.~an equilibrium) of a given solution $(G,D)$  is the \emph{duality gap} \citep{neumann1928theorie,nash1950equilibrium}. We therefore suggest to use it as a metric for GANs akin to a test loss in the likelihood case (See Fig. \ref{fig:intro_table} - duality gap\footnote{The curves are obtained for a progressive GAN trained on CelebA}). 

There are several important issues that we address in order to make the \emph{duality gap} an appropriate and practical metric for GANs. Our contributions include the following:
\begin{itemize}[leftmargin=0.3in]
    \setlength\itemsep{0em}
    \item We show that the \emph{duality gap} allows to assess the similarity between the generated data and true data distribution (see Theorem~\ref{th:DualityGap33}).
    \item We show how to appropriately estimate the
    \emph{duality gap} in the typical machine learning scenario where our access to the GAN learning objective is only through samples.
    \item We provide a computationally efficient way to estimate the \emph{duality gap} during training.
    \item In scenarios where one is interested in assessing the quality of the learned generator, we show how to use a related metric -- the  minimax loss -- that takes only the generator into consideration in order to detect mode collapse and measure sample quality.
    \item We extensively demonstrate the effectiveness of these metrics on a range of datasets, GAN variants and failure modes. Unlike the FID or Inception score that require labelled data or a domain dependent classifier, our metrics are \emph{domain independent} and do not require labels.
\end{itemize}
\paragraph{Related work.}
\label{sec:related_work}
While several evaluation metrics have been proposed \citep{salimans2016improved, sajjadi2018assessing, heusel2017gans, lopez2016revisiting}, previous research has pointed out various limitations of these metrics, thus leaving the evaluation of GANs as an unsettled issue \citep{mescheder2018training}. Since the data log-likelihood is commonly used to train generative models, it may appear to be a sensible metric for GANs. However, its computation is often intractable and ~\citep{theis2015note} also demonstrate that it has severe limitations as it might yield low visual quality samples despite of a high likelihood. Perhaps the most popular evaluation metric for GANs is the inception score~\citep{salimans2016improved} that measures both diversity of the generated samples and discriminability. While diversity is measured as the entropy of the output distribution, the discriminability aspect requires a pretrained neural network to assign high scores to images close to training images. Various modifications of the inception score have been suggested. The Frechet Inception Distance (FID)~\citep{heusel2017gans} models features from a hidden layer as two multivariate Gaussians for the generated and true data. However, the Gaussian assumption might not hold in practice and labelled data is required in order to train a classifier. Without labels, transfer learning is possible to datasets under limited conditions (i.e., the source and target distributions should not be too dissimilar).
In~\citep{olsson2018skill}, two metrics are introduced to evaluate a single model playing against past and future versions of itself, as well as to measure the aptitude of two different fully trained models. In some way, this can be seen as an approximation of the minimax value we advocate in this paper, where instead of doing a full-on optimization in order to find the best adversary for the fixed generator, the search space is limited to discriminators that are snapshots from training, or discriminators trained with different seeds. 

The ideas of duality and equilibria developed in the seminal work of ~\citep{neumann1928theorie, nash1950equilibrium} have become a cornerstone in many fields of science, but are relatively unexplored for GANs. Some exceptions are \citep{daskalakis2018training,grnarova2017online,gidel2019variational,hsieh2018finding} but these works do not address the problem of evaluation. Closer to us, game theoretic metrics were previously mentioned in~\citep{oliehoek2018beyond}, but without a discussion addressing the stochastic nature and other practical difficulties of GANs, thus not yielding a practical applicable method.
We conclude our discussion by pointing out the vast literature on duality used in the optimization community as a convergence criterion for min-max saddle point problems, see e.g.~\cite{nemirovski2009robust, komodakis2015playing}. Some recent work uses Lagrangian duality in order to derive an objective to train GANs~\cite{chen2018training} or to dualize the discriminator, therefore reformulating the saddle point objective as a maximization problem~\cite{li2017dualing}. A similar approach proposed by~\citep{gemici2018primal} uses the dual formulation of Wasserstein GANs to train the decoder. Although we also make use of duality, there are significant differences. Unlike prior work, our contribution does not relate to \emph{optimising GANs}. Instead, we focus on establishing that the duality gap acts as a proxy to measure convergence, which we do theoretically (Th.~\ref{th:DualityGap33}) as well as empirically, the latter requiring a new efficient estimation procedure discussed in Sec.~\ref{sec:estimatingDG}.

\section{Duality Gap as Performance Measure}
\label{sec:theory}
Standard learning tasks are often described as  (stochastic) optimization problems; this applies to common Deep Learning scenarios as well as to classical tasks such as logistic and linear regression. This formulation gives rise to a natural performance measure, namely the test loss\footnote{For classification tasks using the zero-one test error is also very natural. Nevertheless, in regression tasks the test loss is often the only reasonable performance measure.}.  
In contrast, GANs are formulated as (stochastic) zero-sum games. Unfortunately, this fundamentally different formulation does not allow us to use the same performance metric.
In this section, we describe a performance measure for GANs, which naturally arises from a game theoretic perspective.
We start with a brief overview of zero-sum games, including a description of  the \emph{Duality gap} metric. 

A zero-sum game is defined by two players $\P_1$ and $\P_2$ who choose a decision from their respective decision sets $\K_1$ and $\K_2$. 
A game objective $M: \K_1 \times \K_2 \mapsto \reals$  sets the utilities of the players. 
Concretely, upon choosing a pure strategy $(\u,\v)\in \K_1 \times \K_2$ the utility of $\P_1$ is $-M(\u,\v)$, while the utility of $\P_2$ is  $M(\u,\v)$. The goal of either $\P_1$/$\P_2$ is to maximize their worst case utilities: 
\begin{equation}
\label{eq:MinmaxGame}
\min_{\u\in\K_1}\max_{\v\in \K_2}M(\u,\v) \quad \textbf{(Goal of $\P_1$)}, \quad
\max_{\v\in\K_2}\min_{\u\in \K_1}M(\u,\v)\quad \textbf{(Goal of $\P_2$)}
\end{equation}
This formulation raises the question of whether there exists a solution  $(\u^*,\v^*)$ to which both players may jointly converge.
The latter only occurs if there exists  $(\u^*,\v^*)$ such that  neither $\P_1$ nor $\P_2$ may increase their utility by unilateral deviation.  Such a solution is called a \emph{pure equilibrium}, formally,
$$
\max_{\v\in\K_2}M(\u^*,\v) = \min_{\u\in\K_1}M(\u,\v^*)~ ~~\textbf{(Pure Equilibrium).}
$$
While a pure equilibrium does not always exist, 
the seminal work of~\citep{nash1950equilibrium} shows that an extended notion of equilibrium always does. Specifically, there always exists a distribution $\D_1$ over elements of $\K_1$, and a distribution $\D_2$ over elements of $\K_2$, such that the following holds,
$$
\max_{\v\in\K_2}\E_{\u\sim \D_1}M(\u,\v) = \min_{\u\in\K_1}\E_{\v\sim \D_2}M(\u,\v)~ \\ ~~\textbf{(MNE).}
$$
Such a solution is called a \emph{Mixed Nash Equilibrium (MNE)}.
This notion of equilibrium gives rise to the following  natural performance measure of a given pure/mixed strategy.
\begin{Def}[Duality Gap]
Let $\D_1$ and $\D_2$ be fixed distributions over elements from $\K_1$ and $\K_2$ respectively. Then the  duality gap DG of $(\D_1,\D_2)$ is defined as follows,
\begin{align}
\rm{DG}(\D_1,\D_2):=
 \max_{\v \in \K_2} \E_{\u\sim \D_1}M(\u,\v)
 -\min_{\u \in \K_1} \E_{\v\sim \D_2}M(\u,\v).
\end{align}
Particularly, for a given \emph{pure strategy} $(\u,\v)\in\K_1\times \K_2$ we define,
\begin{align}
\rm{DG}(\u,\v):=
 \max_{\v' \in \K_2} M(\u,\v')~
 -\min_{\u' \in \K_1} M(\u',\v)~.
\end{align}
\end{Def}
Two well-known properties of the duality gap are that it is always non-negative and  is exactly zero in (mixed) Nash Equilibria. These properties are very appealing from a practical point of view, since it means that the duality gap gives us an immediate handle for measuring convergence. 

Next we illustrate the usefulness of the duality gap metric by analyzing the ideal case where both $G$ and $D$ have unbounded capacity. The latter notion introduced by~\citep{goodfellow2014generative}  means that the generator can represent any distribution, and  the discriminator can represent any decision rule.
The next proposition shows that in this case, as long as $G$ is not equal to the true distribution then the duality gap is always positive. In particular, we show that the duality gap is at least as large as the Jensen-Shannon divergence between true and fake distributions. We also show that if $G$ outputs the true distribution, then there exists a discriminator  such that the duality gap (DG) is zero. See a proof in the Appendix.

\begin{restatable}[DG and JSD]{theorem}{dualitygap}
\label{th:DualityGap33}
Consider the GAN objective in Eq.~[\ref{eq:GAN_objective}], and assume that the generator and discriminator networks have unbounded capacity.
Then the duality gap of a given fixed solution $(G_\u,D_\v)$ is lower bounded by the Jensen-Shannon divergence between the true distribution $\pdata$ and the fake distribution $q_\u$ generated by $G_\u$, i.e.
$
\rm{DG}(\u,\v) \geq \rm{JSD}(\pdata\left| \right| q_\u).
$
Moreover, if $G_\u$ outputs the true distribution, then there exists a discriminator $D_\v$ such that  $\rm{DG}(G_\u,D_\v)=0$.
\end{restatable}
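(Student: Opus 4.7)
The plan is to combine the classical pointwise optimal-discriminator computation of \cite{goodfellow2014generative} (which turns $\max_{\v'} M(\u,\v')$ into a Jensen--Shannon expression) with a clever \emph{upper bound} on $\min_{\u'} M(\u',\v)$ obtained by plugging in a single well-chosen generator, rather than trying to compute the true minimum (which, for an adversarially chosen $\v$, could go to $-\infty$ and therefore gives no usable bound). The asymmetry between the two terms of the duality gap is what makes the statement nontrivial and is the main conceptual obstacle.

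First I would handle the $\max$ term. Under unbounded capacity, for a fixed $G_\u$ inducing distribution $q_\u$, the integrand of $M(\u,\v')$ is pointwise maximized by the Bayes-optimal classifier $D^\star(\x) = \pdata(\x)/(\pdata(\x)+q_\u(\x))$. Substituting this back and simplifying (with the $\tfrac12$ prefactors present in Eq.~\ref{eq:GAN_objective}) gives
\begin{equation*}
\max_{\v' \in \K_2} M(\u,\v') \;=\; -\log 2 \;+\; \mathrm{JSD}(\pdata \,\|\, q_\u).
\end{equation*}

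Next I would upper bound $\min_{\u'} M(\u',\v)$. The key trick is to \emph{not} minimize: just pick one particular $\u'$, namely any $\u'$ such that $G_{\u'}$ outputs $\pdata$ (this exists by the unbounded-capacity assumption on $G$). For that choice, both expectations in $M$ are taken under $\pdata$, so
\begin{equation*}
M(\u',\v) \;=\; \tfrac12 \E_{\x\sim \pdata}\bigl[\log D_\v(\x) + \log(1-D_\v(\x))\bigr] \;=\; \tfrac12 \E_{\x\sim \pdata}\log\bigl(D_\v(\x)(1-D_\v(\x))\bigr).
\end{equation*}
Since $t(1-t) \le 1/4$ for $t\in[0,1]$, the integrand is bounded above by $-2\log 2$, giving $\min_{\u'} M(\u',\v) \le M(\u',\v) \le -\log 2$. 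Combining with the previous step yields
\begin{equation*}
\rm{DG}(\u,\v) \;\ge\; \bigl(-\log 2 + \rm{JSD}(\pdata\|q_\u)\bigr) - \bigl(-\log 2\bigr) \;=\; \rm{JSD}(\pdata\|q_\u),
\end{equation*}
which is the desired inequality.

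For the second claim, suppose $G_\u$ outputs $\pdata$, and pick the constant discriminator $D_\v \equiv 1/2$. Since $M(\u',\v) = -\log 2$ for \emph{every} $\u'$ when $D_\v\equiv 1/2$, we have $\min_{\u'} M(\u',\v) = -\log 2$. For the max side, $q_\u = \pdata$ makes $\mathrm{JSD}(\pdata\|q_\u)=0$, and the first step gives $\max_{\v'} M(\u,\v') = -\log 2$. Hence $\rm{DG}(\u,\v)=0$. The only delicate point to double-check is the constant factor from the $\tfrac12$ in the objective, but that is a routine calculation.
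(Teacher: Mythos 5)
Your proof is correct and follows essentially the same route as the paper's: the max term is evaluated exactly via the pointwise-optimal discriminator $D^\star = \pdata/(\pdata+q_\u)$ yielding $-\log 2 + \mathrm{JSD}$, and the min term is upper bounded by plugging in the generator that reproduces $\pdata$ and using $\log\bigl(D(1-D)\bigr)\le -2\log 2$ (the paper phrases this as a pointwise maximization over $D(x)\in[0,1]$, which is the same bound). The second part, taking $D_\v\equiv 1/2$ when $q_\u=\pdata$, is also identical to the paper's argument.
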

Note that different GAN objectives are known to be related to other types of divergences \cite{nowozin16}, and we  believe that the Theorem above can be generalized to other GAN objectives~\cite{arjovsky2017wasserstein,gulrajani2017improved}.
\section{Estimating the Duality Gap for GANs}
\label{sec:estimatingDG}
\paragraph{Appropriately estimating the duality gap from samples.} Supervised learning  problems are often formulated as stochastic optimization programs, meaning that we may only access estimates of the expected loss by using   samples.
One typically splits the data into training and test sets
\footnote{Of course, one should also use a validation set, but this is less important for our discussion here.}. The training set is used to find a solution whose quality is estimated using a separate test set (which provides an unbiased estimate of the true expected loss).
Similarly, GANs are formulated as stochastic zero-sum games (Eq.~[\ref{eq:GAN_objective}]) but the issue of evaluating the duality gap metric is more delicate. This is because we have three phases in the evaluation: \textbf{(i)} training a model $(\u, \v)$, \textbf{(ii)} finding the worst case discriminator/generator, 
$\v_{\rm{worst}} \gets \arg\max_{\v\in\K_2} M(\u,\v)$, and $\u_{\rm{worst}} \gets \arg\min_{\u\in\K_1} M(\u,\v)$, and 
\textbf{(iii)} computing the  duality gap by estimating: $\rm{DG}:=M(\u,\v_{\rm{worst}}) - M(\u_{\rm{worst}},\v)$.
Now since we do not have direct access to the expected objective, one should use different samples for  each of the three mentioned phases  in order to maintain an unbiased estimate of the expected duality gap.
Thus we split our dataset into three disjoint subsets: a training set, \emph{an adversary finding set}, and a test set which are respectively used in phases \textbf{(i)}, \textbf{(ii)} and \textbf{(iii)}.
\paragraph{Minimax Loss as a metric for evaluating generators.}
For all experiments, we report both the duality gap (DG) and the minimax loss $M(\u,\v_{\rm{worst}})$. The latter is the first term in the expression of the DG and intuitively measures the `goodness' of a generator $G_\u$. If $G_\u$ is optimal and covers $\pdata$, the minimax loss achieves its optimal value as well. This happens when $D_{\v_{\rm{worst}}}$ outputs 0.5 for both the real and generated samples. Whenever the generated distribution does not cover the entire support of $\pdata$ or compromises the sample quality, this is detected by $D_{\v_{\rm{worst}}}$ and hence, the minimax loss increases. This makes it a compelling metric for detecting mode collapse and evaluating sample quality. Note that in order to compute this metric one only needs a batch of generated samples, i.e. the generator can be used as a black-box. Hence, this metric is not limited to generators trained as part of a GAN, but can instead be used for any generator that can be sampled from.
\vspace{-0.35cm}
\paragraph{Practical and efficient estimation of the duality gap for GANs.}
In practice, the metrics are computed by optimizing a separate generator/discriminator using a gradient based algorithm. To speed up the optimization, we initialize the networks using the parameters of the adversary at the step being evaluated. Hence, if we are evaluating the GAN at step $t$, we train $\v_{\rm{worst}}$ for $\u_t$ and $\u_{\rm{worst}}$ for $\v_t$ by using $\v_t$ as a starting point for $\v_{\rm{worst}}$ and analogously, $\u_t$ as a starting point for $\u_{\rm{worst}}$ for a number of fixed steps.
We also explored further approximations of DG, where instead of using optimization to find  $\v_{\rm{worst}}$ and $\u_{\rm{worst}}$, we limit the search space to a set of discriminators and generators stored as snapshots throughout the training, similarly to \citep{olsson2018skill} (see results in Appendix \ref{app:dgapprox}). 
In Appendix~\ref{app:analysisapprox} we include an in-depth analysis of the quality of the approximation of the DG and how it compares to the true theoretical DG.
\vspace{-0.2cm}
\paragraph{Non-negativity DG.} 
While DG is non-negative in theory, this might not hold in practice since we only find approximate $(\u_{\rm{worst}},\v_{\rm{worst}})$.
Nevertheless, the practical scheme that we describe above makes sure that we do not get negative values for DG in practice. 
We elaborate on this in Appendix\ref{app:non_neg}.

\begin{figure}[t]
\begin{center}
\scalebox{0.7}{
\begin{tabular}{ |c|c|c|c| }
\thickhline
& RING & SPIRAL & GRID \\
\thickhline
\multirow{2}{*}[5em]{\rotatebox{90}{stable}} & \includegraphics[width=0.4\textwidth, trim=0 -5 0 -5]{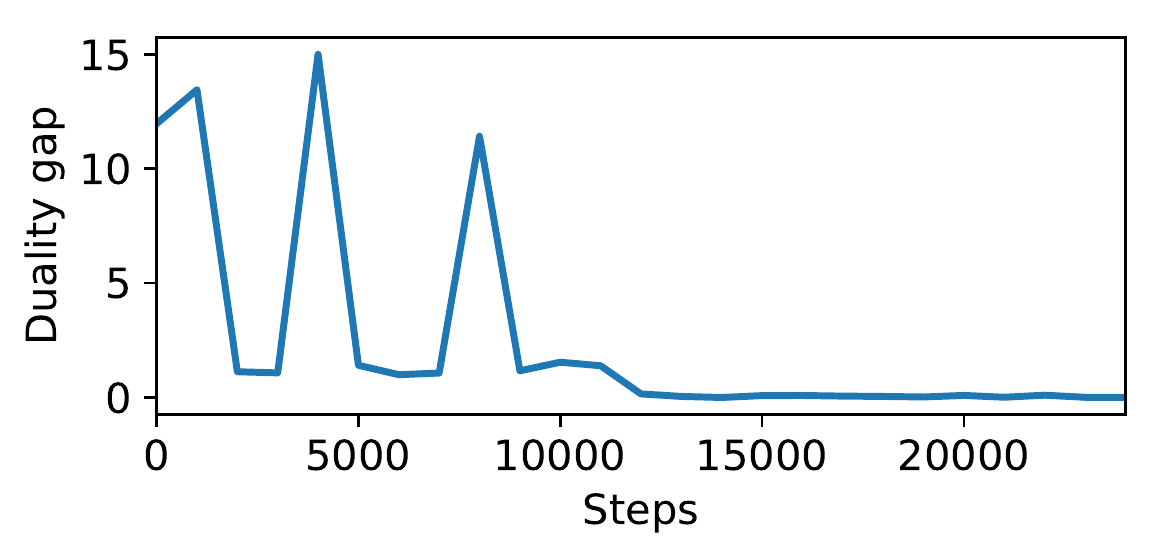} & \includegraphics[width=0.4\textwidth, trim=0 -5 0 -5]{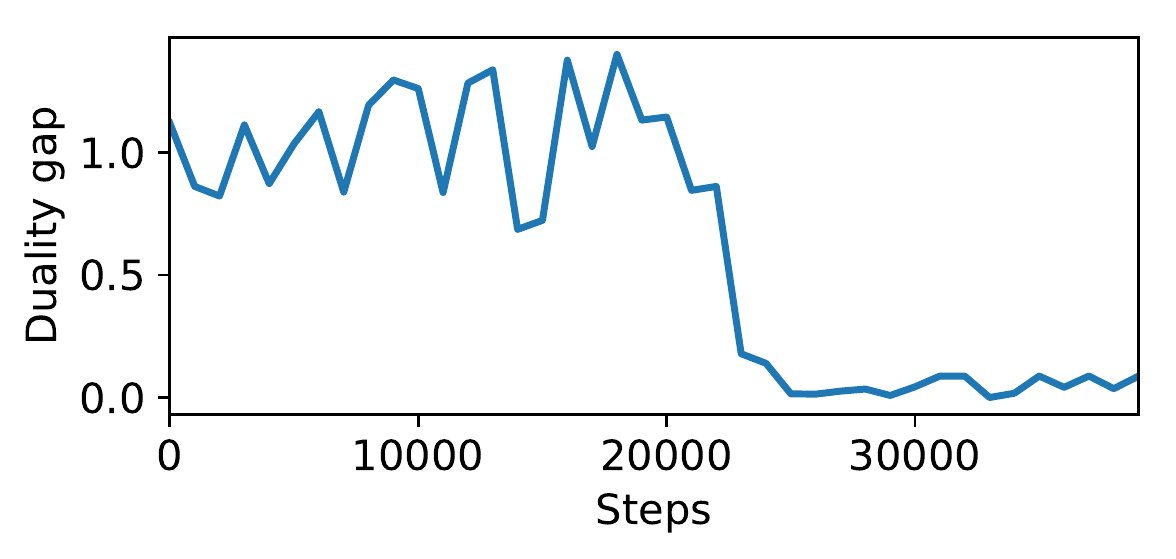}  & \includegraphics[width=0.4\textwidth, trim=0 -5 0 -5]{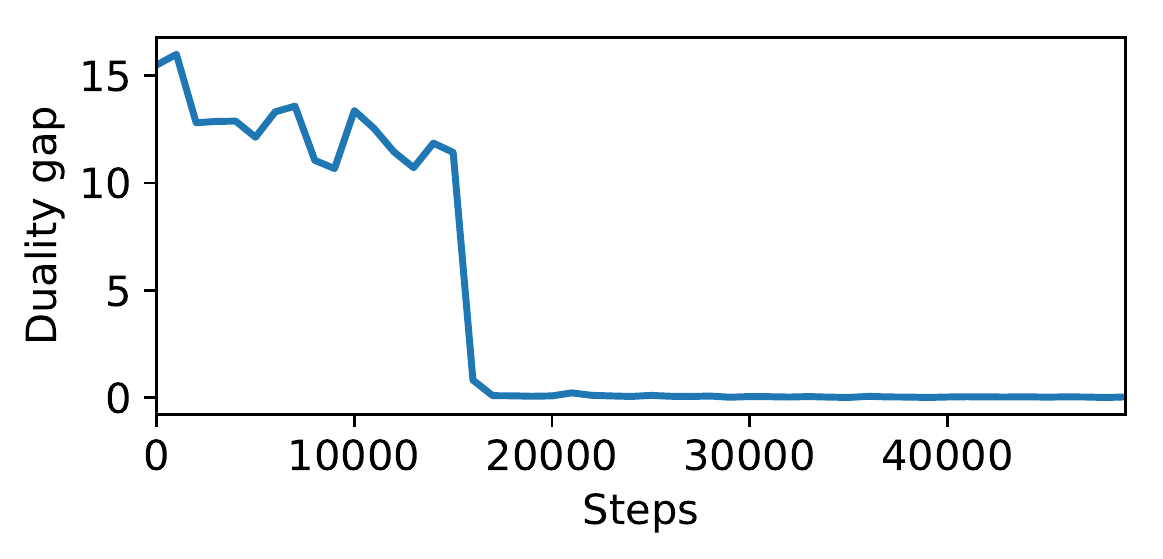}
\\
\cline{2-4}
& \includegraphics[width=0.4\textwidth, trim=0 -5 0 -5]{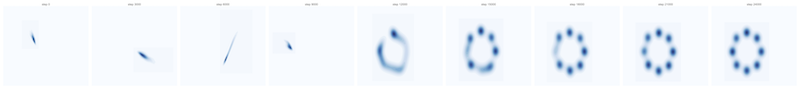} & \includegraphics[width=0.4\textwidth, trim=0 -5 0 -5]{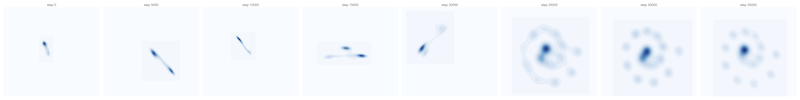} & \includegraphics[width=0.4\textwidth, trim=0 -5 0 -5]{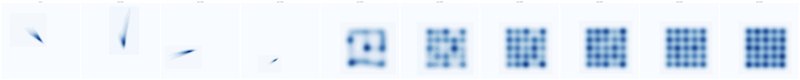}\\
\thickhline
\multirow{2}{*}[5em]{\rotatebox{90}{unstable}} & \includegraphics[width=0.4\textwidth, trim=0 -5 0 -5]{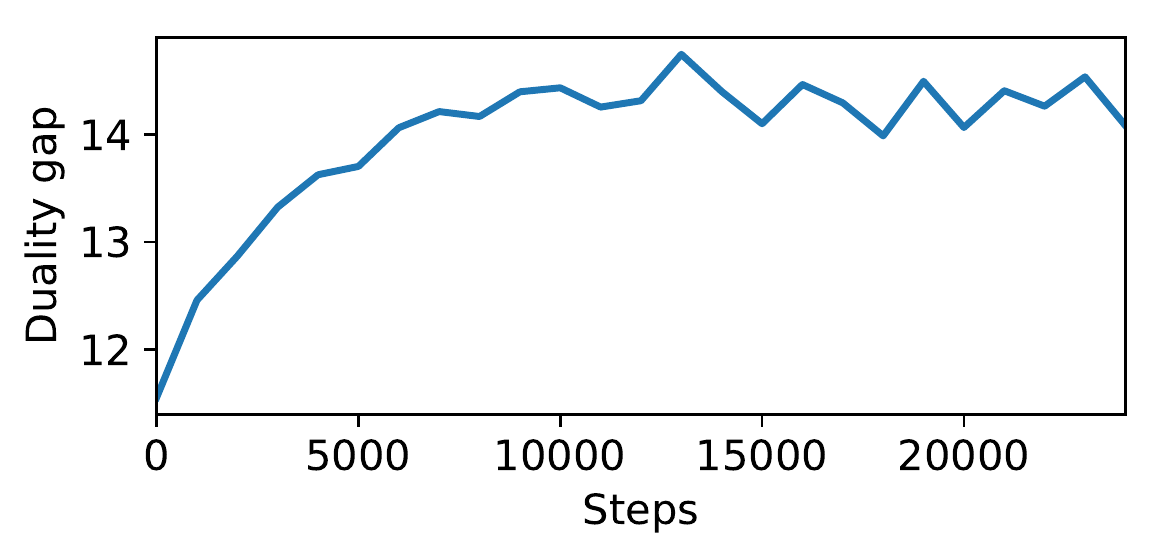} &\includegraphics[width=0.4\textwidth, trim=0 -5 0 -5]{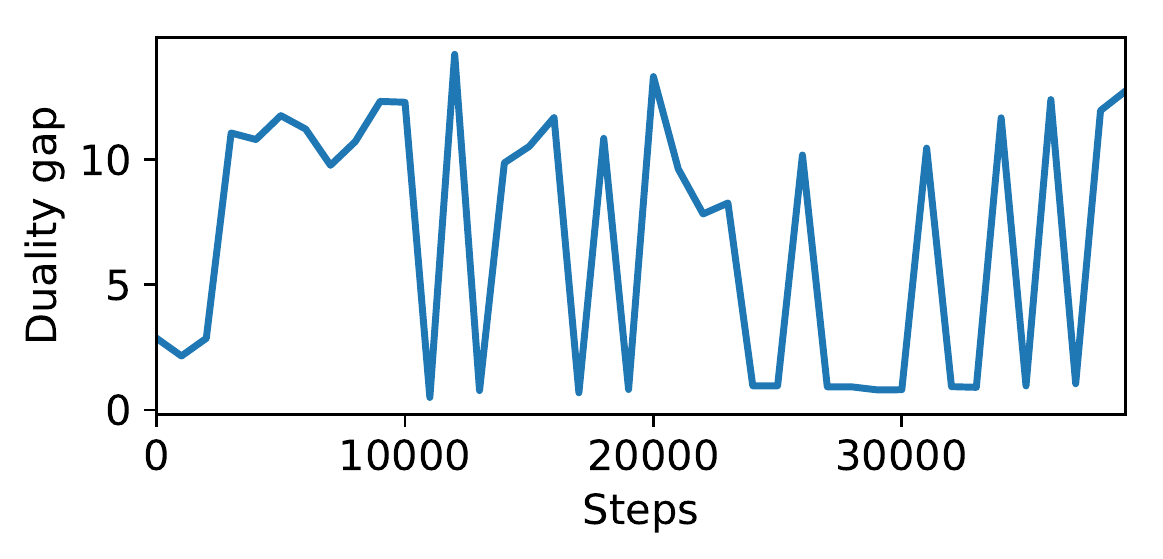} & \includegraphics[width=0.4\textwidth, trim=0 -5 0 -5]{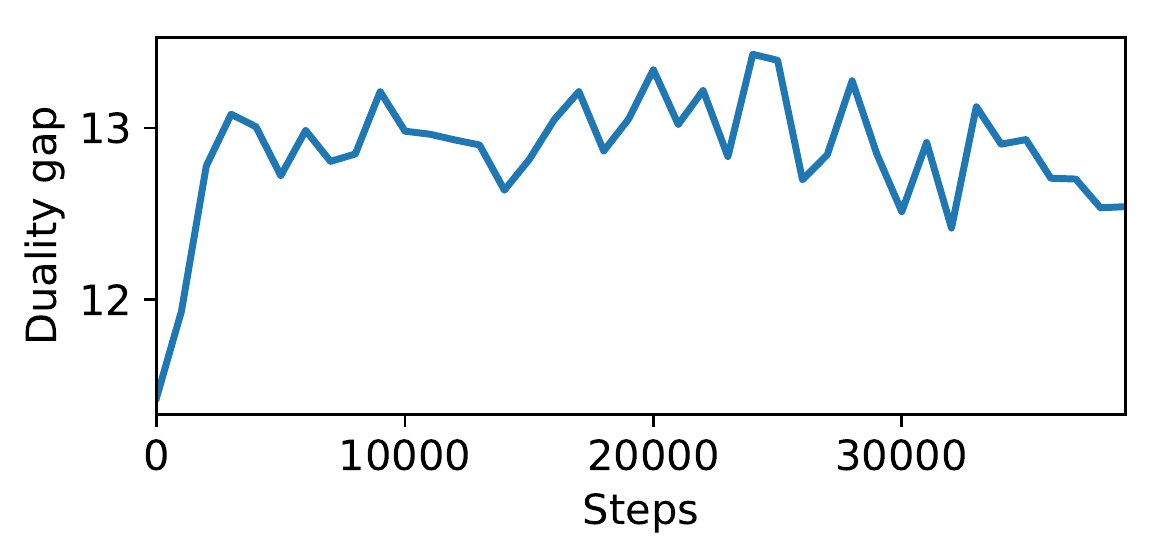} \\
\cline{2-4}
& \includegraphics[width=0.4\textwidth, trim=0 -0 0 -5]{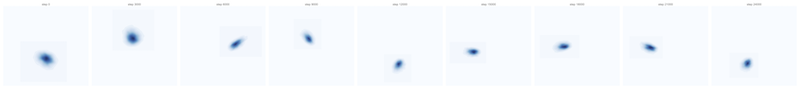} & \includegraphics[width=0.4\textwidth, trim=0 -5 0 -5]{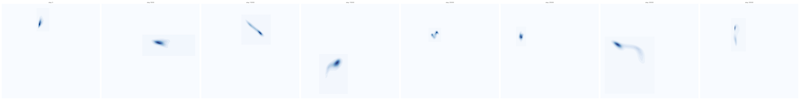}& \includegraphics[width=0.4\textwidth, trim=0 -5 0 -5]{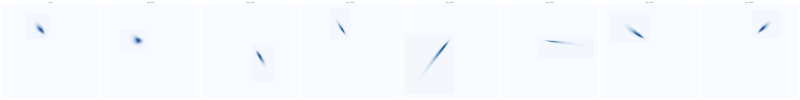} \\
\thickhline
\end{tabular}
}
\end{center}
\caption{\label{tab:DG&Convergence} Progression of duality gap (DG) throughout training and heatmaps of generated samples.}
\end{figure}

\section{Experimental results}
We carefully design a series of experiments to examine commonly encountered failure modes of GANs and analyze how this is reflected by the two metrics. Specifically, we show the sensitivity of the duality gap metric to (non-)~convergence and the susceptibility of the minimax loss to reflect the sample quality. Further details and additional extensive experiments can be found in the Appendix. Note that our goal is \textit{not} to provide a rigorous comparative analysis between different GAN variants, but to show that both metrics capture properties that are particularly useful to monitor training.
\subsection{Mixture of Gaussians}
We train a vanilla GAN on three toy datasets with increasing difficulty: a) \textsc{RING}: a mixture of 8 Gaussians, b) \textsc{SPIRAL}: a mixture of 20 Gaussians and c) \textsc{GRID}: a mixture of 25 Gaussians. As the true data distribution is known, this setting allows for tracking convergence.
\paragraph{Duality gap and convergence.}
Our first goal is to illustrate the relation between convergence and the duality gap. To that end, we analyze the progression of DG throughout training in stable and unstable settings. One common problem of GANs is \emph{unstable mode collapse}, where the generator alternates between generating different modes. We simulate such instabilities and compare them against successful GANs in Fig.~\ref{tab:DG&Convergence}. The gap goes to zero for all stable models after convergence to the true data distribution. Conversely, unstable training is reflected both in terms of the large value reached by DG as well as its trend over iterations (e.g., large oscillations and an increasing trend indicate unstable behavior).
Thus the duality gap is a powerful tool for \textit{monitoring the training and detecting unstable collapse}. 

\begin{figure}
\centering
\begin{tabular}{@{\hspace{-0.2em}}c@{\hspace{-0.2em}}c@{\hspace{-0.2em}}c@{\hspace{-0.2em}}c}
\subfloat[DG]{\includegraphics[width = 0.25\textwidth]{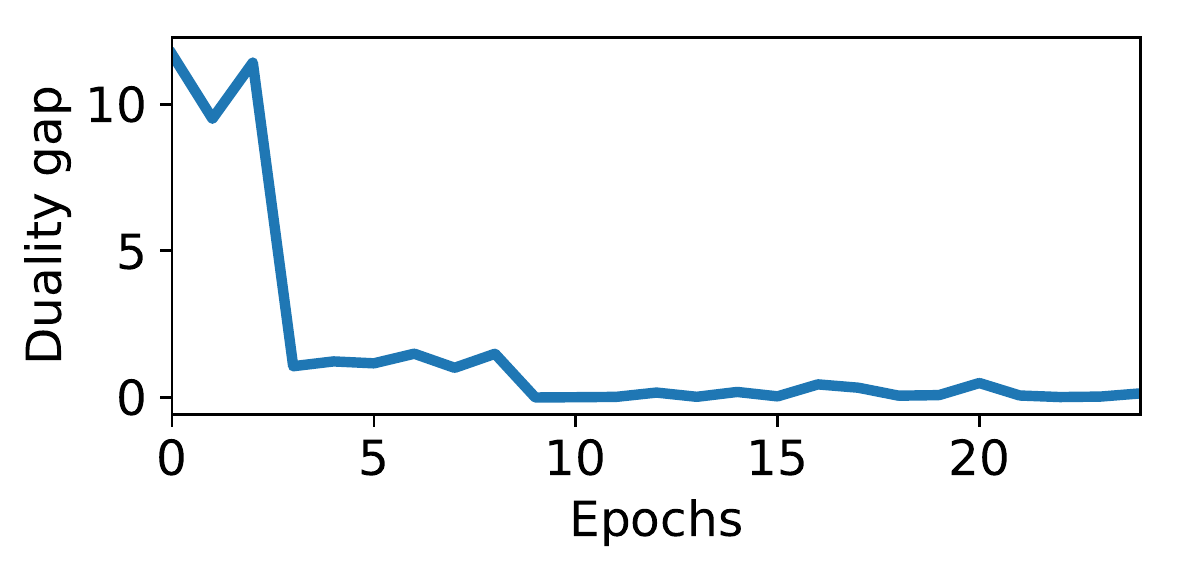}} &
\subfloat[Minimax]{\includegraphics[width = 0.26\textwidth]{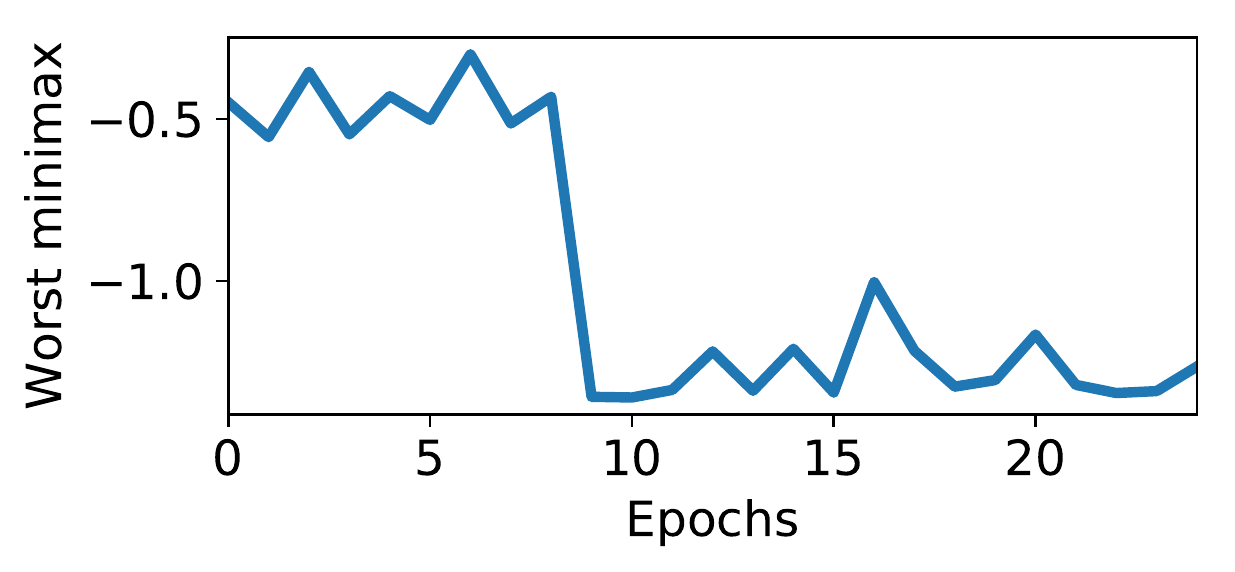}}&
\subfloat[Modes]{\includegraphics[width = 0.24\textwidth]{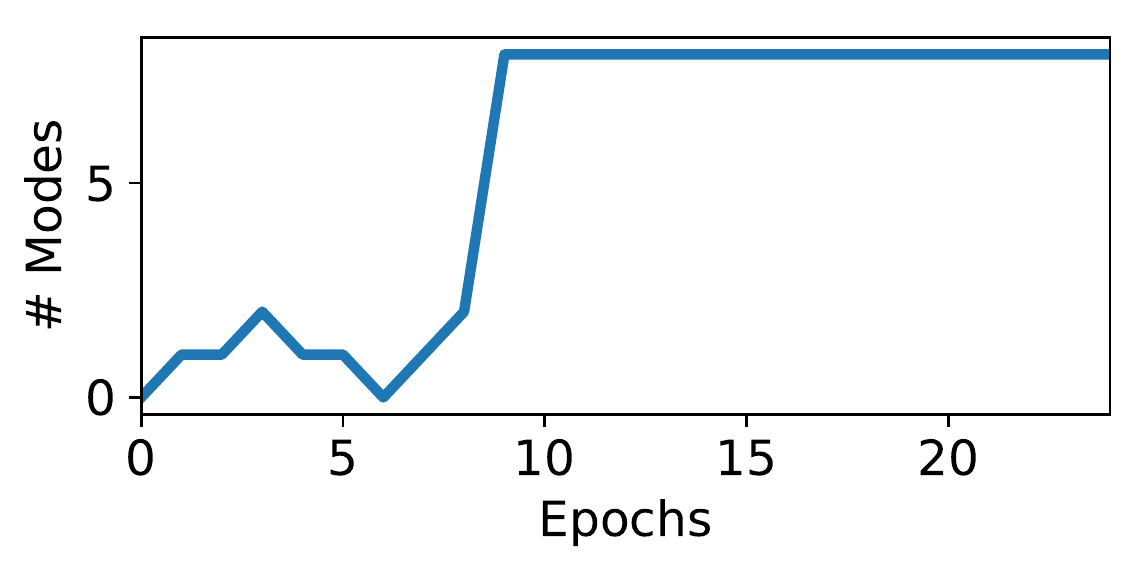}} &
\subfloat[Std]{\includegraphics[width = 0.26\textwidth]{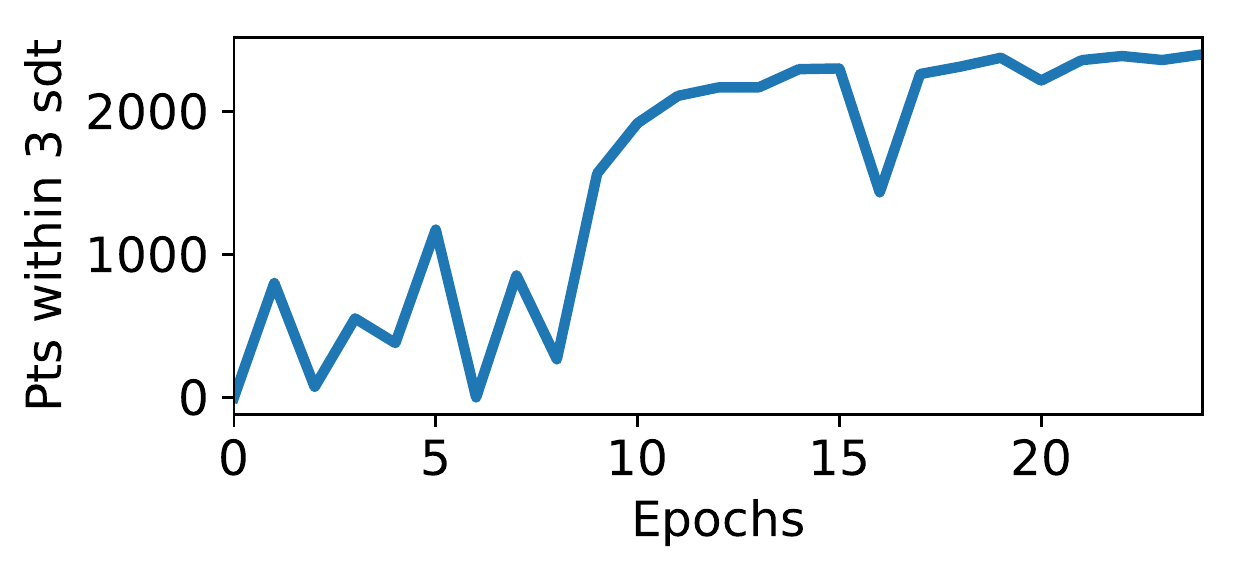}}
\end{tabular}
\caption {DG, minimax, modes covered and std. Tab. \ref{table:correlation} in  App. shows Pearson correlation between the metrics.}\label{fig:correlationtoy}
\end{figure}

\paragraph{Minimax loss reflects sample quality.} 
As previously argued, another useful metric to look at is the minimax loss which focuses solely on the generator. For the toy datasets, we measure the sample quality using \textit{(i)} the number of covered modes and \textit{(ii)} the number of generated samples that fall within 3 standard deviations of the modes (std). Fig.~\ref{fig:correlationtoy} shows significant anti-correlation, which indicates \textit{the minimax loss can be used as a proxy for determining the overall sample quality}.

\subsection{Duality gap and stable mode collapse}

\begin{wrapfigure}{r}{0.4\textwidth}
\vspace{-5mm}
\includegraphics[width = 1\linewidth]{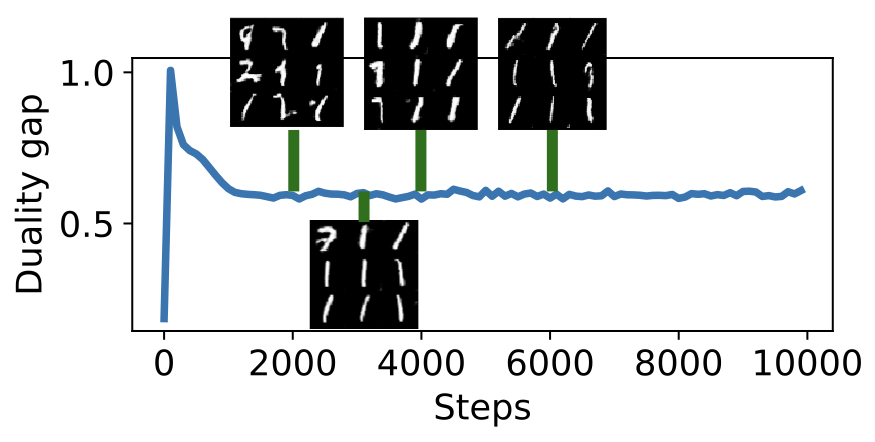}
\caption{\label{fig:samples} DG evolution detects stable mode collapse.}
\vspace{-5mm}
\end{wrapfigure}
The previous experiment shows how \emph{unstable} mode collapse is captured by DG - the trend is unstable and is typically within a high range. We are now interested in the case of \textit{stable mode collapse}, where the model does converge, but only to a subset of the modes. 

We train a GAN on MNIST where the generator collapses to generating from only one class and does not change the mode as the number of training steps increases. Fig.~\ref{fig:samples} shows the DG curve. The trend is flat and stable, but the value of DG is not zero, thus showing that \textit{looking at both the trend and value of the DG is helpful for detecting stable mode collapse as well.}
\subsection{The trend of the duality gap progression curve}
We now analyze the trend of the DG curves. The plots (see Fig.~\ref{tab:DG&Convergence}) show it does not always monotonically decrease throughout training as we do observe non-smooth spikes. This raises the question as to whether these spikes are the result of the instabilities of the training or due to the metric itself? To address this, we train a GAN on a 2D submanifold Gaussian mixture embedded in 3D space. Such a setting captures a commonly encountered GAN failure as this mixture is degenerate with respect to the base measure defined in ambient space due to the lack of fully dimensional support. 

It has been shown \cite{roth2017stabilizing} that an unregularized GAN collapses in every run after 50K iterations (see Fig.~\ref{fig:unreggan} - right) because of the focus of the discriminator on smaller differences between the true and generated samples, whereas the training of a regularized version can essentially avoid collapsing even well beyond 200K iterations (see Fig.~\ref{fig:unreggan} - left). Thus we would expect the DG curves for the two settings to show different trends, which is indeed what we observe. For the unregularized version, DG decreases to small values and is stable until 40K steps, after which it starts increasing, reflecting the collapse of the generator. A practitioner looking at the DG curve can thus learn that (i) the training should be stopped between 20-40K steps and (ii) there is a collapse in the training (information that is especially valuable when the generated data is of non-image type or cannot be visualised). Conversely, the DG trend for the regularized version is stable and very quickly converges to values close to zero, which reflects the improved quality. This suggests that \textit{the usage of DG opens avenues to further understand and compare the effects of various regularizers}. Note that in \citep{roth2017stabilizing} different levels of the regularizer were only visually compared due to the lack of a proper metric.

\begin{figure}
\begin{minipage}{0.5\linewidth}
\centering
{\small \textbf{regularized GAN}}
\end{minipage}
\begin{minipage}{0.5\linewidth}
\centering
{\small \textbf{unregularized GAN}}
\end{minipage}\\
\begin{minipage}{0.26\linewidth}
\includegraphics[width=1\linewidth]{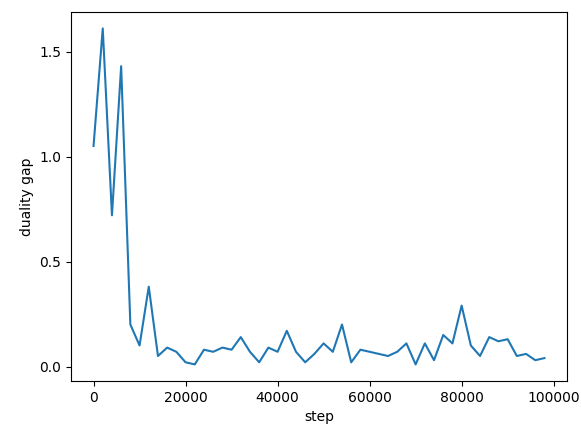}
\end{minipage}
\begin{minipage}{0.22\linewidth}
\includegraphics[height=0.9cm]{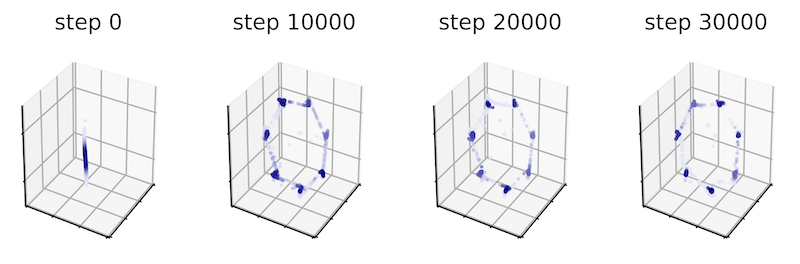}
\includegraphics[height=0.9cm]{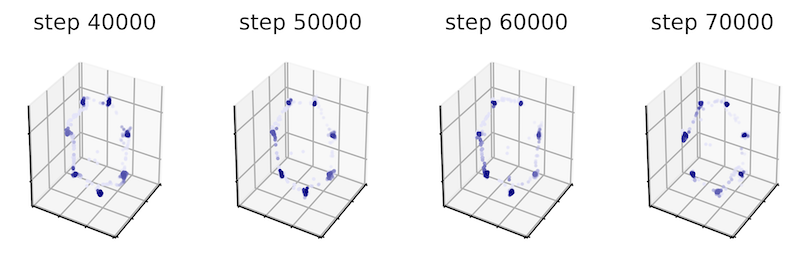}
\includegraphics[height=0.9cm]{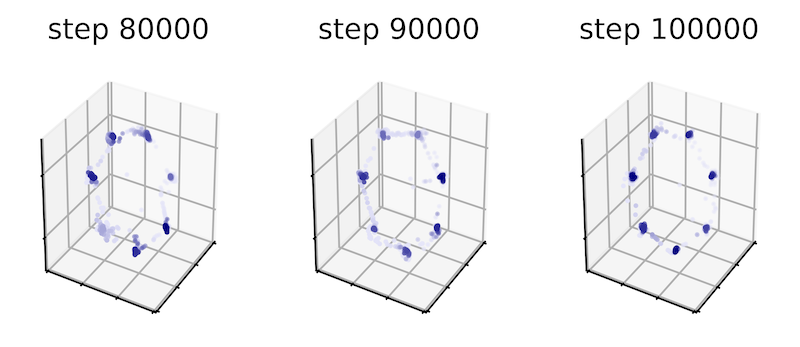}
\end{minipage}
\begin{minipage}{0.26\linewidth}
\includegraphics[width=1\linewidth]{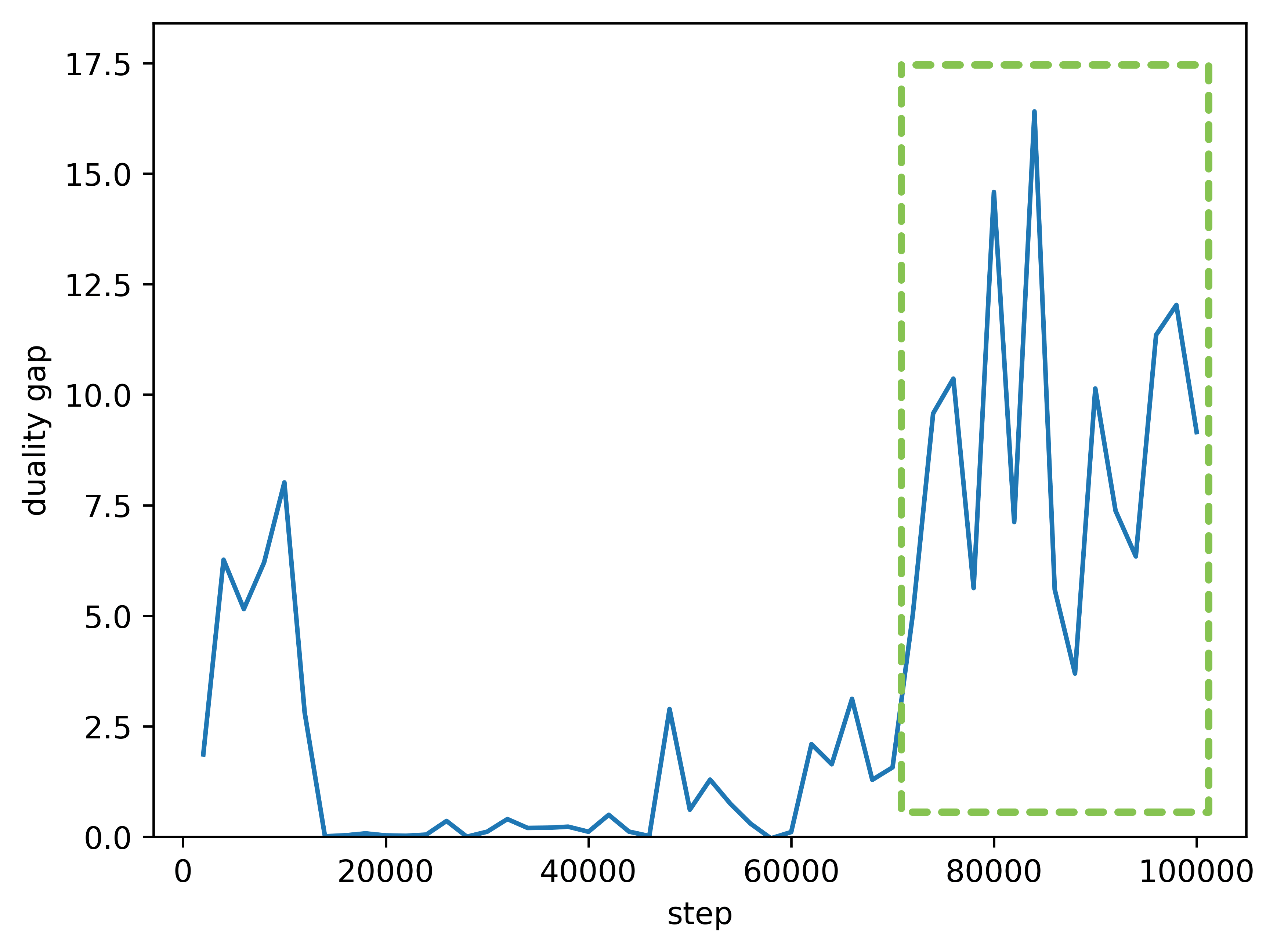}
\end{minipage}
\begin{minipage}{0.22\linewidth}
\includegraphics[height=0.9cm]{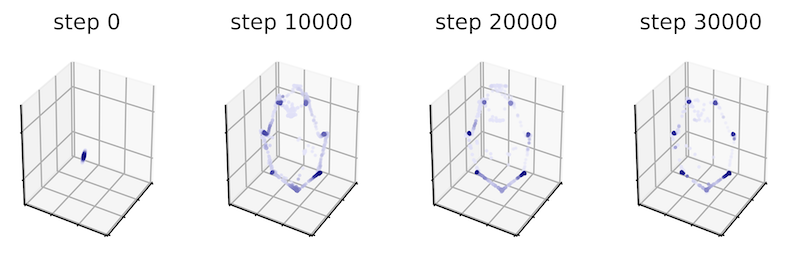}
\includegraphics[height=0.9cm]{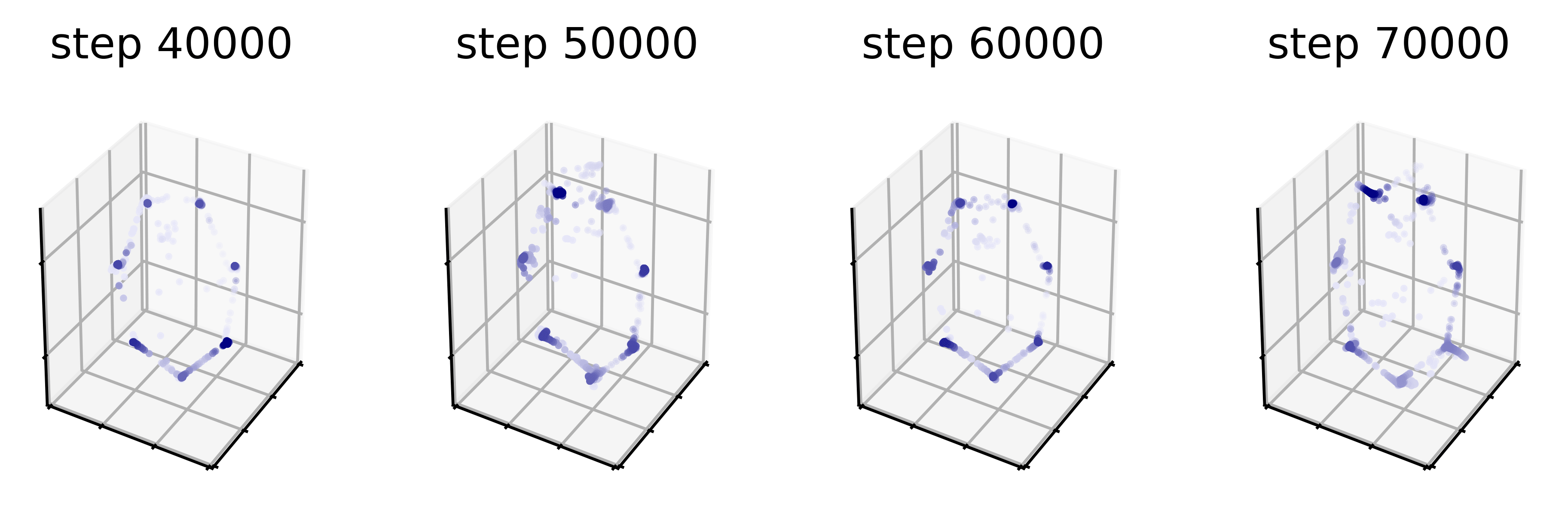}
\includegraphics[height=0.9cm]{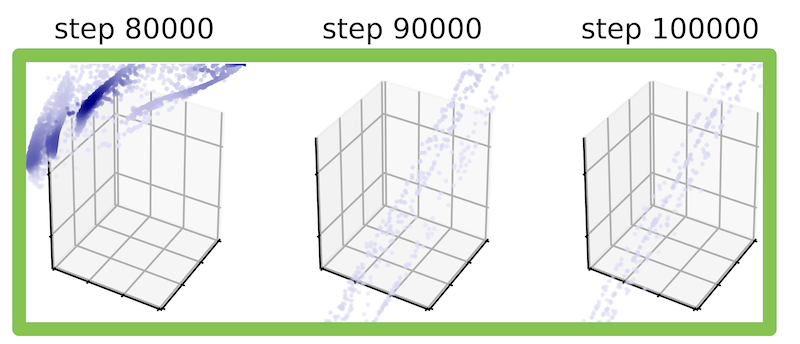}
\end{minipage}
\caption{\label{fig:unreggan} DG progression for a regularized (left) and unregularized GAN (right). Generated samples are shown for various steps. DG is able to capture instabilites (see green box).}
\end{figure}

\subsection{Comparison with image-specific criteria}
We further analyze the sensitivity of the minimax loss to various changes in the sample quality for natural images that fall broadly in two categories: \textit{(i)} mode sensitivity and \textit{(ii)} visual sample quality. We compare against the commonly used Inception Score (INC) and Frechet Inception Distance (FID). Both metrics use the generator as a black-box through sampling. We follow the same setup for the evaluation of minimax and use the GAN zero-sum objective. Note that changing the objective to WGAN formulation makes it closely related to the Wasserstein critic \citep{arjovsky2017wasserstein}.

\paragraph{Sensitivity to modes.} As natural images are inherently multimodal, the generated distribution commonly ignores some of the true modes, which is a phenomenon known as \textit{mode dropping}. Another common issue is \textit{intra-mode collapse} that occurs when the generator is generating from all modes, but there is no variety within a mode. We then turn to \textit{mode invention} where the generator creates non-existent modes. Fig.~\ref{fig:Modes} in the Appendix shows the trends of all metrics for various degrees of mode dropping, invention and intra-mode collapse, where a class label is considered a mode. INC is unable to detect both intra-mode collapse and invented modes. On the other hand, \textit{both FID and minimax loss exhibit desirable sensitivity to various mode changing.} 

\paragraph{Sample quality.} We study the metrics' ability to detect compromised sample quality by distorting real images using Gaussian noise, blur and swirl at an increasing intensity. As shown in Fig.~\ref{fig:distortion} in the Appendix, \textit{all metrics, including minimax, detect different degrees of visual sample quality.}

In Appendix \ref{app:minimax} we also show that the metric is \textit{computationally efficient to be used in practice}. Tab.~\ref{tab:mmvsfid} gives a summary of all the results.

\begin{table}[ht]
\begin{tabular}{|l|c|c|c|}
\hline
\textbf{Poperty\textbackslash{}Metric}                       & \multicolumn{1}{l|}{\textbf{INC}} & \multicolumn{1}{l|}{\textbf{FID}} & \multicolumn{1}{l|}{\textbf{minimax}} \\ \hline

Sensitivity to mode collapse                                 & {\color[HTML]{FF3333} moderate}   & {\color[HTML]{66CC33} high}       & {\color[HTML]{66CC33} high}           \\
 
Sensitivity to mode invention                                & {\color[HTML]{FF3333} low}        & {\color[HTML]{66CC33} high}       & {\color[HTML]{66CC33} high}           \\
 
Sensitivity to intra-mode collapse                           & {\color[HTML]{FF3333} low}        & {\color[HTML]{66CC33} high}       & {\color[HTML]{66CC33} high}           \\ \hline
 
Sensitivity to visual quality and transformations            & {\color[HTML]{FF3333} moderate}   & {\color[HTML]{66CC33} high}       & {\color[HTML]{66CC33} high}           \\ \hline
 
Computational: Fast                                          & {\color[HTML]{66CC33} yes}        & {\color[HTML]{66CC33} yes}        & {\color[HTML]{66CC33} yes}            \\
 
Computational: Needs labeled data or a pretrained classifier & {\color[HTML]{FF3333} yes}        & {\color[HTML]{FF3333} yes}        & {\color[HTML]{66CC33} no}             \\
 
Computational: Can be applied to any domain without change   & {\color[HTML]{FF3333} no}         & {\color[HTML]{FF3333} no}         & {\color[HTML]{66CC33} yes}            \\ \hline
\end{tabular}
\caption{\label{tab:mmvsfid} Comparison of INC, FID and minimax on various properties.}
\end{table}

\vspace{-0.25cm}
\subsection{DG as a measure for the image domain}
The previous section illustrates the metrics have desirable properties that makes them effective in capturing different failure modes in terms of image-specific criteria. Since generating images is one of the most common use cases of GANs, we further explore the usefulness of the DG on generating faces through a ProgGAN trained on CelebA. Figure~\ref{fig:proggan_dg} shows that unlike the GAN losses, the DG trend can capture the progress, which is also in agreement with the trend of the largest singular values of the convolutional layers of G and D.
\begin{figure}[ht]
\centering
\includegraphics[width=0.35\textwidth, height=82pt]{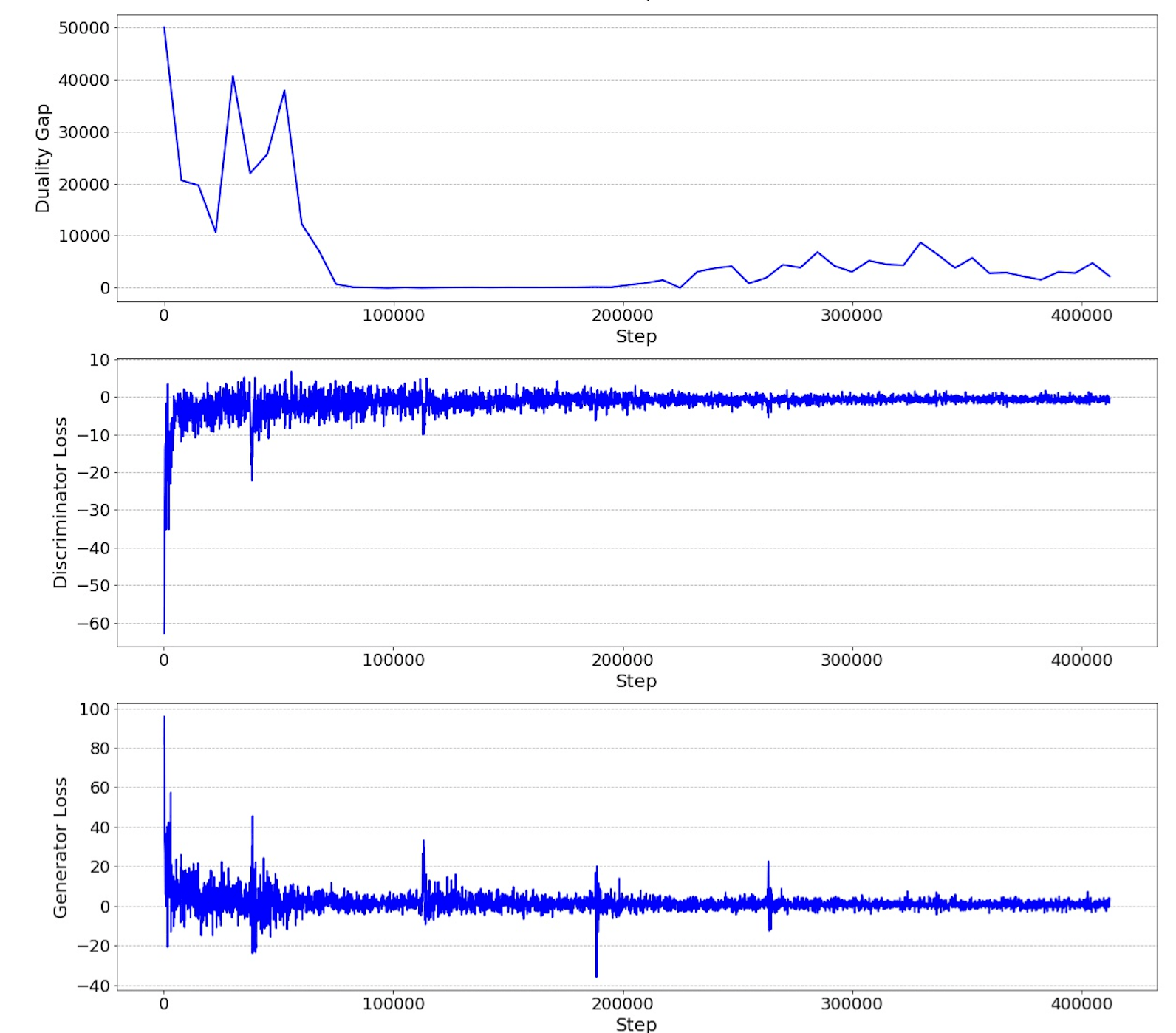}
\includegraphics[width=0.45\textwidth, height=82pt]{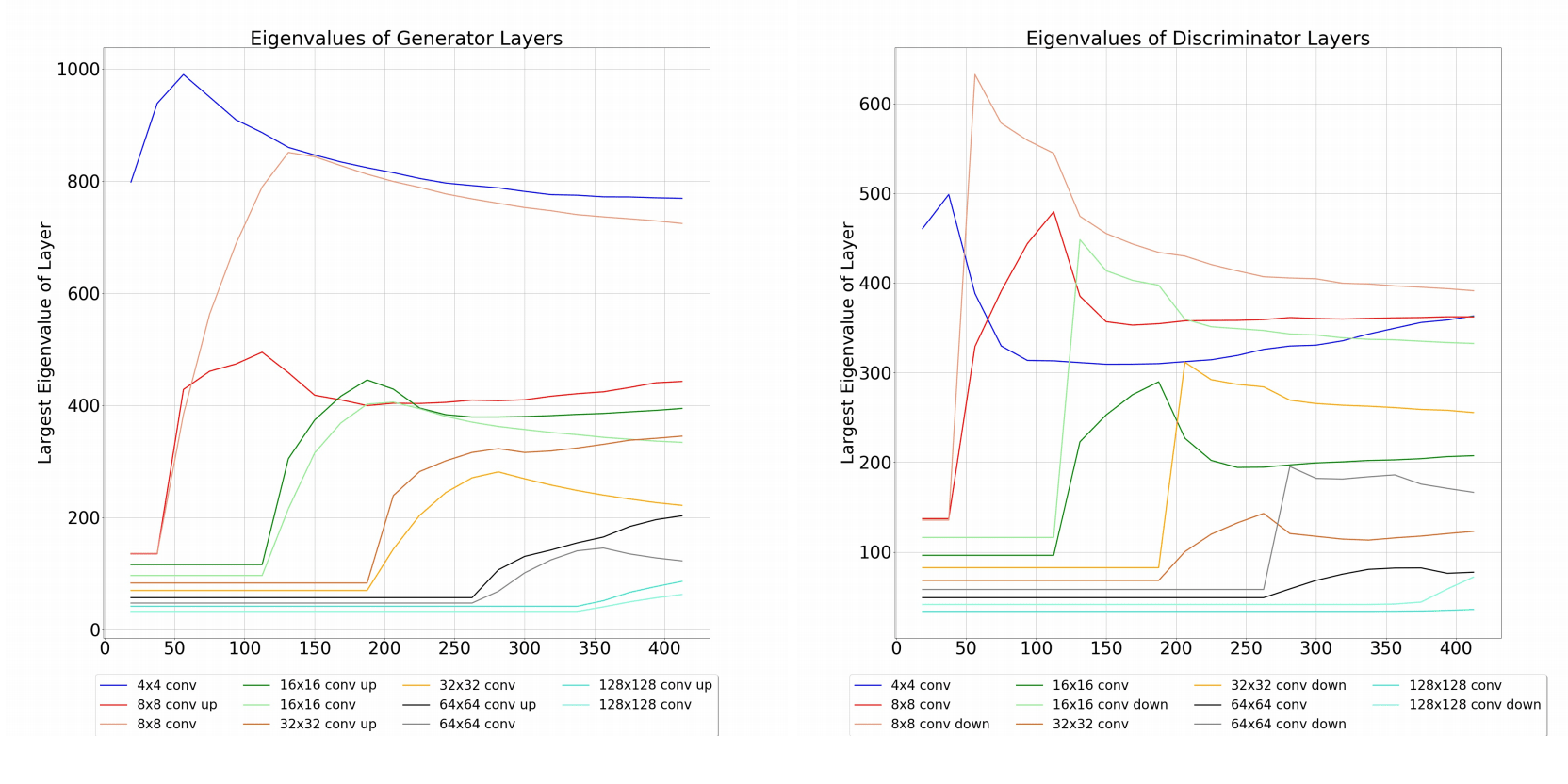}
\caption{\label{fig:proggan_dg} ProgGAN trained on CelebA: (left) losses vs. DG; (right) largest singular values of the conv layers}
\vspace{-5mm}
\end{figure}

\subsection{Generalization to other domains and GAN losses}

These experiments test the ability of the two metrics to adapt to a different GAN loss formulation (WGAN-GP ~\citep{gulrajani2017improved} and SeqGAN~\cite{yu2017seqgan}), as well as other domains (cosmology, audio, text).

\paragraph{N-body simulations in cosmology.} We consider the field of observational cosmology that relies on computationally expensive simulations with very different statistics from natural images. In an attempt to reduce this burden \cite{rodriguez2018fast} trained a WGAN-GP to replace the traditional N-body simulators, relying on three statistics to assess the quality of the generated samples: mass histrogram, peak count and power spectral density. A random selection of real and generated samples shown in Fig.~\ref{fig:cosmo_sample} in the Appendix demonstrate the high visual quality achieved by the generator. 

\begin{wraptable}{r}{0.4\textwidth}
\begin{scriptsize}
\begin{tabular}{|l|c|c|c|}
\hline
 & Mass hist. & Peak hist. & PSD \\
\hline
Dual gap & $0.53$ & $0.38$ & $0.71$  \\
\hline
Minmax value  & $0.66$ & $0.51$ & $0.75$ \\
\hline
FID & $0.40$ & $0.21$ & $0.34$ \\
\hline
\end{tabular}
\end{scriptsize}
\caption{\label{tab:cosmo_correlation} Pearson correlation between cosmological scores (mass histogram, peak histogram and Power Spectral Density (PSD)) and metrics (dual gap, minimax and FID).}
\end{wraptable}
We evaluate the agreement between the statistics of the real and generated samples using the squared norm of the statistics differences (lower scores are therefore better). In Fig.~\ref{fig:cosmo_evolution}, we show the evolution of the scores corresponding to the three statistics as well as DG. We observe a strong correlation, especially between the peaks. Furthermore, it seems that the duality gap takes all the statistics into account. In Tab.~\ref{tab:cosmo_correlation}, we observe a strong empirical correlation between the duality gap, the minimax value and the cosmological scores. We also observe that the FID is significantly less correlated that the duality gap, which is explained by the fact that the images we use are not natural images and hence that the statistics of the Inception Network are not suitable to evaluate the quality of cosmological samples.

\begin{figure}[ht]
\centering
\includegraphics[width=.3\linewidth]{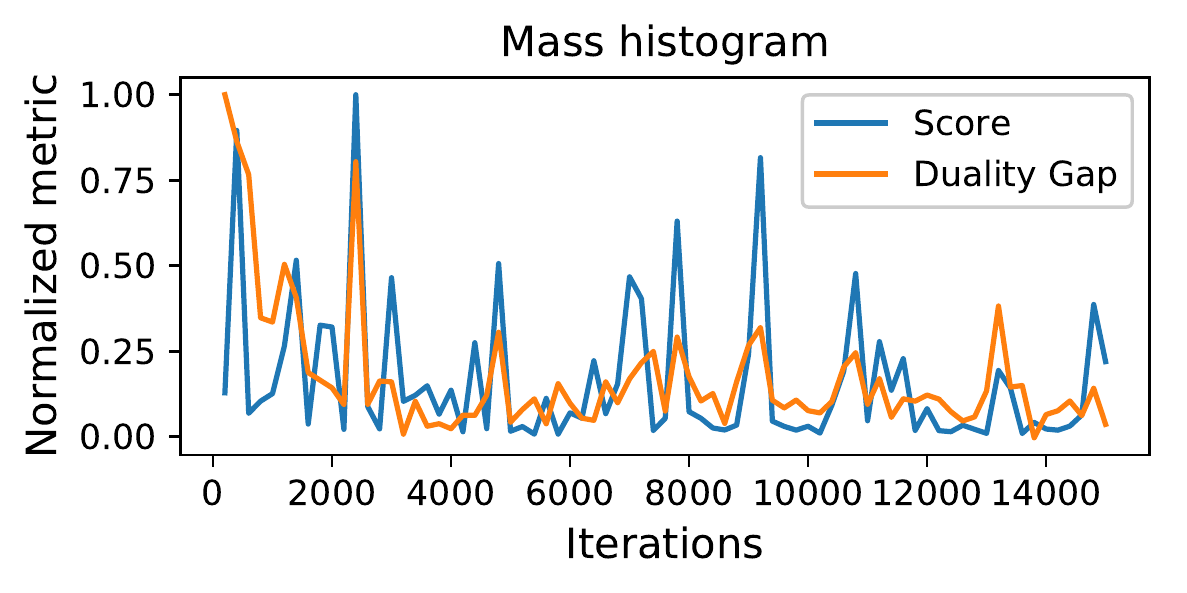}
\includegraphics[width=.3\linewidth]{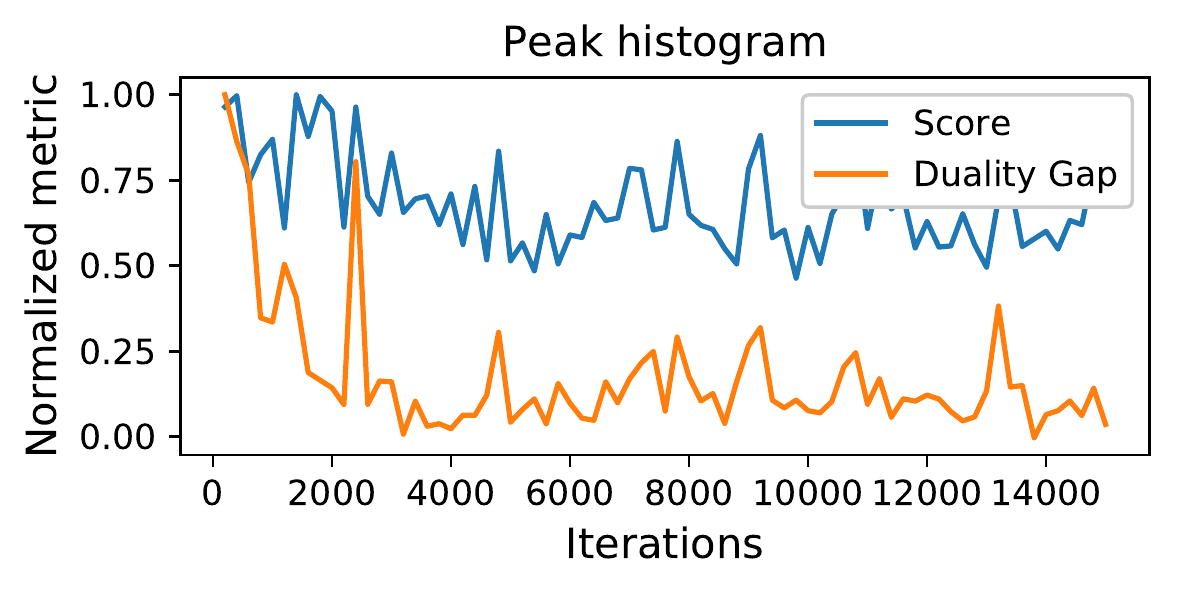}
\includegraphics[width=.3\linewidth]{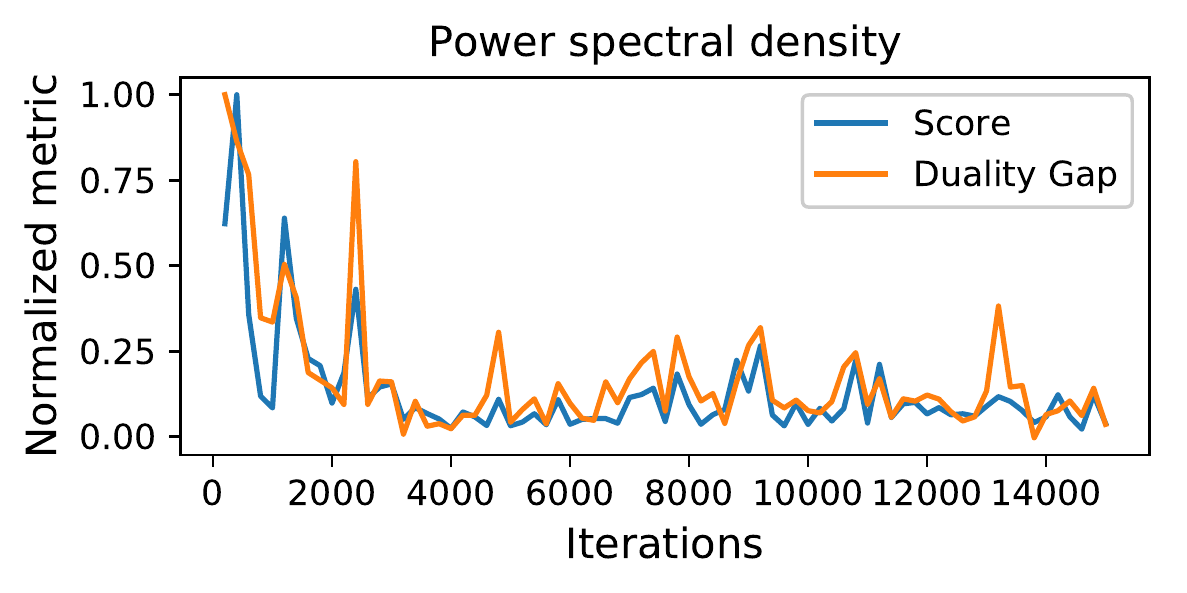}
\caption{\label{fig:cosmo_evolution} DG and cosmo-score evolution. DG strongly correlates with all 3 scores.}
\end{figure}

\paragraph{Audio Time-Frequency consistency.} Generating an audio waveform is a challenging problem as it requires an agreement between scales from the range of milliseconds to tens of seconds. 
To overcome this challenge, one may use more powerful and intuitive features such as a Time-Frequency (TF) representation. Nevertheless, in order to obtain a natural listening experience, a TF representation needs to be consistent, i.e., there must exist an audio signal which leads to this TF representation. \cite{marafioti2019adversarial} define a measure estimating the consistency of a representation and use it to assess the convergence of their GAN. 
In Fig.~\ref{fig:tifgan_correlation}, we show the evolution the measure and of DG and minimax. We observe a clear correlation, especially with the minimax value, which is expected as both the consistency measure and minimax evaluate only the generated samples. 

\begin{figure}[ht]
\centering
\begin{minipage}{0.32\textwidth}
\includegraphics[width=.98\linewidth]{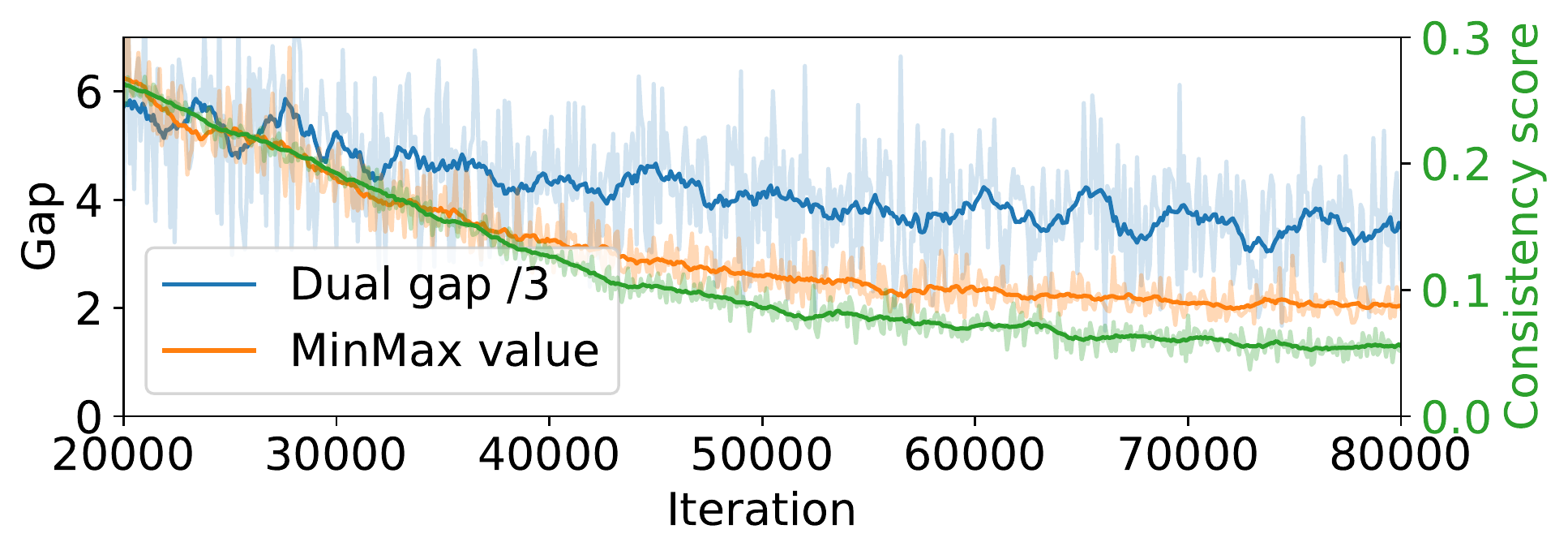} 
\captionof{figure}{\label{fig:tifgan_correlation} Evolution of the consistency measure vs. the DG.}
\end{minipage}
\hspace{3mm}
\begin{minipage}{0.64\textwidth}
\includegraphics[width=.49\linewidth]{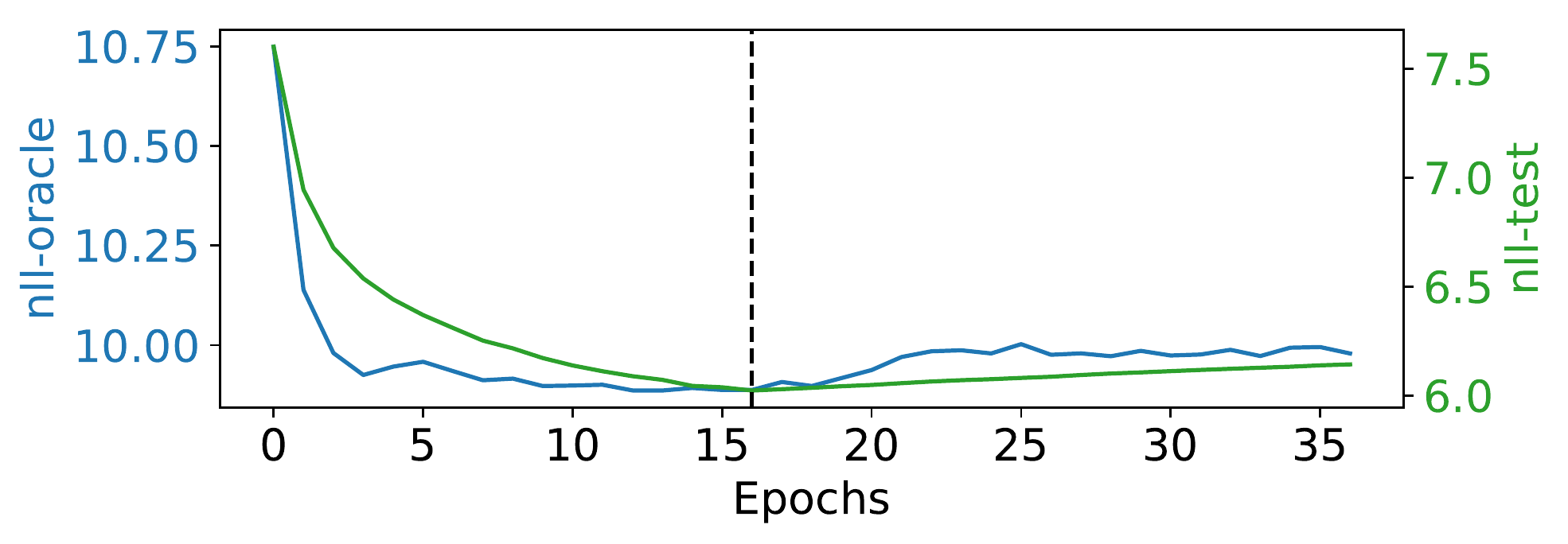}
\includegraphics[width=.49\linewidth]{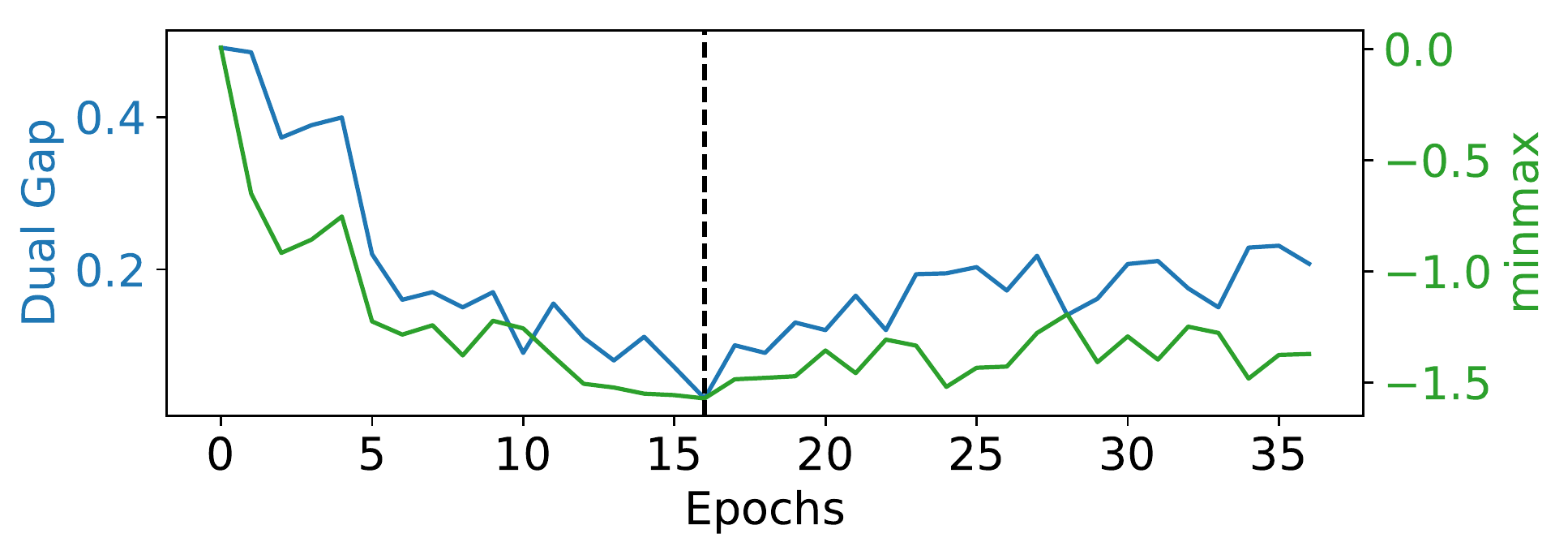} 
\captionof{figure}{\label{fig:text} Evolution of nll-oracle and nll-test (left) vs. DG and minimax (right). The dashed line represents the optimal stopping point.}
\end{minipage}
\end{figure}

\paragraph{Text generation.} Another challenging modality for generative models is text. SeqGAN \cite{yu2017seqgan} is a GAN-based sequence framework evaluated by negative log-likelihood: \emph{nll-oracle} and \emph{nll-test}. The first metric computes the likelihood of generated data against an oracle, whereas the second makes use of the generator and the likelihood of real test data. Fig.~\ref{fig:text} shows DG and  minimax correlate well with both metrics. Moreover, both DG and minimax can determine the optimal stopping point.

\subsection{Comparison of different models}
\begin{wraptable}{r}{0.5\textwidth}
\begin{scriptsize}
\begin{tabular}{|l|l|l|l|l|}
\hline
             & FID/DG & FID/mm & INC/DG & INC/mm \\ \hline
NS           & 0.32   & 0.88        & -0.34  & -0.87       \\ \hline
NS + SN      & 0.93   & 0.85        & -0.93  & -0.91       \\ \hline
NS + SN + GP & 0.94   & 0.94        & -0.91  & -0.95       \\ \hline
WGAN + GP    & 0.87   & 0.90        & -0.87  & 0.91        \\ \hline
\end{tabular}
\end{scriptsize}
\caption{\label{tab:comparemodels}Pearson correlation between FID and INC with DG and minimax (mm) for different GAN variants.}
\vspace{-5mm}
\end{wraptable}
So far, we have seen that our estimation of DG generalizes to various loss functions and data distributions. We now turn our attention to evaluating the generalization abilities to models computed from different classes (i.e. neural networks with different architectures). We propose to do this evaluation by taking the original GAN minmax objective as a reference. The reasoning behind this choice is that a better discriminator would always be fooled less by a worst generator, irrespective of how it was trained. The analogue holds for the generator as well, resulting in a lower DG/minimax value on a selected objective.

\begin{wraptable}{r}{0.67\textwidth}
\vspace{-5mm}
\begin{scriptsize}
\begin{tabular}{|c|c|c|c|c|c|c|c|c|}
\hline
\multirow{2}{*}{} & \multicolumn{2}{c|}{FID}    & \multicolumn{2}{c|}{INC}   & \multicolumn{2}{c|}{DG}    & \multicolumn{2}{c|}{mm} \\ \cline{2-9} 
                  & score         & rank        & score        & rank        & score         & rank       & score         & rank         \\ \hline
NS                & 28.65         & 3           & 7.1          & 3           & 2.36          & 3          & -1.07         & 3            \\ \hline
NS+SN             & 22.25         & 1           & 7.78         & 1           & 0.21          & 1          & -1.31         & 1            \\ \hline
NS+SN+GP          & 23.46         & 2           & 7.75         & 2           & 0.23          & 2          & -1.30         & 2            \\ \hline
WGAN+GP           & 72.23         & 4           & 4.93         & 4           & 4.53          & 4          & -0.2          & 4            \\ \hline
compute time \textit{(s)}      & \multicolumn{2}{c|}{120.50} & \multicolumn{2}{c|}{47.33} & \multicolumn{2}{c|}{27.22} & \multicolumn{2}{c|}{7.38}    \\ \hline
\end{tabular}
\end{scriptsize}
\caption{\label{tab:ranking}Scores and ranking for various GAN models using different metrics. The final row gives the computation time in seconds.}
\vspace{-3mm}
\end{wraptable}
To that end, we compare different commonly used ResNet based GAN variants on Cifar10: a GAN using the non saturating update rule (NS), spectrally normalized GAN (NS + SN), spectrally normalized GAN with the addition of a gradient penalty (NS + SN + GP) and a WGAN with gradient penalty (WGAN GP). We use the optimal hyperparameters suggested in~\citep{kurach2018gan}. Table \ref{tab:comparemodels} shows the Pearson correlation between DG and the minimax metric against FID and the Inception score (INC), which are known to work well on this dataset. We find that the minimax metric always highly correlates with both FID and INC. This is expected as (i) minimax evaluates the generator only, just as FID and INC that do not take the discriminator into consideration and (ii) as previously shown minimax is sensitive to mode changes and sample quality. Interestingly, DG also correlates highly whenever the discriminator is properly regularized. The level of correlation is however reduced for the unregularized variants. We hypothesise this is due to instabilities in the training, for which the generator might be improving, while the discriminator is becoming worse.

Tab. \ref{tab:ranking} shows the ranking of the models is the same for all four metrics, suggesting that \textit{DG/minimax is a sensible choice for comparing different GAN models}.
\vspace{-0.25cm}
\section{Conclusion}
\vspace{-0.25cm}
We propose a domain agnostic evaluation measure for GANs that relies on the duality gap (DG) and upper bounds the JS divergence between real and generated data. This measure allows for meaningful monitoring of the progress made during training, which was lacking until now. We demonstrate that DG and its minimax part are able to detect various GAN failure modes (stable and unstable mode collapse, divergence, sample quality etc.) and rank different models successfully. Thus these metrics address two problems commonly faced by practitioners: 1) when should one stop training? and 2) if the training procedure has converged, have we reached the optimum? Finally,
a significant advantage of the metrics is that, unlike many existing approaches, they require \textit{no labelled data} and \textit{no domain specific classifier}. Therefore, they are well-suited for applications of GANs other than the traditional generation task for images.

\paragraph{Acknowledgements.} The authors would like to thank August DuMont Schütte for helping with the ProgGAN experiment and Gokula Krishnan Santhanam and Gary Bécigneul for useful discussions. 

\clearpage
\newpage

\bibliography{neurips_2019}

\newpage
\onecolumn
\appendix

\section{Proof of Theorem~\ref{th:DualityGap33}}
\label{sec:ProofProp}

\dualitygap*
\begin{proof}
Let us denote by $p(x)$ the distribution over true samples and by $q_\u(x)$ the distribution over fake samples generated by the generator $G_\u$.
Let us also denote the output of the discriminator by $D_\v(x)$.
For simplicity, we will also slightly abuse notation and denote the GAN objective by
$M(q_\u,D_\v)$.
Thus, the GAN objective reads as follows,
\begin{align}\label{eq:GANobj33}
M(q_\u,D_\v):=\half \int p(x)\log D_\v(x)dx + \half\int q_\u(x) \log(1-D_\v(x))dx~.
\end{align}

\paragraph{First we prove the first part of the proposition:}
Let us first recall the definition of the Jensen-Shannon divergence of two distributions $p(\cdot),q_\u(\cdot)$,
\begin{align}\label{eq:JSdiv}
    \rm{JSD}(p\left| \right| q_\u): = 
    \half\rm{KL}\left(p \left| \right| \frac{p+q_\u}{2}\right)
    +
    \half\rm{KL}\left(q_\u \left| \right| \frac{p+q_\u}{2}\right)~.
\end{align}
where the KL divergence is defined as,
$$
{\rm{KL}}(p\left| \right| q_\u): = \int p(x)\log\left(\frac{p(x)}{q_\u(x)}\right)dx~.
$$

Now given a fixed solution $(q_\u,D_\v)$  we will show that the duality gap of this pair is bounded by the Jensen-Shannon divergence. It is well known that this divergence equals zero if both distributions are equal\footnote{We mean equal up to sets of measure zero.}, and is otherwise strictly positive.
To do so, we will  first bound the minimax/maximin values for $q_\u/D_\v$.

\textbf{(a)} Upper Bounding Minimax Value:
Given $q_\u(x)$, the worst case discriminator is obtained by taking the derivative of the objective in \Eqref{eq:GANobj33} with respect to $D_\v(x)$ separately for every $x$ (this can be done since we assume the capacity of  $D_\v$ to be unbounded). This gives the following worst case discriminator (see similar derivation in \cite{goodfellow2014generative}),
$$
D_\v^{\rm{max}}(x): = \frac{p(x)}{p(x)+q_\u(x)}~.
$$
Plugging the above value into \Eqref{eq:GANobj33} gives the following minimax value,
\begin{align}\label{eq:MinimaxVal33}
    \max_{D_\v} M(q_\u,D_\v)
    & = M(q_\u,D_\v^{\rm{max}}) \nonumber\\
    &= \half \int p(x) \log \left(\frac{p(x)}{q_\u(x)+p(x)}\right) dx
    +
    \half \int
    q_\u(x) \log \left(\frac{q_\u(x)}{q_\u(x)+p(x)}\right) dx \nonumber\\
   &=
   -\log2 + \rm{JSD}(p\left| \right| q_\u)
\end{align}

\textbf{(b)}  Lower Bounding Maximin Value:
Here we lower bound the maximin value for a given $q_\u(x)$,
\begin{align}\label{eq:MaximinBound}
     \min_{q_\u} M(q_\u,D_\v) \leq M(p,D_\v)
     = \half \int p(x)\log D_\v(x) dx
     +\half \int p(x) \log(1-D_\v(x))dx~.
\end{align}
 Maximizing the last expression separately for every $x$ gives $$
 \max_{D_\v(x)\in[0,1]}\half  p(x)\log D_\v(x) dx
     +\half p(x) \log(1-D_\v(x))dx  = -\log 2
 $$

Plugging the above into \Eqref{eq:MaximinBound} gives,
\begin{align}\label{eq:MaximinBound33}
\min_{q_\u} M(q_\u,D_\v) \leq -\log 2~.
\end{align}
 
\textbf{(c)} Upper bound on Duality Gap:
Recall the definition of Duality gap,
$$
\rm{DG}(q_\u,D_\v): =   \max_{D_\v} M(q_\u,D_\v)-\min_{q_\u} M(q_\u,D_\v)~.
$$
Using \Eqref{eq:MinimaxVal33} together with \Eqref{eq:MaximinBound33} immediately shows that
\begin{equation}
\rm{DG}(q_\u,D_\v) \geq \rm{JSD}(p\left| \right| q_\u)
\end{equation}
 
Therefore the duality gap is lower bounded by the Jensen-Shannon divergence between true and fake distributions, which concludes the first part of the proof.

 
 \paragraph{Next we prove the second part of the proposition:}
 Recall that we assume $q_\u(x)=p(x)$. And let us take, 
 $$
 D_\v(x) = \frac12, \qquad \forall x
 $$
 Next we show that the Duality gap of $(G_\u,D_\v)$ is zero.
 
\textbf{(a)} Let us first compute the minimax value:
Similarly to \Eqref{eq:MinimaxVal33} the following can be shown,
 \begin{align}\label{eq:MinimaxVal99}
    M(q_\u,D_\v^{\rm{max}}) 
    &= \half \int p(x)\log\left(\frac{p(x)}{q_\u(x)+p(x)}\right) dx
    +
    \half \int
    q_\u(x)\log\left(\frac{q_\u(x)}{q_\u(x)+p(x)}\right) dx \nonumber\\
    &=-\log 2\cdot \half \int (q_\u(x)+p(x)) dx \nonumber\\
    &=-\log 2~.
\end{align}
where we used $p(x)/(p(x)+q_\u(x)) = \frac12$.

\textbf{(b)} Let us now compute the maximin value. Since $D_\v(x)=\frac12$ the following holds for any $q_\u^0(x)$,
$$
M(q_\u^0,D_\v) = \half \int p(x)\log D_\v(x) dx
     +\half \int q_\u^0(x) \log(1-D_\v(x))dx
     = -\log2~,
$$
which immediately implies,
\begin{align}\label{eq:MaximinVal99}
    \min_{q_\u} M(q_\u,D_\v) =-\log 2~.
\end{align}
Combining \Eqref{eq:MinimaxVal99} with \Eqref{eq:MaximinVal99}, with the definition of the Duality gap implies,
$$
\rm{DG}(q_\u,D_\v) = 0~.
$$
which concludes the second part of the proof.

\end{proof}

 \paragraph{Approximate computation of the duality gap}
 Note that the computation of the duality gap requires finding the exact solution for $\min_{q_\u} M(q_\u,D_\v)$, and $\max_{D_\v} M(q_\u,D_\v)$.
 In practice it is reasonable to assume that we may solve these two optimization problems only up to some approximation $\eps$, i.e., that we may compute $q_\u^{*, \eps}$ and $D_\v^{*, \eps}$ such that,
 $$
 M(q_\u^{*, \eps},D_\v) \leq \min_{q_\u} M(q_\u,D_\v)+\eps, \qquad \& \qquad 
  M(q_\u,D_\v^{*, \eps}) \geq \max_{D_\v} M(q_\u,D_\v)-\eps
 $$
 In this case, a simple adaptation to the derivation above shows that the approximate duality gap computed using $q_\u^{*, \eps}$ and $D_\v^{*, \eps}$ gives us an upper bound on the approximate JS divergence as follows,
 $$
\rm{DG}_{\eps}(q_\u,D_\v): = M(q_\u,D_\v^{*, \eps}) - M(q_\u^{*, \eps},D_\v)
\geq \rm{JSD}(p\left| \right| q_\u) -2\eps~.
$$

\section{Non-negativity of the practical duality gap.} 
\label{app:non_neg}
While the DG is guaranteed to be non-negative in theory, this might not hold in practice since we do not have access to the exact $\argmin$ and $\argmax$ but instead optimize for a fixed number of steps. Therefore, a question that arises is whether the non-negativity property is affected by practical approximations of DG. First, note that negative DG values occur if at some step $t$, $M(\u_t, \v_{\rm{worst}}) < M(\u_{\rm{worst}},\v_t)$. However, the optimization algorithm yields $M(\u_t, \v_{\rm{worst}}) > M(\u_t, \v_t)$ since $\v_{\rm{worst}}$ is initialized using $\v_t$ and optimized to maximize the objective. Similarly, we expect $M(\u_t, \v_t) > M(\u_{\rm{worst}}, \v_t)$, and DG is therefore non-negative. This of course assumes that the optimizer uses an appropriate set of parameters that guarantee successful decrease/increase of the objective.
In Appendix \ref{app:analysisapprox}, we  investigate the impact of the practical approximation of DG in lieu of the exact computation. In particular, we find that - both in theory and in practice - DG is not affected by the presence of mode collapse in $\u_{\rm{worst}}$.

\section{Experiments}
\label{sec:app_experiments}

\subsection{Toy Dataset: Mixture of Gaussians}
\label{app:ring}
The toy datasets consist of a mixture of 8, 20 and 25 Gaussians for each of the models (\textsc{RING}, \textsc{SPIRAL}, \textsc{GRID}), respectfully. The standard deviation is set to 0.05 for all models except for the \textsc{RING} where the std is 0.01. Depending on the dataset, the means are spaced equally around a unit circle, a spiral or a grid.

The architecture of the generator consists of two fully connected layers (of size 128) and a linear projection to the dimensionality of the data (i.e. 2). The activation functions for the fully connected layers are relu. The discriminator is symmetric and hence, composed of two fully connected layers (of size 128) followed by a linear layer of size 1. The activation functions for the  fully connected layers are relu, whereas the final layer uses sigmoid as an activation function. \\

Adam was used as an optimizer for both the discriminator and the generator with $beta_1=0.5$ and a batch size of 100. The latent dimensionality $z$ is 100. The learning rates for the reported models are given as follows in Table \ref{fig:lr}. The optimizer used for training the worst D/G is Adam and is set to the default parameters.

Plots of DG during training are given in Table \ref{tab:DG&Convergence}. Table \ref{fig:results} lists the obtained results for the methods in terms of their final duality gap, number of modes they have covered and the number of generated points that fall within three standard deviations of one of the means. The heatmaps of the final generated distributions are given in Figure \ref{fig:heatmaps}.

We also plot generated samples from the worst case generator in Figure \ref{fig:worstgen}.

Progress during training for Figure \ref{fig:correlation} is given in Figure \ref{fig:steps}.

\begin{table}[]
\centering
\begin{tabular}{|l|l|l|l|l|}
\hline
       & \multicolumn{2}{l|}{stable} & \multicolumn{2}{l|}{unstable} \\ \hline
       & lr G         & lr D         & lr G          & lr D          \\ \hline
RING   & 1e-3         & 1e-4         & 1e-4          & 2e-4          \\ \hline
SPIRAL & 1e-3         & 2e-3         & 1e-4          & 2e-3          \\ \hline
GRID   & 1e-3         & 2e-3         & 1e-4          & 2e-3          \\ \hline
\end{tabular}
\caption{\label{fig:lr} Learning rates used for the toy experiments.}
\end{table}

\begin{table}[]
\centering
\begin{tabular}{ll|l|l|l|}
\cline{3-5}
                                                &        & Duality Gap & Modes & Std  \\ \hline
\multicolumn{1}{|l|}{\multirow{3}{*}{stable}}   & RING   & 0.04        & 8     & 2375 \\ \cline{2-5} 
\multicolumn{1}{|l|}{}                          & SPIRAL & 0.14        & 20    & 1999 \\ \cline{2-5} 
\multicolumn{1}{|l|}{}                          & GRID   & 0.03        & 25    & 2370 \\ \hline
\multicolumn{1}{|l|}{\multirow{3}{*}{unstable}} & RING   & 13          & 2     & 152  \\ \cline{2-5} 
\multicolumn{1}{|l|}{}                          & SPIRAL & 1.22        & 1     & 1724 \\ \cline{2-5} 
\multicolumn{1}{|l|}{}                          & GRID   & 12.09       & 3     & 37   \\ \hline
\end{tabular}
\caption{\label{fig:results} Final results for DG, number of covered modes and number of generated samples (out of 2400) that fall within 3 standard deviations of the means.}
\end{table}

\begin{figure} 
\centering
\begin{tabular}{cccccc}
\subfloat[Unstable ring]{\includegraphics[width = 0.13\textwidth]{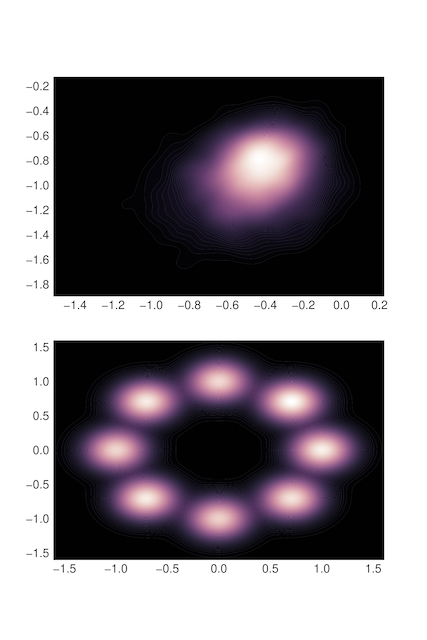}} &
\subfloat[Stable ring]{\includegraphics[width = 0.13\textwidth]{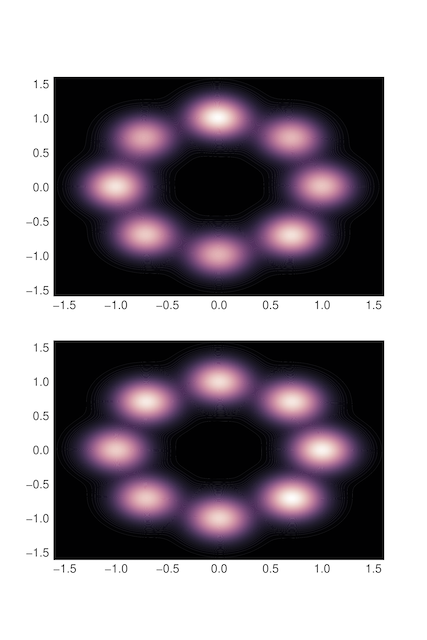}} &
\subfloat[Unstable spiral]{\includegraphics[width = 0.13\textwidth]{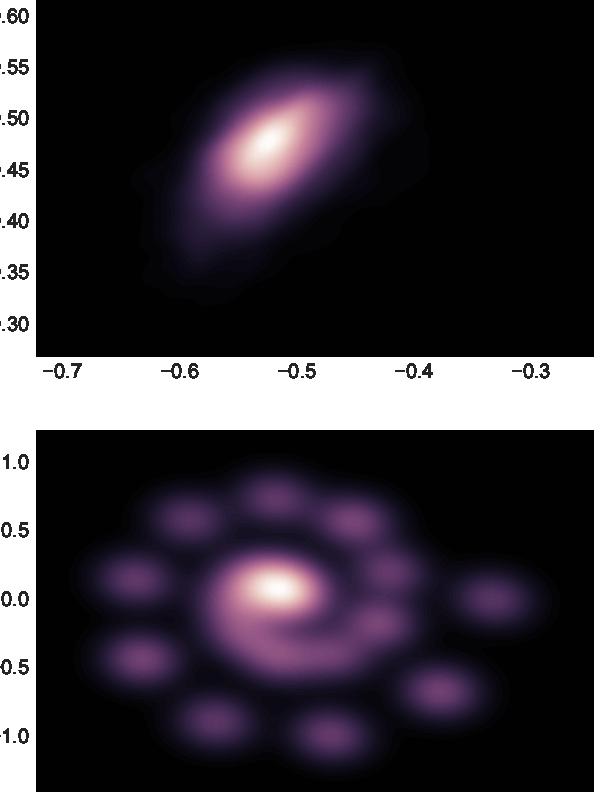}} &
\subfloat[Stable spiral]{\includegraphics[width = 0.13\textwidth]{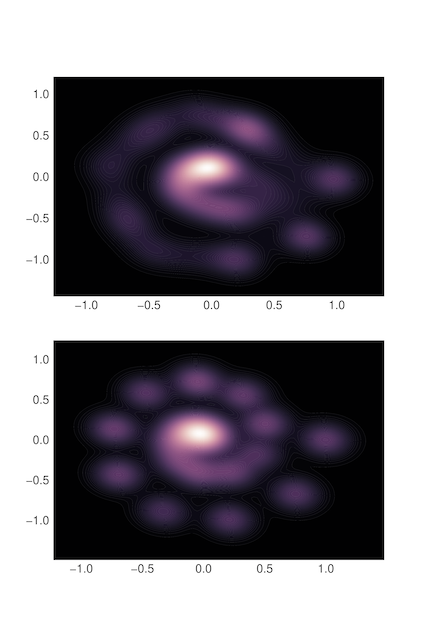}} &
\subfloat[Unstable grid]{\includegraphics[width = 0.13\textwidth]{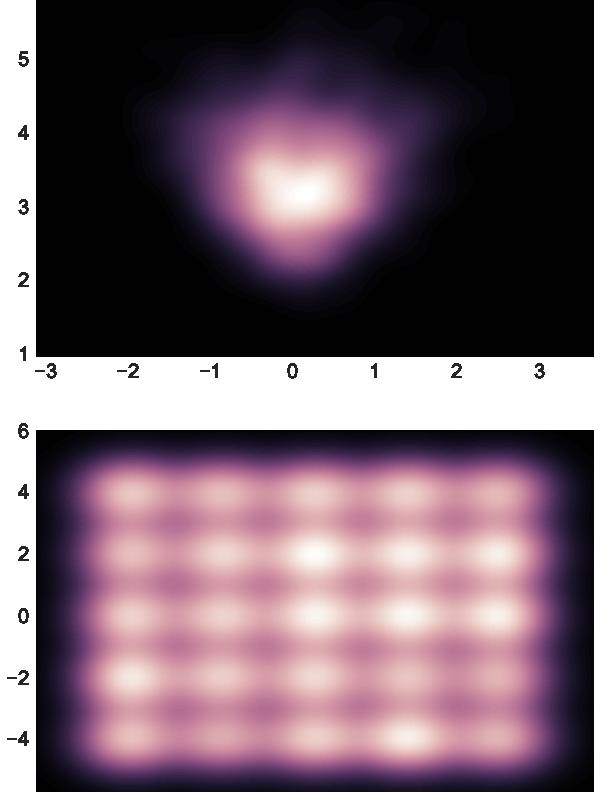}} &
\subfloat[Stable grid]{\includegraphics[width = 0.13\textwidth]{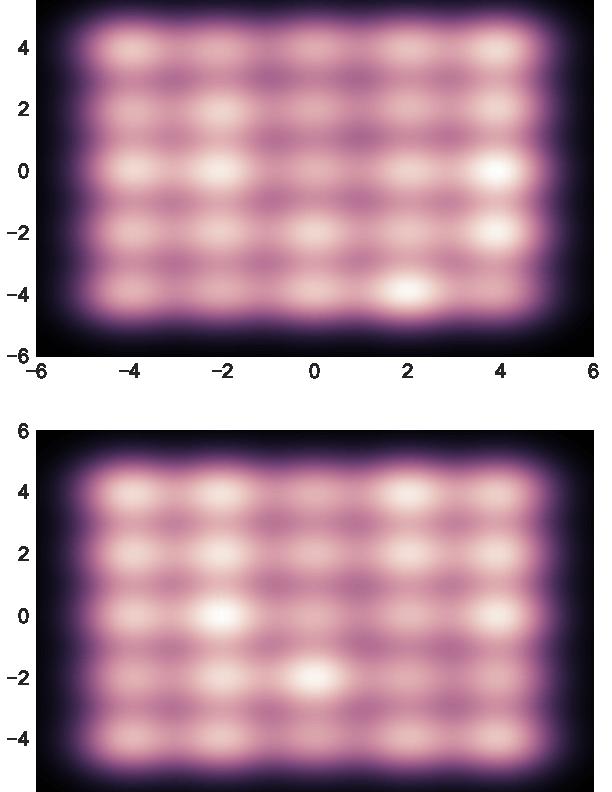}}

\end{tabular}
\caption{\label{fig:heatmaps}Heatmaps of the generated distributions at the final steps. On top: trained model (stable or unstable), on bottom: $\pdata$}
\end{figure}

\begin{figure} 
\centering
\begin{tabular}{cc}
\subfloat[Unstable ring]{\includegraphics[width = 0.29\textwidth]{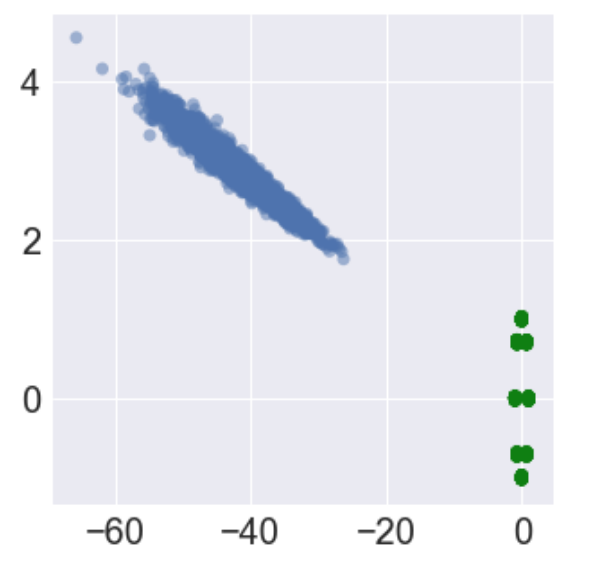}} &
\subfloat[Stable ring]{\includegraphics[width = 0.3\textwidth]{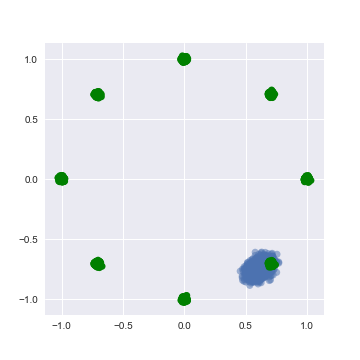}} \\
\subfloat[Unstable spiral]{\includegraphics[width = 0.3\textwidth]{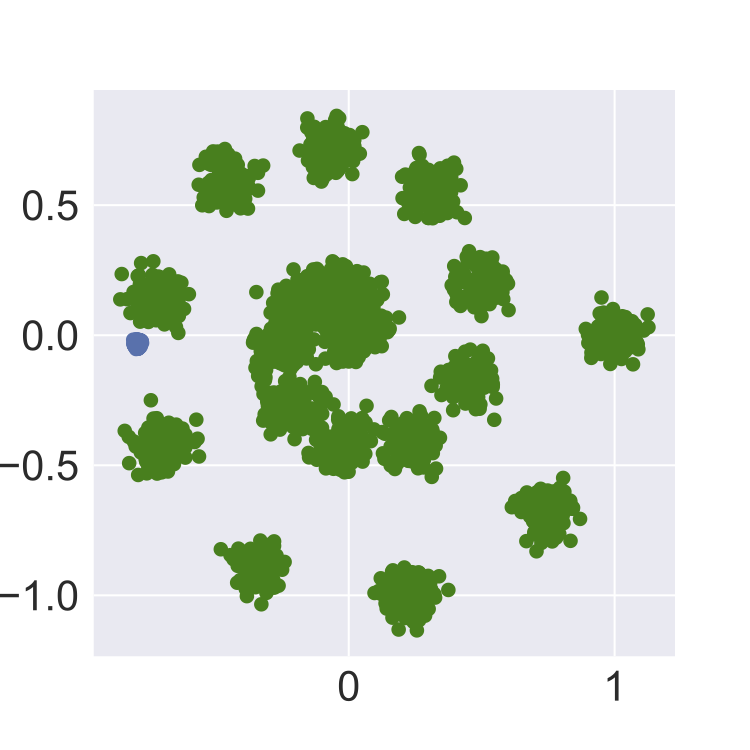}} &
\subfloat[Stable spiral]{\includegraphics[width = 0.3\textwidth]{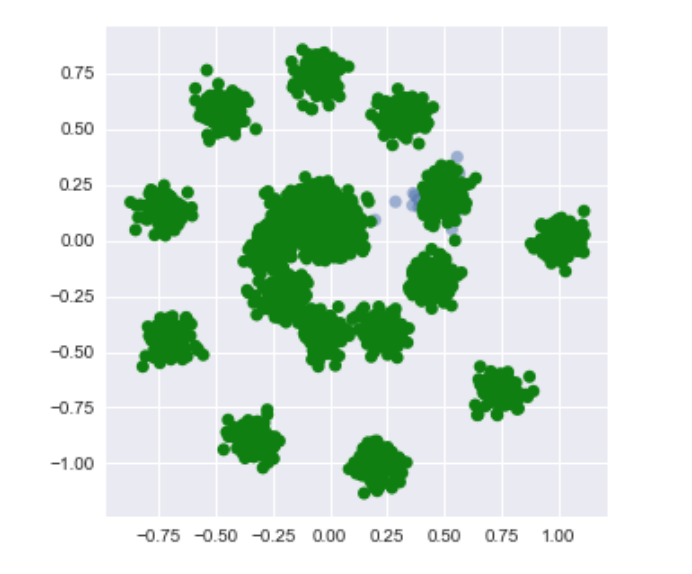}} \\
\subfloat[Unstable grid]{\includegraphics[width = 0.3\textwidth]{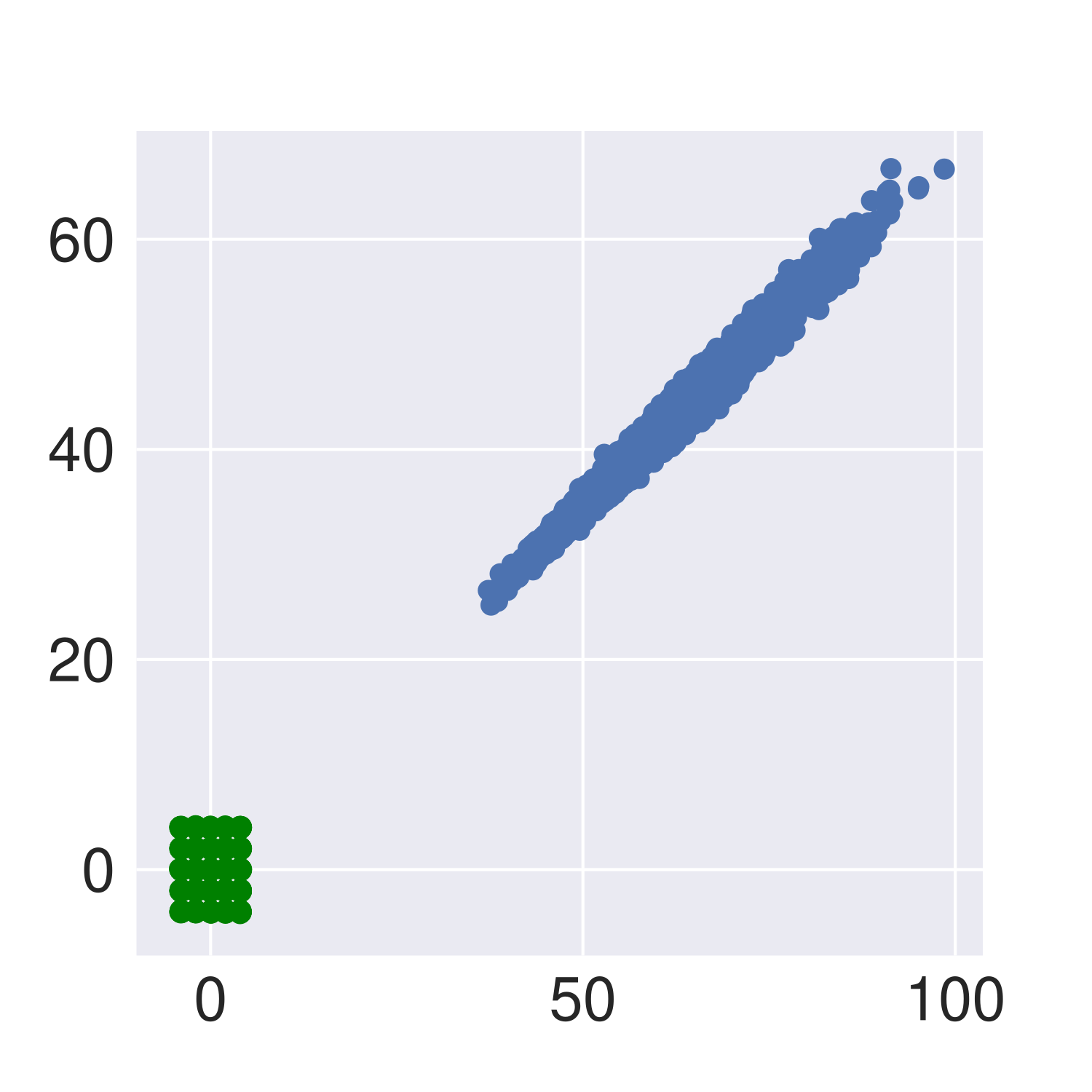}} &
\subfloat[Stable grid]{\includegraphics[width = 0.3\textwidth]{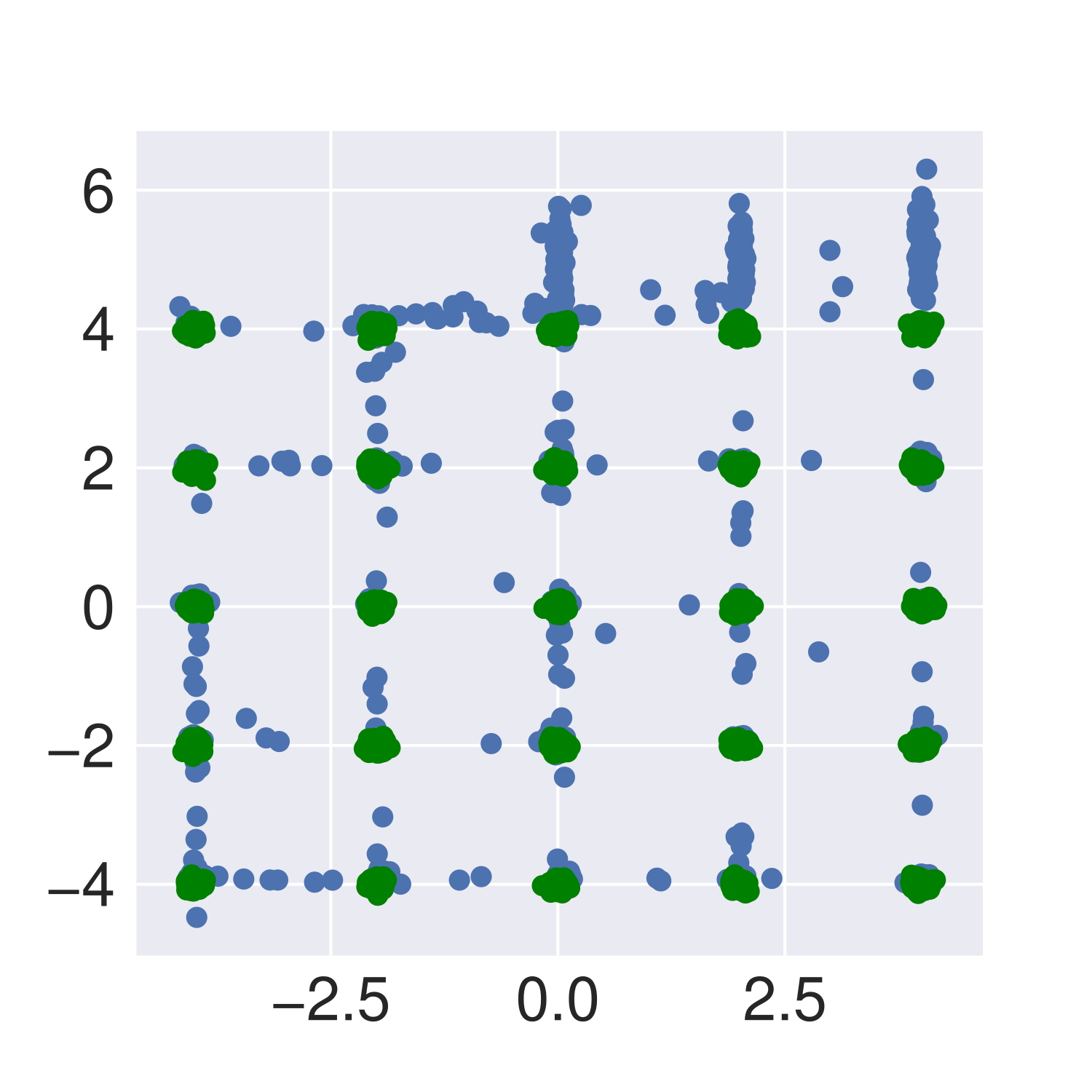}} 
\end{tabular}
\caption{\label{fig:worstgen}Generated samples (in blue) from the worst generator for the discriminator for both the stable and unstable models. (ground truth in green color).}
\end{figure}

\begin{figure} 
\begin{tabular}{cccccc}
\subfloat[Step0]{\includegraphics[width = 0.13\textwidth]{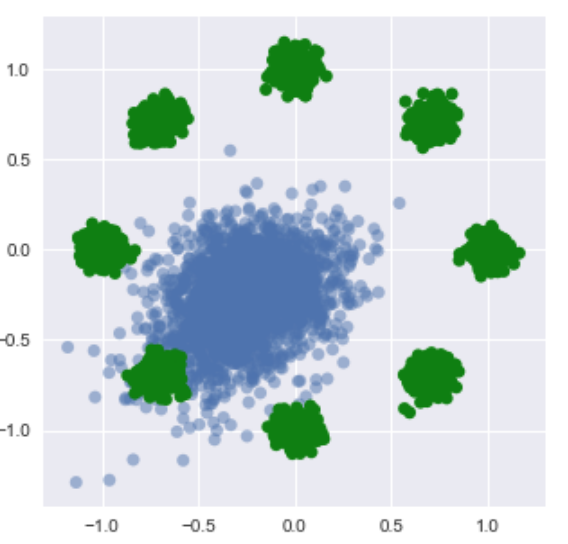}} &
\subfloat[Step5]{\includegraphics[width = 0.13\textwidth]{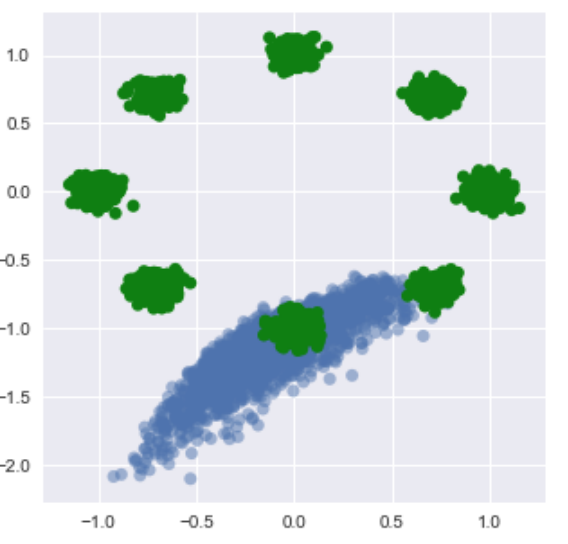}} &
\subfloat[Step10]{\includegraphics[width = 0.13\textwidth]{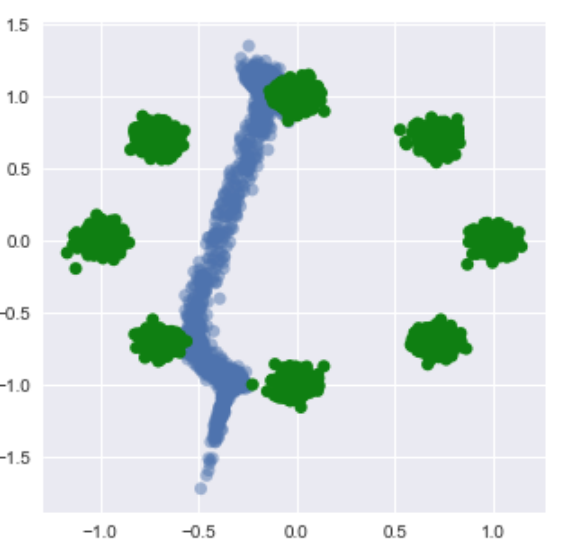}} &
\subfloat[Step15]{\includegraphics[width = 0.13\textwidth]{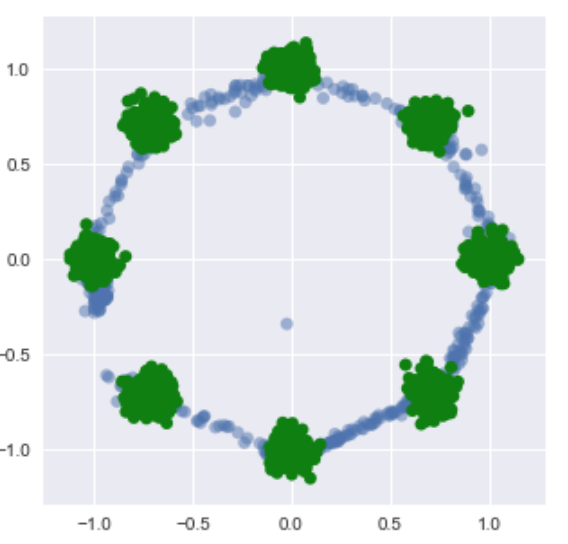}} &
\subfloat[Step20]{\includegraphics[width = 0.13\textwidth]{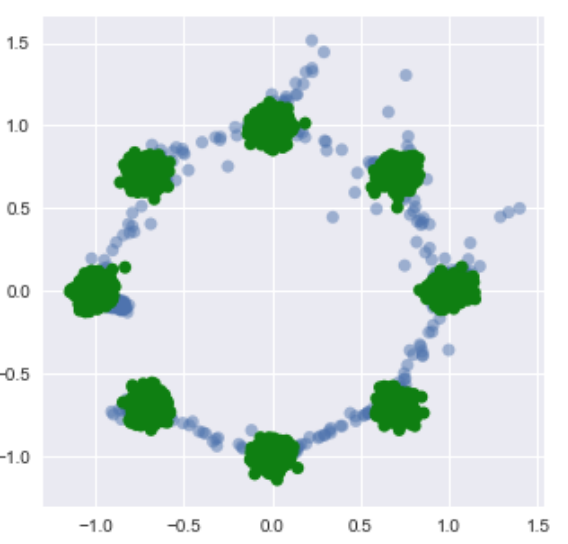}} &
\subfloat[Step25]{\includegraphics[width = 0.13\textwidth]{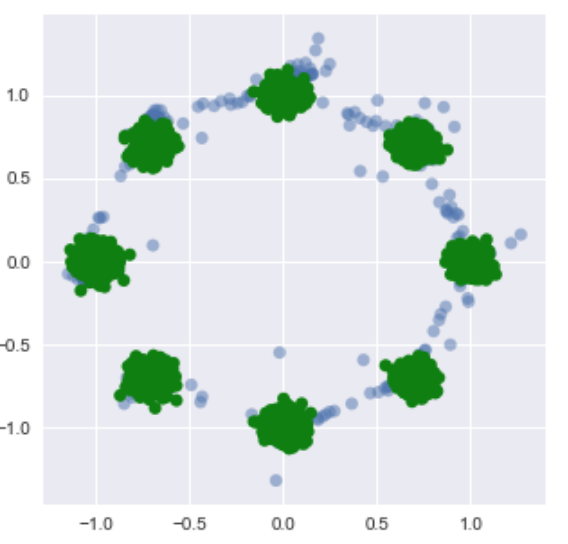}}
\end{tabular}
\caption{\label{fig:steps}Generated samples (blue) and real samples (green) throughout training steps}
\end{figure}

Finally, the progress of DG, minimax, number of modes, and generated samples close to modes across epochs is given in Figure \ref{fig:correlation}. High correlation can be observed, which matches the quantitive results in Table \ref{table:correlation}.

The correlation values are reported in Table \ref{table:correlation}. We observe significant anti-correlation (especially for minimax loss), which indicates that both metrics capture changes in the number of modes and hence \textit{the minimax loss can be used as a proxy to determining the overall sample quality}.

\begin{figure}
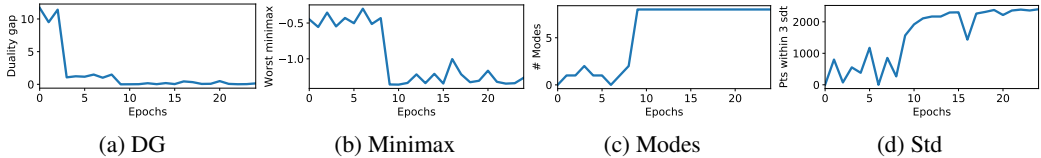
 
\centering
\begin{tabular}{@{\hspace{-0.2em}}c@{\hspace{-0.2em}}c@{\hspace{-0.2em}}c@{\hspace{-0.2em}}c}
\subfloat[DG]{\includegraphics[width = 0.25\textwidth]{images/correlation/dg_vs_epoch.pdf}} &
\subfloat[Minimax]{\includegraphics[width = 0.26\textwidth]{images/correlation/minimax_vs_epoch.pdf}}&
\subfloat[Modes]{\includegraphics[width = 0.24\textwidth]{images/correlation/modes_vs_epoch.pdf}} &
\subfloat[Std]{\includegraphics[width = 0.26\textwidth]{images/correlation/std_vs_epoch.pdf}}
\end{tabular}
\captionsetup{font=footnotesize}
\caption {DG, minimax, number of modes, and generated samples close to modes across epochs}\label{fig:correlation}
\vspace{-0.3cm}
\end{figure}

\vspace{-0.1cm}

\begin{table}[h!]
\centering
\begin{tabular}{l|l|l|l}
\cline{2-3}
                                             & \begin{tabular}[c]{@{}l@{}}DG\end{tabular} & Minimax &                             \\ \hline
\multicolumn{1}{|l|}{\multirow{3}{*}{Modes}} & -0.63                                                     & -0.97       & \multicolumn{1}{l|}{ring}   \\ \cline{2-4} 
\multicolumn{1}{|l|}{}                       &      -0.59                                               &    -0.93    & \multicolumn{1}{l|}{spiral} \\ \cline{2-4} 
\multicolumn{1}{|l|}{}                       & -0.71                                                     & -0.95       & \multicolumn{1}{l|}{grid}   \\ \hline
\multicolumn{1}{|l|}{\multirow{3}{*}{Std}}   & -0.64                                                     & -0.94      & \multicolumn{1}{l|}{ring}   \\ \cline{2-4} 
\multicolumn{1}{|l|}{}                       & -0.64                                                     & -0.58       & \multicolumn{1}{l|}{spiral} \\ \cline{2-4} 
\multicolumn{1}{|l|}{}                       & -0.7                                                     & -0.93       & \multicolumn{1}{l|}{grid}   \\ \hline
\end{tabular}
\captionsetup{font=footnotesize}
\caption{\footnotesize{Pearson product-moment correlation coefficients for an average of 10 stable rounds. Progress throughout the training of the individual metrics can be seen in Figure \ref{fig:correlation}}}\label{table:correlation}
\vspace{-0.2cm}
\end{table}

\subsubsection{Loss Curves}
A common problem practitioners face is when to stop training, i.e. understanding whether the model is still improving or not. See for example Figure \ref{fig:oscilatinglosses}, which shows the discriminator and generator losses during training of a DCGAN model on CIFAR10. The training curves are oscillating and hence are very non-intuitive. A practitioner needs to most often rely on visual inspection or some performance metric as a proxy as a stopping criteria. 

\begin{figure} 
    \centering
    \includegraphics[width = 0.5\textwidth]{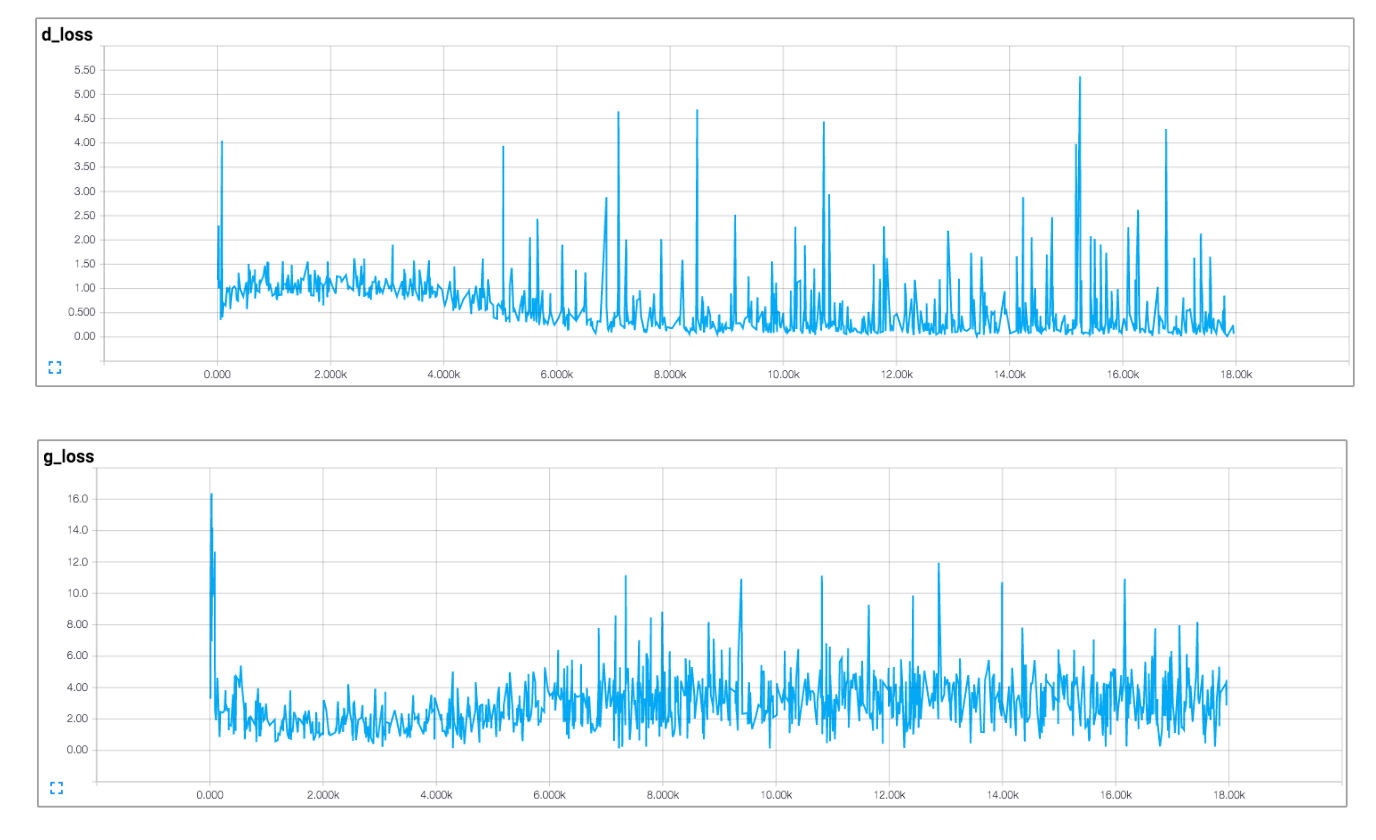}
    \caption{Discriminator and generator loss curves for a DCGAN model trained on CIFAR10. The curves are oscillating and it is hard to determine when to stop the training.}
    \label{fig:oscilatinglosses}
\end{figure}

The generator and discriminator losses for our 2D ring problem are shown in Figure \ref{fig:ganddloss}. Based on the curves it is hard to determine when the model stops improving. As this is a 2D problem one can visually observe when the model has converged through the heatmaps of the generated samples (see Table \ref{tab:DG&Convergence}). However in higher-dimensional problems (like the one discussed above on CIFAR10) one cannot do the same. Figure \ref{fig:dg} showcases the progression of the duality gap throughout the training. Contrary to the discriminator/generator losses, this curve is meaningful and clearly shows the model has converged and when one can stop training, which coincides with what is shown on the heatmaps.

\begin{figure} 
\centering
\begin{tabular}{cc}
\subfloat[Generator loss]{\includegraphics[width = 0.5\textwidth]{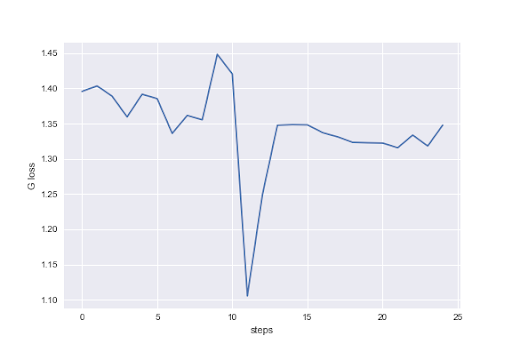}} &
\subfloat[Discriminator loss]{\includegraphics[width = 0.5\textwidth]{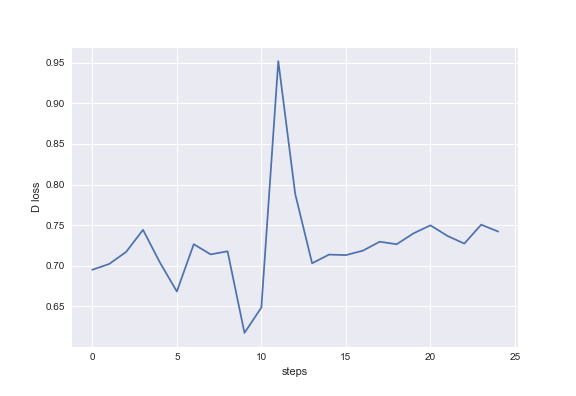}}
\end{tabular}
\caption{\label{fig:ganddloss} Discriminator and generator loss curves for the 2D ring problems. The curves are oscillating and it is hard to determine when to stop the training and when the model stops improving.}
\end{figure}

\begin{figure} 
    \centering
    \includegraphics[width = 0.5\textwidth]{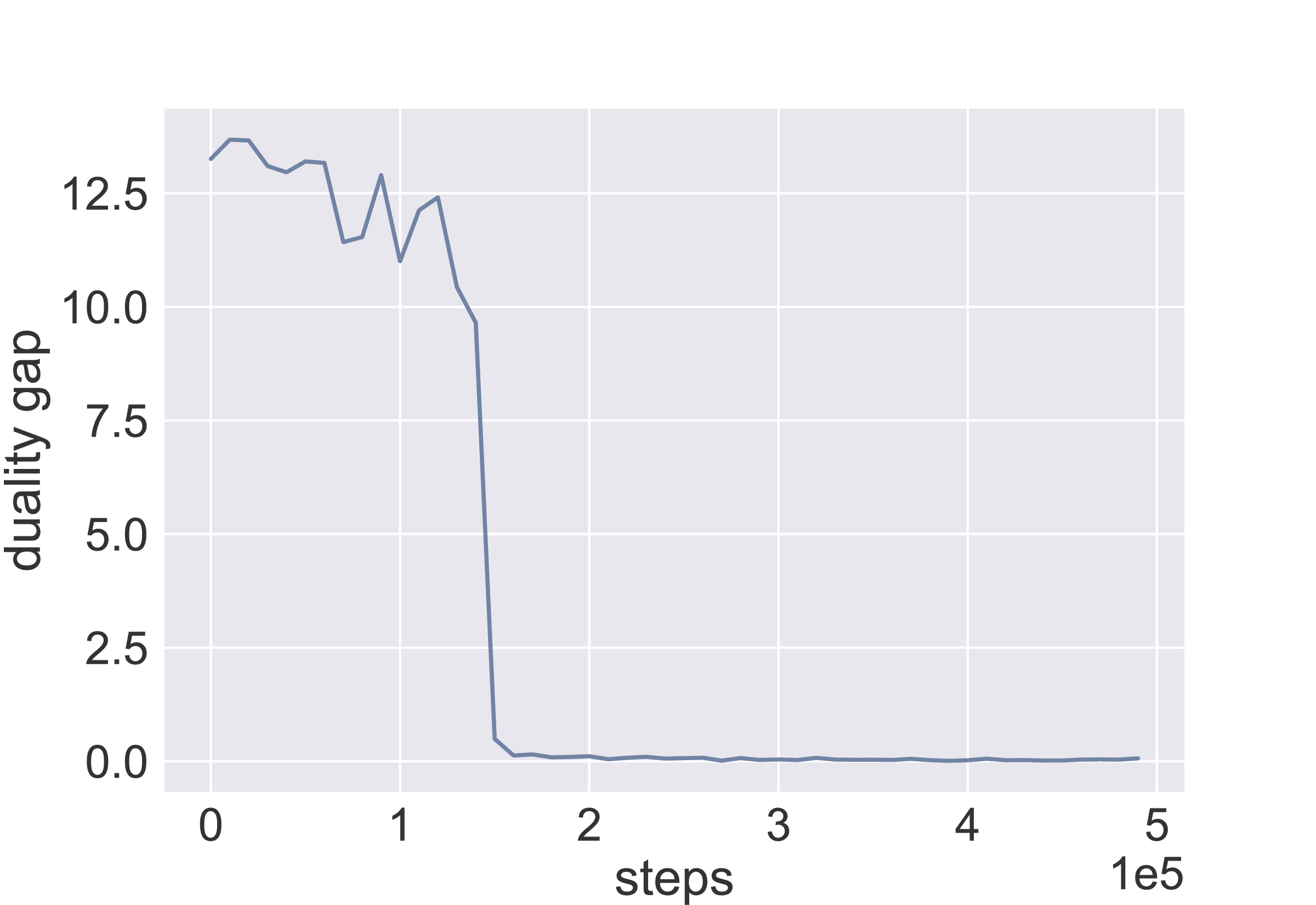}
    \caption{Curve of the progression of the duality gap during training.}
    \label{fig:dg}
\end{figure}

\subsection{Other hyperparameters}
\subsubsection{Stable Mode Collapse}
The architecture of the generator consists of 5 fully connected layers of size 128 with leaky relu as an activation unit, followed by a projection layer with tanh activation. The discriminator consists of 2 dense layers of size 128 and a projection layer. The activation function used for the dense layers of the discriminator is leaky relu as well, while the final layer uses a sigmoid. The value $\alpha$ for leaky relu is set to 0.3. 

The optimizer we use is Adam with default parameters for both the generator, discriminator, as well as the optimizers for training the worst generator and discriminator. The dimensionality of the latent space $z$ is set to 100 and we use a batch size of 100 as well. We train for 10K steps. The number of steps for training the worst case generator/discriminator is 400.

We use the training, validation and test split of MNIST \citep{deng2012mnist} for training the GAN, training the worst case generator/discriminator, and estimating the duality gap (as discussed in Section \ref{sec:estimatingDG}).

\subsection{3D GAN experiment}
We train a simple GAN on a 2D submanifold mixture of seven Gaussians arranged in a circle and embedded in 3D space following the setup by \citep{roth2017stabilizing}. The strength of the regularizer $\gamma$ is set to 0.1, and respectively, to 0 for the unregularized version.

Figure \ref{fig:spikes} shows a duality gap progression curve for a regularized GAN. It can be observed that the the spikes that occur are due to the training properties, i.e. at the particular step when there is an apparent spike, the quality of the generated samples worsens.

\begin{figure}[h]
\centering
\begin{minipage}{0.5\linewidth}
\includegraphics[width=0.9\linewidth]{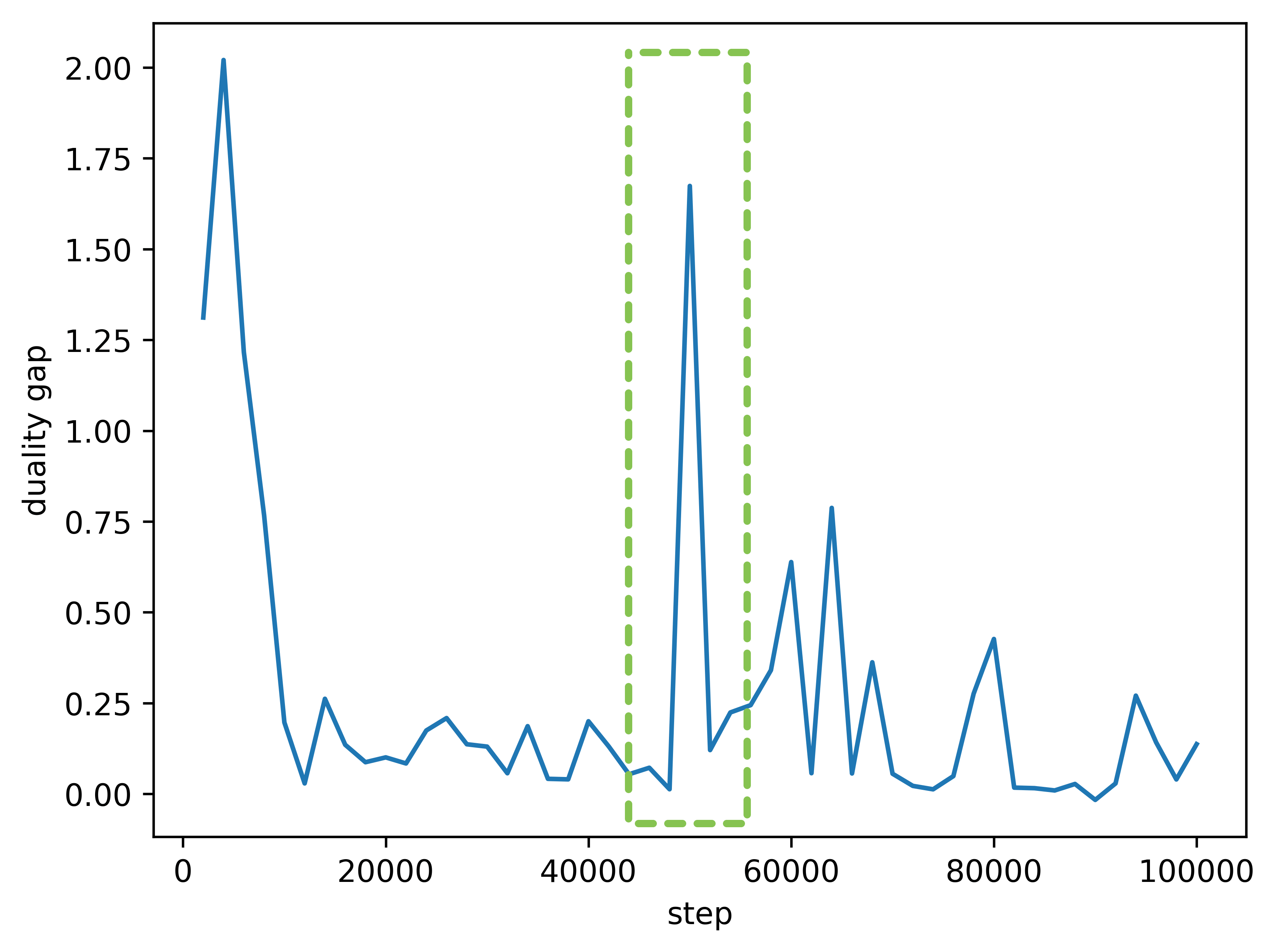}
\end{minipage}
\begin{minipage}{0.2\linewidth}
\includegraphics[height=1.5cm]{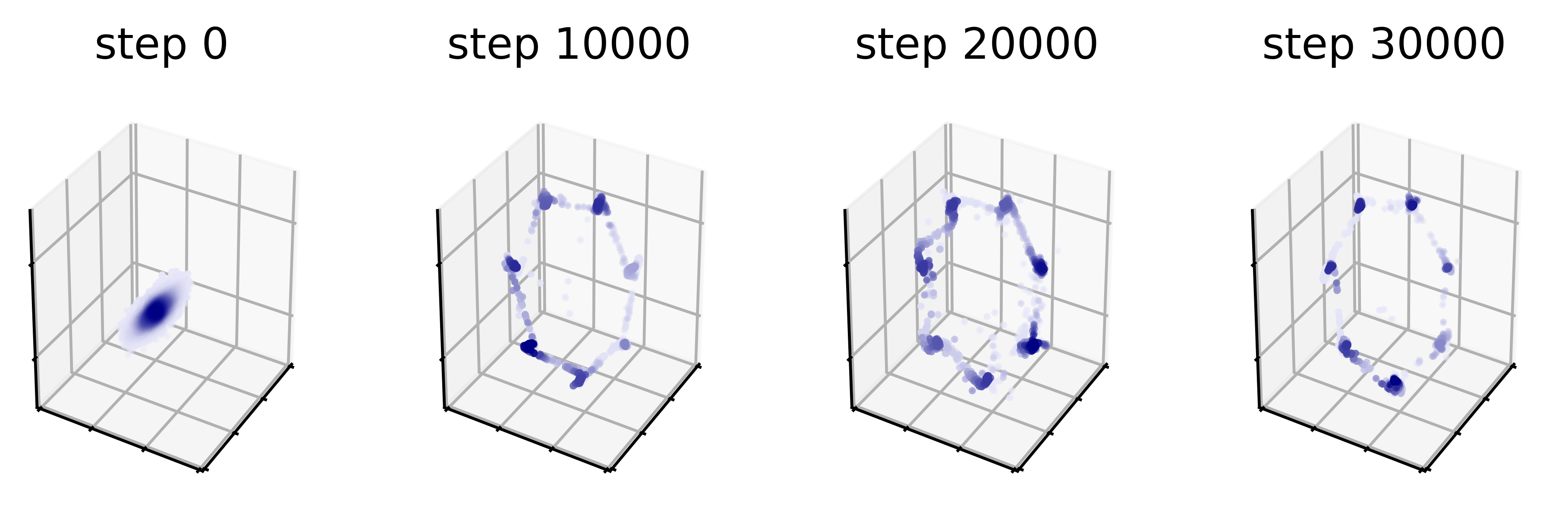}
\includegraphics[height=1.5cm]{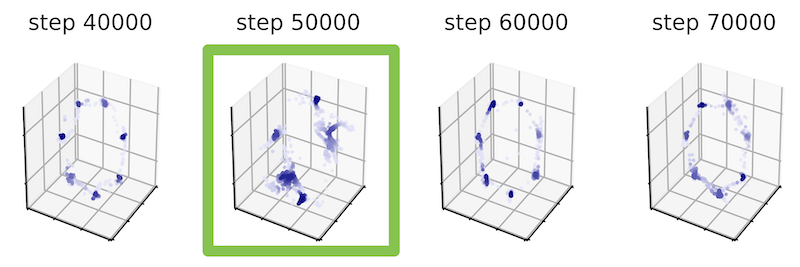}
\includegraphics[height=1.5cm]{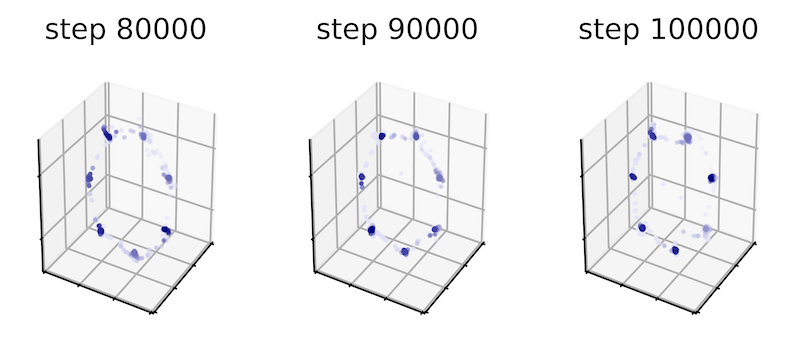}
\end{minipage}
\caption{\label{fig:spikes} Spikes in the DG progression curve reflect the quality of the generated samples}
\end{figure}

\subsubsection{Minimax experiment on Cifar10}
\label{app:minimax}
Here we describe hyperaparameters for the experiment on Cifar10 \citep{krizhevsky2014cifar}.
The worst case discriminator we train is using the commonly used DCGAN architecture \citep{radford2015unsupervised}. We again use Adam with the default parameters as the optimizer and the batch size is 100. We update the worst case classifier for 1K steps.

The hyperparameters used for the distortion in the experiment for visual sample quality are:
\begin{enumerate}
    \item Gaussian noise
    \begin{enumerate}
        \item level 1: $sigma=5$
        \item level 2: $sigma=10$
        \item level 3: $sigma=20$
    \end{enumerate}{}
     \item Gaussian blur
    \begin{enumerate}
        \item level 1: $ksize=2$
        \item level 2: $ksize=5$
        \item level 3: $ksize=7$
    \end{enumerate}{}
    \item Gaussian swirl with strength 5
    \begin{enumerate}
        \item level 1: $radius=1$
        \item level 2: $radius=2$
        \item level 3: $radius=20$
    \end{enumerate}{}
\end{enumerate}{}

\paragraph{Mode sensitivity and sample quality detection.} Fig.~\ref{fig:Modes} shows the ability of the three metrics (INC, FID and minimax) to detect mode dropping, mode invention and intra-mode collapse. We simulate mode dropping by using the class labels as modes. The input is a set of 5K images containing all 10 classes as 'real' images, and another set of 5K images composed of only subset of the modes (subset of 2, 4 and 8) as 'generated' images (Figure \ref{fig:Modes} a). 

We then turn to \textit{mode invention} where the generator creates non-existent modes. For this setting, the set of 'real' images contains only 5 classes, whereas the sets of 'generated' images are supersets of 5, 7 and 10 classes (Figure \ref{fig:Modes} b). \textit{Intra-mode collapse} is another common issue that occurs when the generator is generating from all modes, but there is no variety within a mode. The 'generated' sets consist of images from all 10 classes, but contain only 1, 50 and 500 unique images within a class. 

Figure \ref{fig:Modes} shows the trends for all metrics for the various degrees of mode dropping, invention and intra-class collapse. INC is unable to detect both intra-mode collapse and invented modes. On the other hand, \textit{both FID and minimax loss exhibit desirable sensitivity to various mode changing.}

The ability of the metrics to react to distortion at an increasing intensity is in Fig.~\ref{fig:distortion}.

\begin{figure} 
\centering
\begin{tabular}{|l|}
    \hline
    \hspace{-0.35cm}\subfloat[Dropped modes]{
     \begin{tabular}{cc}
        \begin{tikzpicture}[scale=0.22, every node/.style={scale=2.0}]
            \begin{axis}[
              xlabel=\#modes,
              ylabel=INC
            ]
            \addplot [
                color=blue,
                mark=x,
                mark size=5pt
            ]
                table [
                    y=INC, 
                    x=modes
            ]{data/classes/inc.data};
            \end{axis}
        \end{tikzpicture}
        \begin{tikzpicture}[scale=0.22, every node/.style={scale=2.0}]
            \begin{axis}[
              xlabel=\#modes,
              ylabel=FID
            ]
            \addplot [
                color=blue,
                mark=x,
                mark size=5pt
            ]
                table [
                    y=FID, 
                    x=modes
            ]{data/classes/fid.data};
            \end{axis}
        \end{tikzpicture}
        \\
        \begin{tikzpicture}[scale=0.22, every node/.style={scale=2.0}]
            \begin{axis}[
              xlabel=\#modes,
              ylabel=minmax
            ]
            \addplot [
                color=blue,
                mark=x,
                mark size=5pt
            ]
                table [
                    y=worst_minmax, 
                    x=modes
            ]{data/classes/worst_min_max.data};
            \end{axis}
        \end{tikzpicture}

        \end{tabular}
        }\hspace{-0.5cm}
    \subfloat[Invented modes]{
        \begin{tabular}{cc}
            \begin{tikzpicture}[scale=0.22, every node/.style={scale=2.0}]
                \begin{axis}[
                  xlabel=\#invented modes,
                  ylabel=INC
                ]
                \addplot [
                    color=blue,
                    mark=x,
                    mark size=5pt
                ]
                    table [
                        y=INC, 
                        x=modes
                ]{data/modes/inc.data};
                \end{axis}
            \end{tikzpicture}
            \begin{tikzpicture}[scale=0.22, every node/.style={scale=2.0}]
                \begin{axis}[
                  xlabel=\#invented modes,
                  ylabel=FID
                ]
                \addplot [
                    color=blue,
                    mark=x,
                    mark size=5pt
                ]
                    table [
                        y=FID, 
                        x=modes
                ]{data/modes/fid.data};
                \end{axis}
            \end{tikzpicture}
            \\
            \begin{tikzpicture}[scale=0.22, every node/.style={scale=2.0}]
                \begin{axis}[
                  xlabel=\#invented modes,
                  ylabel=minmax
                ]
                \addplot [
                    color=blue,
                    mark=x,
                    mark size=5pt
                ]
                    table [
                        y=worst_minmax, 
                        x=modes
                ]{data/modes/worst_min_max.data};
                \end{axis}
            \end{tikzpicture}
            \end{tabular}
        }
    \hspace{-0.5cm}
    \subfloat[Intra-mode collapse]{
        \begin{tabular}{cc}
            \begin{tikzpicture}[scale=0.22, every node/.style={scale=2.0}]
                \begin{axis}[
                  xlabel=\#Intra-mode classes,
                  ylabel=INC
                ]
                \addplot [
                    color=blue,
                    mark=x,
                    mark size=5pt
                ]
                    table [
                        y=INC, 
                        x=samples
                ]{data/samples/inc.data};
                \end{axis}
            \end{tikzpicture}
            \begin{tikzpicture}[scale=0.22, every node/.style={scale=2.0}]
                \begin{axis}[
                  xlabel=\#Intra-mode classes,
                  ylabel=FID
                ]
                \addplot [
                    color=blue,
                    mark=x,
                    mark size=5pt
                ]
                    table [
                        y=FID, 
                        x=samples
                ]{data/samples/fid.data};
                \end{axis}
            \end{tikzpicture}
            \\
            \begin{tikzpicture}[scale=0.22, every node/.style={scale=2.0}]
                \begin{axis}[
                  xlabel=\#Intra-mode classes,
                  ylabel=minmax
                ]
                \addplot [
                    color=blue,
                    mark=x,
                    mark size=5pt
                ]
                    table [
                        y=worst_minmax, 
                        x=samples
                ]{data/samples/worst_min_max.data};
                \end{axis}
            \end{tikzpicture}

            \end{tabular}
        }
        \hspace{-0.5cm}
        \\
    
    \hline
\end{tabular}
\captionsetup{font=footnotesize}
\caption{INC, FID and minimax loss for a) mode collapse (x-axis: how many modes out of 10 are generated); b) mode invention (x-axis: how many invented modes are generated) and c) intra-mode collapse (x-axis: number of unique images within a class). For INC higher is better; for FID and minimax lower is better.}
\label{fig:Modes}
\end{figure}
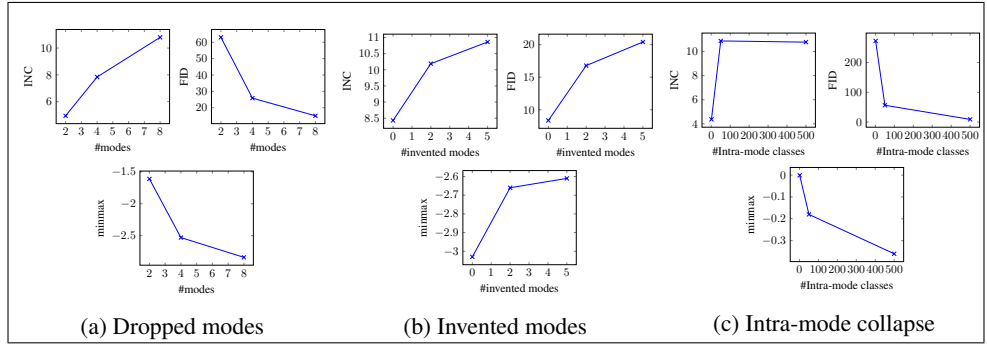

\begin{figure} 
\centering
   \begin{tabular}{ccc}
            \hspace{-0.5cm}
            \begin{tikzpicture}[scale=0.4, every node/.style={scale=2.0}]
                \begin{axis}[
                  ylabel=INC,
                  ymax=15,
                  ytick={6,8,10,12},
                  xmajorticks=false
                ]
    \node at (0,13.3)
    {\includegraphics[scale=1.]{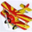}};
    \node at (1,13.3)
    {\includegraphics[scale=1.]{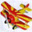}};
    \node at (2,13.3)
    {\includegraphics[scale=1.]{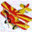}};
    \node at (3,13.3)
    {\includegraphics[scale=1.]{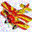}};
                \addplot [
                    color=blue,
                    mark=x,
                    mark size=5pt
                ]
                    table [
                        y=INC, 
                        x=level
                ]{data/disturbance/inc.data};
                \end{axis}
                \begin{scope}[shift={(0,-5.72)}]
                    \begin{axis}[
                        ylabel=FID,
                        xmajorticks=false
                    ]
                    \addplot [
                        color=blue,
                        mark=x,
                        mark size=5pt
                    ]
                        table [
                            y=FID, 
                            x=level
                    ]{data/disturbance/fid.data};
                    \end{axis}
                \end{scope}
                \begin{scope}[shift={(0,-11.44)}]
                    \begin{axis}[
                      ylabel=Worst minmax,
                      xlabel=Gauss noise
                    ]
                    \addplot [
                        color=blue,
                        mark=x,
                        mark size=5pt
                    ]
                        table [
                            y=minmax, 
                            x=level
                    ]{data/disturbance/minmax.data};
                    \end{axis}
                \end{scope}
            \end{tikzpicture}
    &
            \begin{tikzpicture}[scale=0.4, every node/.style={scale=2.0}]
                \begin{axis}[
                  ymax=15,
                  ytick={6,8,10,12},
                  xmajorticks=false
                ]
                \node at (0,13)
    {\includegraphics[scale=1.]{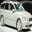}};
    \node at (1,13)
    {\includegraphics[scale=1.]{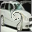}};
    \node at (2,13)
    {\includegraphics[scale=1.]{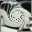}};
    \node at (3,13)
    {\includegraphics[scale=1.]{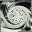}};
                \addplot [
                    color=blue,
                    mark=x,
                    mark size=5pt
                ]
                    table [
                        y=INC, 
                        x=level
                ]{data/gaus-swirl/inc.data};
                \end{axis}
                \begin{scope}[shift={(0,-5.72)}]
                    \begin{axis}[
                        xmajorticks=false
                    ]
                    \addplot [
                        color=blue,
                        mark=x,
                        mark size=5pt
                    ]
                        table [
                            y=FID, 
                            x=level
                    ]{data/gaus-swirl/fid.data};
                    \end{axis}
                \end{scope}
                \begin{scope}[shift={(0,-11.44)}]
                    \begin{axis}[
                      xlabel=Gauss Swirl
                    ]
                    \addplot [
                        color=blue,
                        mark=x,
                        mark size=5pt
                    ]
                        table [
                            y=minmax, 
                            x=level
                    ]{data/gaus-swirl/minmax.data};
                    \end{axis}
                \end{scope}
            \end{tikzpicture}
    &
            \begin{tikzpicture}[scale=0.4, every node/.style={scale=2.0}]
                \begin{axis}[
                  ymax=15,
                  ytick={6,8,10,12},
                  xmajorticks=false
                ]
                \node at (0,13.5)
    {\includegraphics[scale=1.]{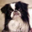}};
    \node at (1,13.5)
    {\includegraphics[scale=1.]{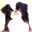}};
    \node at (2,13.5)
    {\includegraphics[scale=1.]{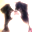}};
    \node at (3,13.5)
    {\includegraphics[scale=1.]{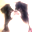}};
                \addplot [
                    color=blue,
                    mark=x,
                    mark size=5pt
                ]
                    table [
                        y=INC, 
                        x=level
                ]{data/gaus-blur/inc.data};
                \end{axis}
                \begin{scope}[shift={(0,-5.72)}]
                    \begin{axis}[
                        xmajorticks=false
                    ]
                    \addplot [
                        color=blue,
                        mark=x,
                        mark size=5pt
                    ]
                        table [
                            y=FID, 
                            x=level
                    ]{data/gaus-blur/fid.data};
                    \end{axis}
                \end{scope}
                \begin{scope}[shift={(0,-11.44)}]
                    \begin{axis}[
                      xlabel=Gauss Blur
                    ]
                    \addplot [
                        color=blue,
                        mark=x,
                        mark size=5pt
                    ]
                        table [
                            y=minmax, 
                            x=level
                    ]{data/gaus-blur/minmax.data};
                    \end{axis}
                \end{scope}
            \end{tikzpicture}
        \end{tabular}
    \captionof{figure}{\label{fig:distortion}INC, FID and minimax loss on samples at increasing intensity of disturbance}
\end{figure}

\paragraph{Efficiency.}
A metric needs to be computationally efficient in order to be used in practice to both track the progress during training and as a final metric to rank various models. Figure \ref{fig:wallclkc} shows the wall clock time in terms of seconds for all three metrics. We keep the number of update steps fixed to 1K which makes the computation of the minimax loss efficient, as are the other two metrics. The computation of DG takes twice the time of the computation of the minimax loss.

\begin{figure} 
\centering
\begin{tikzpicture}[thick,scale=0.8, every node/.style={scale=1.0}]
\begin{axis}[
    title={Wall Clock},
    xlabel={\# evaluated samples},
    ylabel={seconds},
    legend pos=north west,
    ymajorgrids=true,
    grid style=dashed,
]
\addplot[
    color=blue,
    mark=star,
    ]
    coordinates {
    (1000, 1.62)(2500,1.97)(5000,2.24)(7500,2.42)(10000,2.54)(12500,2.65)(15000,2.74)
    };

\addplot[
    color=red,
    mark=square,
    ]
    coordinates {
    (1000, 1.55)(2500,1.70)(5000,1.94)(7500,2.11)(10000,2.23)(12500,2.31)(15000,2.38)
    };

\addplot[
    color=cyan,
    mark=triangle,
    ]
    coordinates {
    (1000, 1.54)(2500,1.43)(5000,1.52)(7500,1.6)(10000,1.51)(12500,1.59)(15000,1.55)
    };
   \legend{INC,FID,minmax}\end{axis}
\end{tikzpicture}
\caption{\label{fig:wallclkc}Wall clock time (in log-scale) for the calculation of INC, FID and minimax loss for increasing number of samples to be processed}
\end{figure}
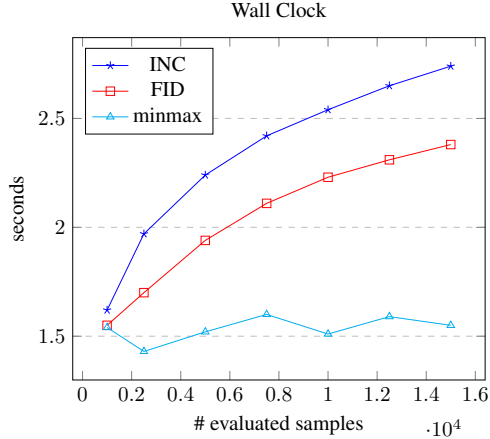

We also test the variance across rounds due to randomness in the seed and how this affects the final metric and overall ranking, on a simple mode collapse task. Table \ref{tab:variance} summarizes the average of 5 rounds showing that the variance is negligible and does not affect the effectiveness of the metric.
\begin{table}
\centering
\begin{tabular}{|l@{\hskip3pt}|l@{\hskip3pt}|l@{\hskip3pt}|l|}
\hline
 classes         & INC                          & FID   & Minimax                       \\ \hline
2 classes & 4.94$\pm$0.13 & 63    & -1.69$\pm$0.06 \\ \hline
4 classes & 7.84$\pm$0.3  & 25.85 & -2.53$\pm$0.04 \\ \hline
6 classes & 9.88$\pm$0.18 & 18.39 & -3.14$\pm$0.03 \\ \hline
\end{tabular}
\caption{\label{tab:variance} Metrics on a simple mode dropping task}
\end{table}

\subsubsection{Experiment on cosmology dataset}
Following the approach of \cite{gulrajani2017improved}, we used a Wasserstein loss with a gradient penalty of $10$. Both the generator and the discriminator were optimized with an "RMSprop" optimizer and a learning rate of $3\cdot 10^{-5}$. The discriminator was optimized $5$ times more often than the generator.

Real and generated samples are given side-by-side in Figure \ref{fig:cosmo_sample}, showcasing the quality of the trained model.

\subsubsection{Experiment on text generation}
For the implementation of SeqGAN, we used the TexyGen framework~\cite{zhu2018texygen} and their preselected hyperparameters and metrics. Some generated samples are available in~Tab\ref{tab:seqgan}.

\begin{table}[]
\begin{tabular}{l}
a man on his cell phone on ita cat is taking a picture of the door to the dark open sits .                                 \\
a white bathroom with a black tub and a urinal and a toilet and a toilet .                                                 \\
a motorcycle parked on the sidewalk .       \\
a motorcycle parked on a street looking up the street .                                                                    \\
a living area is full wooden floor with plants growing fly to the toilet and a sinkairplane is , parked in a parking lot .
\end{tabular}
\caption{\label{tab:seqgan} Generated samples from SeqGAN.}
\end{table}
\begin{figure} 
\centering
\includegraphics[width=0.48\linewidth]{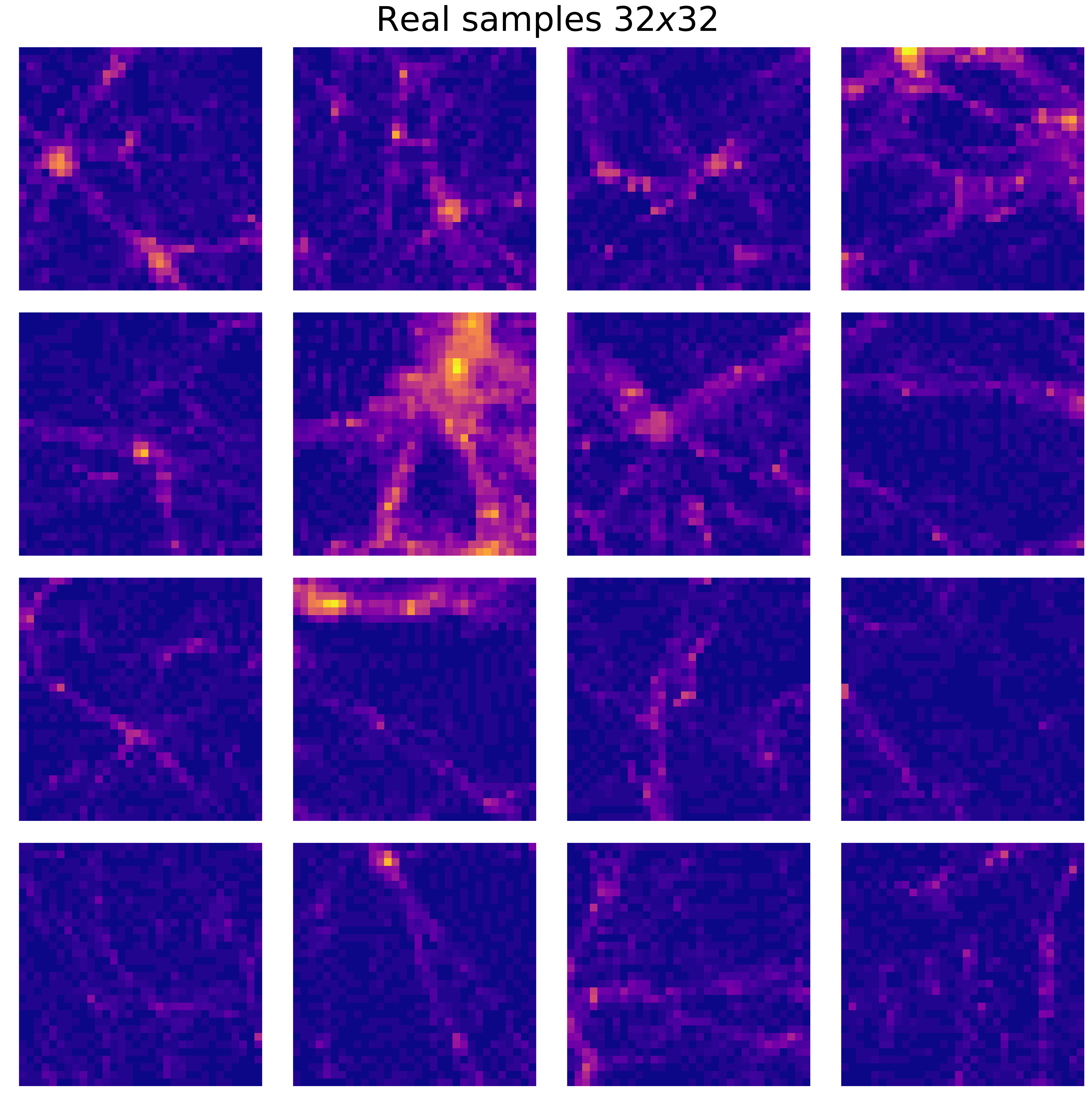}
\includegraphics[width=0.48\linewidth]{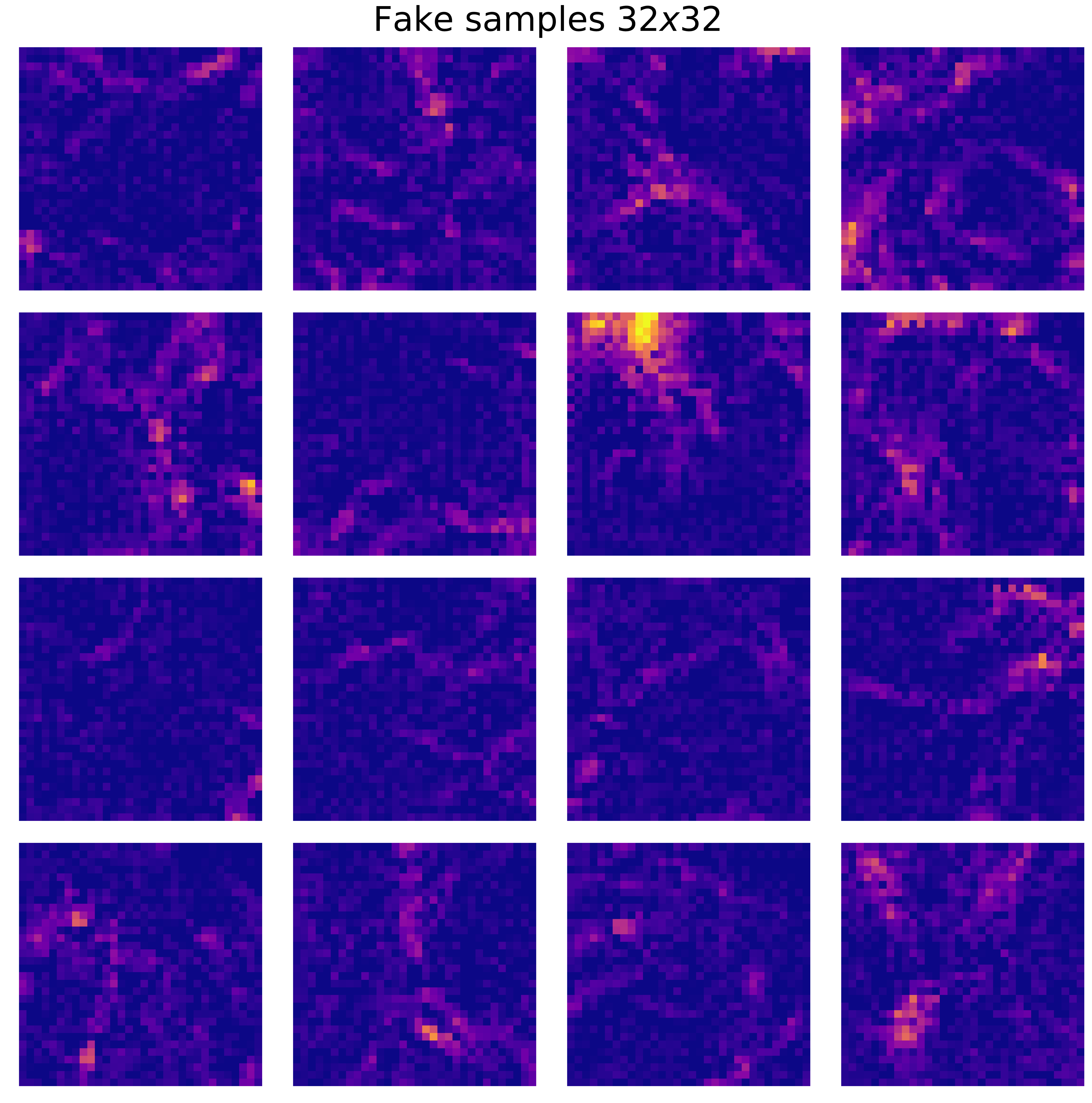}
\caption{\label{fig:cosmo_sample}Real and generated cosmological images, representing slices of mass density of the universe. The yellow coefficient implies a high mass concentration}
\end{figure}

\subsection{Experiment on CIFAR10 for comparison of different GAN variants}
We follow the setup and best hyperparameters reported in \cite{kurach2018gan}. In particular, we train every model for 200K training steps and evaluate using 10K samples. For the computation of DG and minimax, we use 150 training steps and a train/test split of 9600/400 samples.

Figures \ref{fig:worstgensamp1}, \ref{fig:worstgensamp2} and \ref{fig:worstgensamp3} show samples generated from the worst generator at various steps throughout training.

\begin{figure*}
\begin{center}
\begin{tabular}{@{\hspace{-0.3em}}c@{\hspace{-0.7em}}c@{\hspace{-0.7em}}c@{\hspace{-0.7em}}c@{\hspace{-0.3em}}c}
\subfloat[Step 0]{\includegraphics[width = 0.19\textwidth]{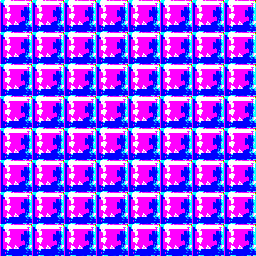}} &
\subfloat[Step 2K]{\includegraphics[width = 0.19\textwidth]{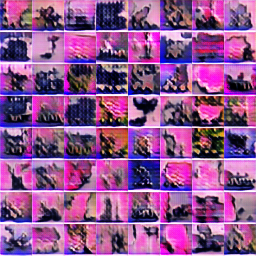}} &
\subfloat[Step 4K]{\includegraphics[width = 0.19\textwidth]{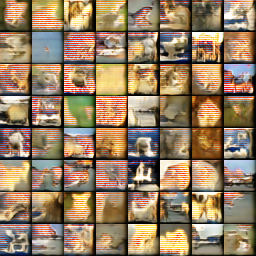}} &
\subfloat[Step 6K]{\includegraphics[width = 0.19\textwidth]{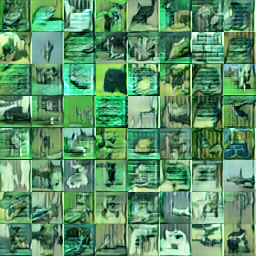}} &
\subfloat[8K]{\includegraphics[width = 0.19\textwidth]{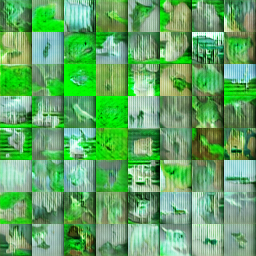}}
\end{tabular}
\captionsetup{font=footnotesize}
\caption{a-e: Generated samples at different training steps from the worst generator}\label{fig:worstgensamp1}
\end{center}\vspace{-0.4cm}
\end{figure*}
\begin{figure*}
\begin{center}
\begin{tabular}{@{\hspace{-0.3em}}c@{\hspace{-0.7em}}c@{\hspace{-0.7em}}c@{\hspace{-0.7em}}c@{\hspace{-0.3em}}c}
\subfloat[Step 10K]{\includegraphics[width = 0.19\textwidth]{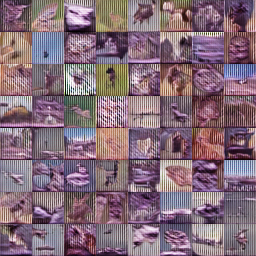}} &
\subfloat[Step 12K]{\includegraphics[width = 0.19\textwidth]{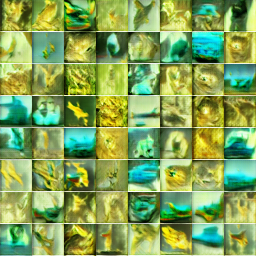}} &
\subfloat[Step 14K]{\includegraphics[width = 0.19\textwidth]{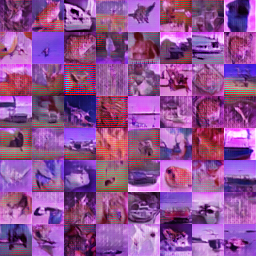}} &
\subfloat[Step 16K]{\includegraphics[width = 0.19\textwidth]{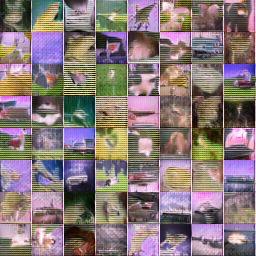}} &
\subfloat[18K]{\includegraphics[width = 0.19\textwidth]{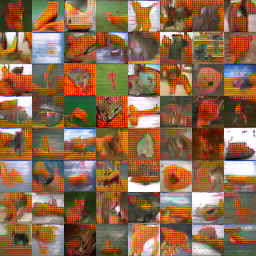}}
\end{tabular}
\captionsetup{font=footnotesize}
\caption{a-e: Generated samples at different training steps from the worst generator}\label{fig:worstgensamp2}
\end{center}\vspace{-0.4cm}
\end{figure*}

\begin{figure*}
\begin{center}
\begin{tabular}{@{\hspace{-0.3em}}c@{\hspace{-0.27em}}c@{\hspace{-0.27em}}}
\subfloat[Step 20K]{\includegraphics[width = 0.19\textwidth]{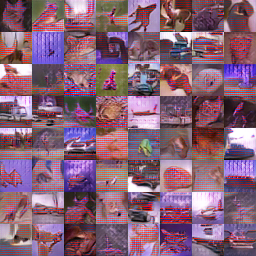}} 
\end{tabular}
\captionsetup{font=footnotesize}
\caption{a: Generated samples at the final step from the worst generator}\label{fig:worstgensamp3}
\end{center}\vspace{-0.4cm}
\end{figure*}

\subsection{Approximating the duality gap}
\label{app:dgapprox}
We explored variants in which we are circumventing the optimization in order to find the worst case generator/discriminator by using a set of discriminators/generators out of which we choose the most adversarial one. The sets are created by saving snapshots of the parameters of the two networks during training. We explored variants where the snapshots come a) only from past models and b) a mix of previous and future models, and generally found b) to perform better. This setting is similar to the models proposed in \citep{olsson2018skill}, except they use skill rating systems to infer a latent variable for the successfulness of a generator. On the other hand, we compute the duality gap and the minimax loss to infer the successfulness of the entire GAN and of the generator, respectfully.

Table \ref{tab:approxheatmaps} gives the progression of the approximated duality gap for four various scenarios: stable model, unstable mode collapse and stable mode collapse. The duality gap was approximated using 10 models that spanned accross 2 epochs.
\begin{figure}[t]
\begin{center}
\captionsetup{font=footnotesize}
\captionof{table}{\label{tab:approxheatmaps}Progression of DG throughout training and heatmaps of the generator distribution. A1: stable ring, A2: unstable ring, B1: mode collapse, B2: stable mode collapse}

\label{tab:DG&ConvergenceAppendix} 
\scalebox{0.9}{
\begin{tabular}{ |c|c|c| }
\thickhline
& 1 & 2 \\
\cline{2-3} 
\multirow{2}{*}[5em]{\rotatebox{90}{A}} & \includegraphics[width=0.4\textwidth, trim=0 -5 0 -5]{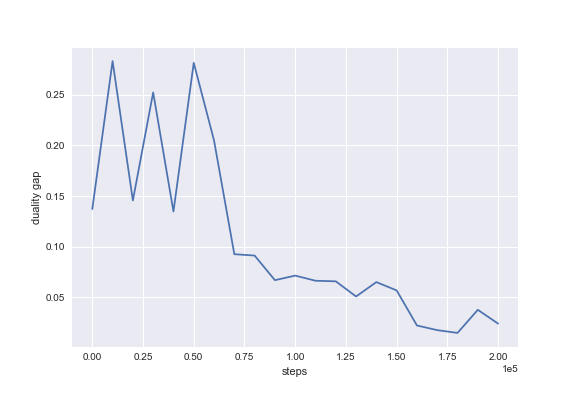} & \includegraphics[width=0.4\textwidth, trim=0 -5 0 -5]{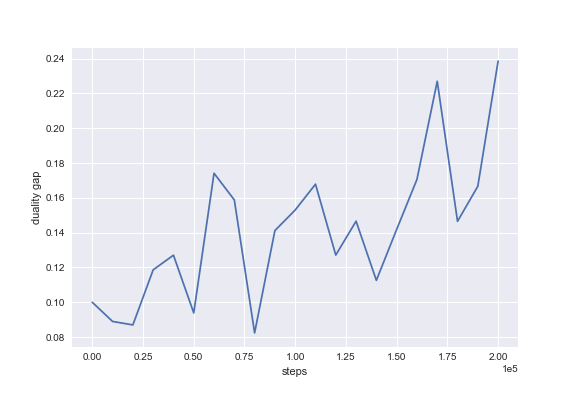}
\\
\cline{2-3}
& \includegraphics[width=0.4\textwidth, trim=0 -5 0 -5]{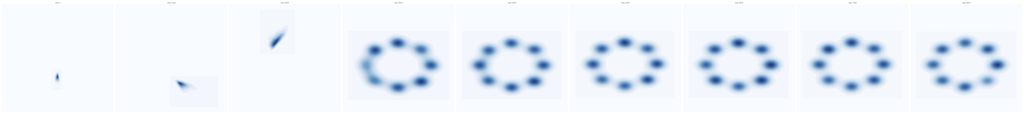} & \includegraphics[width=0.4\textwidth, trim=0 -0 0 -5]{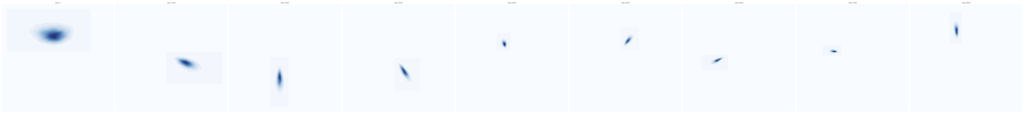}\\
\thickhline
\multirow{2}{*}[5em]{\rotatebox{90}{B}} & \includegraphics[width=0.4\textwidth, trim=0 -5 0 -5]{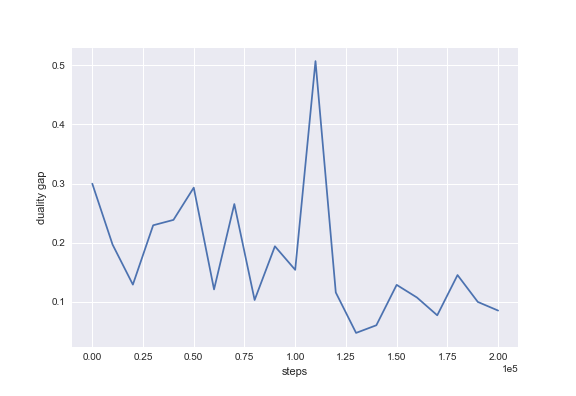} & \includegraphics[width=0.4\textwidth, trim=0 -5 0 -5]{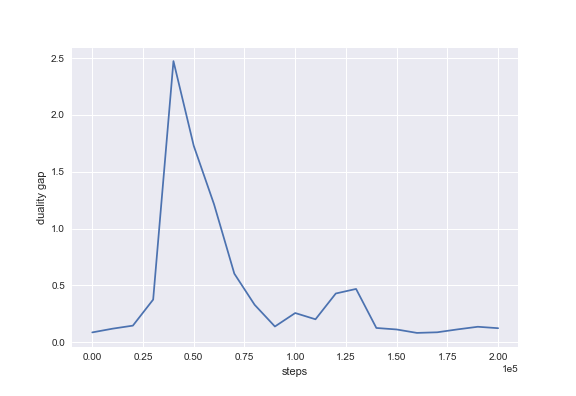}\\
\cline{2-3} 
& \includegraphics[width=0.4\textwidth, trim=0 -5 0 -5]{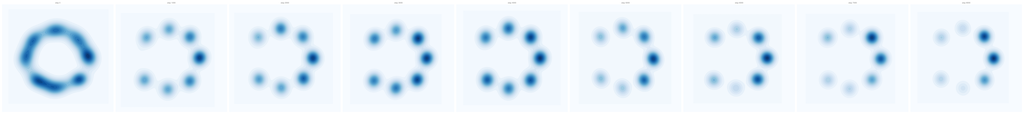} & \includegraphics[width=0.4\textwidth, trim=0 -5 0 -5]{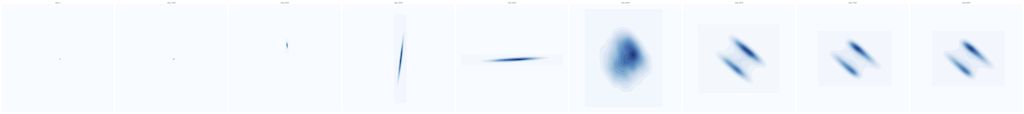}\\
\thickhline
\thickhline
\end{tabular}}
\end{center}
\end{figure}

\section{Analysis of the quality of the empirical DG}
\label{app:analysisapprox}
The theoretical assumption appearing in the proof in Appendix \ref{sec:ProofProp} is that the discriminator and generator have unbounded capacity and we can obtain the true minimizer and maximizer when computing $\u_{\rm{worst}}$ and $\v_{\rm{worst}}$, respectively. This, however, is not tractable in practice. Furthermore, it is well known that one common problem in GANs is mode collapse. This raises the question of how the duality gap metric would be affected if the worst generator that we compute is collapsed itself. In the following we address this both empirically and from a theoretical perspective.

We use the same experimental setup as described in Appendix \ref{app:ring}. We focus on a GAN that has converged such that the generator covers all modes uniformly i.e. $p_g = p_{data}$ (Figure \ref{fig:theoreticalanalysis} a)). The discriminator outputs 0.5 for real and fake samples (Figure \ref{fig:theoreticalanalysis} b)). This means that the model has reached the equilibrium and the duality gap -in theory- is zero.
\paragraph{Collapsed worst case generator.}
Now we focus on the calculation of the duality gap. Let us consider the case of a mode collapsed worst case generator. In particular, when computing the maximin part of the duality gap i.e. $M(\u_{\rm{worst},\v})$, let us assume the solution was such that $G_{\u_{\rm{worst}}}$ only covers one mode of the true distribution (Figure \ref{fig:theoreticalanalysis} d)). Then
$M(\u_{\rm{worst},\v}) = \log(0.5) + \log(1-0.5)$. The minmax calculation is:
$M(\u,\v_{\rm{worst}}) = \log(0.5) + \log(1-0.5)$. Hence, the value of DG is zero, despite the collapse in the calculation for the $\u_{\rm{worst}}$. The generator has no incentive to spread its mass due to the objective. While this is a problem for the original GAN that is being trained, it is not an issue for the calculation of the duality gap metric. \\
\begin{figure}
\centering
\begin{tabular}{cc}
\subfloat[GAN generator]{\includegraphics[width = 0.3\textwidth]{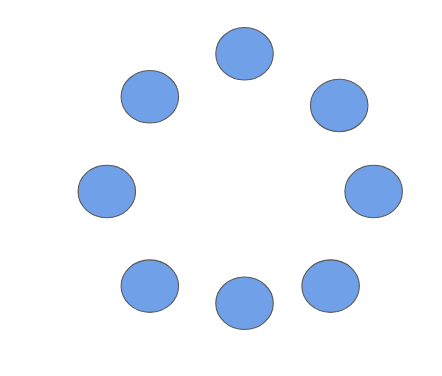}} &
\subfloat[GAN discriminator]{\includegraphics[width = 0.3\textwidth]{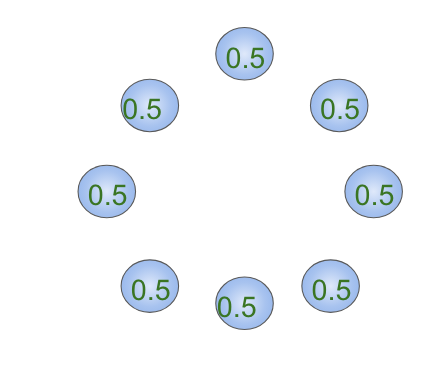}} \\
\subfloat[DG: $\v_{\rm{worst}}$]{\includegraphics[width = 0.3\textwidth]{images/appendix/analysis/D_theory.png}}  &
\subfloat[DG: $\u_{\rm{worst}}$]{\includegraphics[width = 0.3\textwidth]{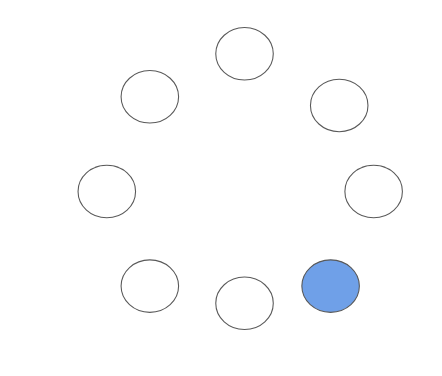}}
\end{tabular}
\caption{\label{fig:theoreticalanalysis} Analysis of a GAN that has reached the equilibrium for the mixture of 8 Gaussians problem. a) Samples generated from the GAN generator cover all 8 modes uniformly; b) Probabilities for a sample being real. The GAN discriminator assigns 0.5 probability to data points from the 8 modes and 0 everywhere else; c) For the computation of the duality gap, the theoretical $\v_{\rm{worst}}$ assigns 0.5 to fake/real samples for the fixed GAN generator; d) We assume there was mode collapse when computing $\u_{\rm{worst}}$ for the fixed GAN discriminator and samples from $G_{\u_{\rm{worst}}}$ lie only on a single mode.}
\end{figure} \\
Figure \ref{fig:practicalanalysis} b) shows samples generated from $G_{\u_{\rm{worst}}}$ when the experiment is performed in practice. We do observe that in this case, there is indeed a collapse that happened in the worst generator for the fixed GAN discriminator. Yet, $D(G_{\u_{\rm{worst}}})=0.489$ and $DG=0.002$ confirming the previous thought experiment. A heatmap with generated samples from the GAN generator are given in Fig. \ref{fig:practicalanalysis}. \\

\begin{figure}
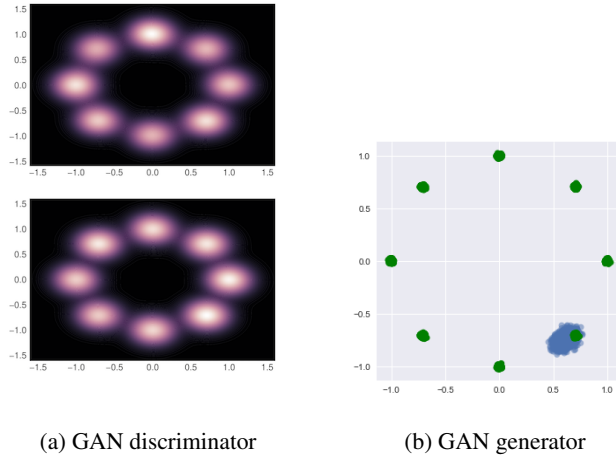

\centering
\begin{tabular}{cc}
\subfloat[GAN discriminator]{\includegraphics[width = 0.3\textwidth]{images/appendix/toy/truedg-goodring2-modes.png}} &
\subfloat[GAN generator]{\includegraphics[width = 0.3\textwidth]{images/appendix/toy/worst_gen_stablering.png}}
\end{tabular}
\caption{\label{fig:practicalanalysis} a) A heatmap of generated samples from the GAN generator (up) and the true data distribution (below). The generator is able to cover the true data distribution; b) Generated samples from $G_{\u_{\rm{worst}}}$.}
\end{figure}
Hence, mode collapse for the computation of DG is not an issue. Note though that when there is mode collapse in the GAN itself that is being evaluated, the DG detects this. In particular, this is supported by the high anti-correlation between DG and the number of covered modes and sample quality as shown in Table \ref{table:correlation}.

\paragraph{Suboptimal solutions due to the optimization.} We now investigate the effect of the number of optimization steps used for the calculation of the duality gap on the quality of the solution. We run 5 different models with different hyperparameters with the goal to find the best setting. As suggested, we want to use the duality gap as the metric for this. Table \ref{tab:tuning} gives the results. The ranking of the models is the same for various numbers of optimization steps and corresponds to the ranking obtained by taking into consideration the number of covered modes and the number of generated samples that fall within 3 standard deviations of one of the modes.\\
This suggests that as long as one uses the same number of optimization steps when comparing different models, the suboptimality of the solution is empirically not an issue.

\begin{table}[]
\centering
\begin{tabular}{|l|l|l|l|l|l|l|l|}
\hline
\multicolumn{2}{|l|}{hyperparameters} & \multicolumn{2}{l|}{quality of GAN}                                                                                                       & \multicolumn{4}{l|}{DG for \# optimization steps} \\ \hline
lr\_D             & lr\_G             & \begin{tabular}[c]{@{}l@{}}\#modes\\ (out of 8)\end{tabular} & \begin{tabular}[c]{@{}l@{}}\# quality samples\\ (out of 2500)\end{tabular} & 500       & 1000     & 1500     & 2000     \\ \hline
2e-3              & 1e-4              & 8                                                            & 2414                                                                       & 0.014     & 0.03     & 0.04     & 0.06     \\ \hline
1e-4              & 1e-4              & 1                                                            & 1119                                                                       & 10.02     & 11.3     & 12.23    & 12.3     \\ \hline
1e-3              & 1e-4              & 8                                                            & 2440                                                                       & 0.009     & 0.01     & 0.006    & 0.02     \\ \hline
5e-3              & 1e-4              & 8                                                            & 2478                                                                       & 0.008     & 0.002    & 0.002    & 0.001    \\ \hline
1e-4              & 1e-5              & 1                                                            & 501                                                                        & 10.84     & 12.37    & 13.25    & 13.7     \\ \hline
\end{tabular}
\caption{\label{tab:tuning} DG for various number of optimization steps and GAN hyperparameters. The set of the best hyperparameters is the same no matter the number of optimization steps are used for the calculation of the duality gap.}
\end{table}

\paragraph{Comparison between the approximate DG and the true DG.} We now investigate how the approximate DG compares to the true DG (which in this case is computed using an extensive grid search). Specifically, we compare the approximate DG (i) \emph{DG-approx} to (ii) \emph{DG-true-grid} and (iii) \emph{DG-true-conv} (see Figure~\ref{pics:gridsearch}). The real data is a 1D Gaussian. Hence, for (ii) the true $G_{worst}$ can be computed using an extensive grid search within a wide interval, whereas $D_{worst}$ is computed by optimization till convergence. Similarly, for (iii) both $D_{worst}$ and $G_{worst}$ are optimized till convergence, whereas (i) uses only a few steps. We find that the gap between the true and approximate DG is not large. In particular, we detect strong correlation - (i) and (ii): 0.81, (i) and (iii): 0.89 (ii) and (iii): 0.92.
\begin{figure}
\centering
\includegraphics[width=0.25\textwidth, height=62pt]{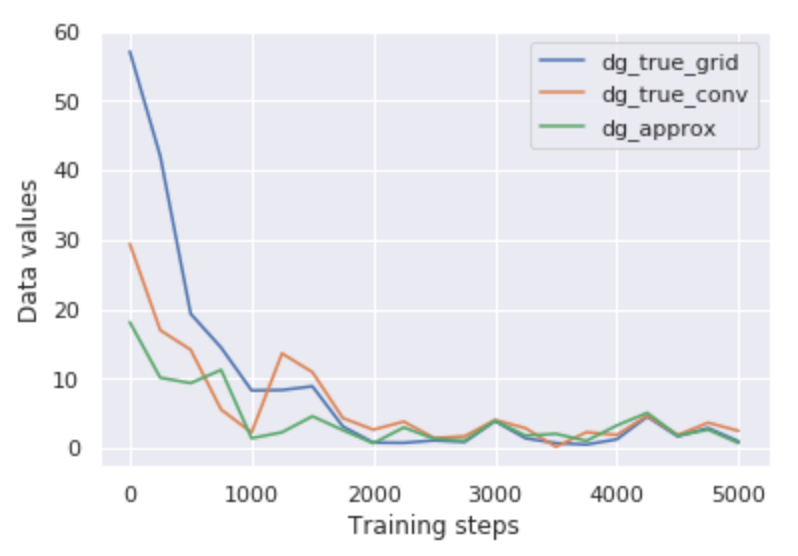}\quad
\includegraphics[width=0.25\textwidth, height=62pt]{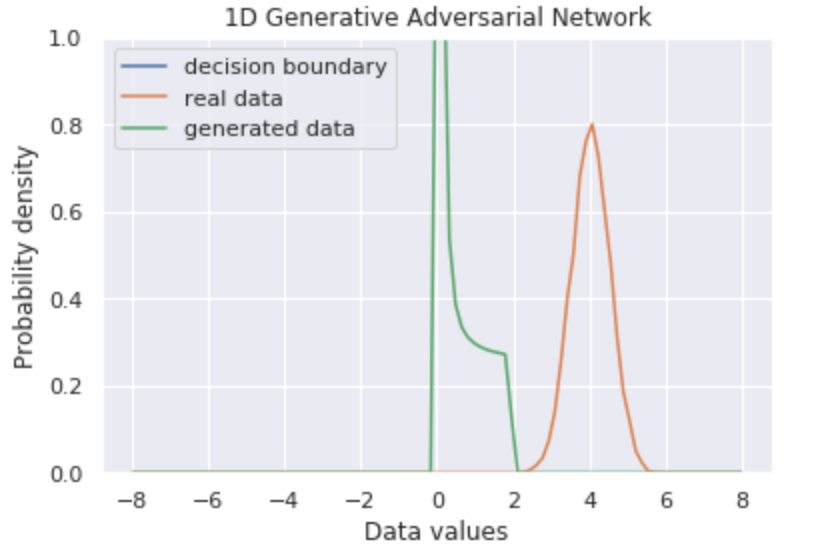}\quad
\includegraphics[width=0.25\textwidth, height=62pt]{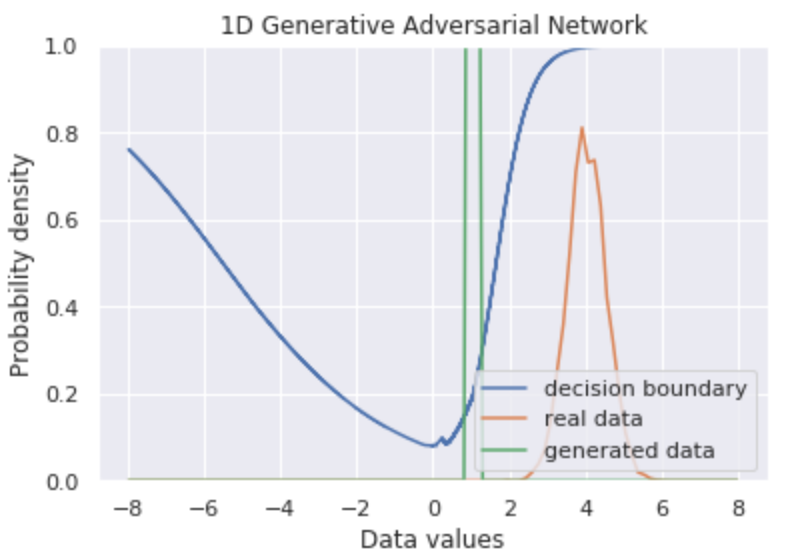}\quad
\includegraphics[width=0.25\textwidth, height=62pt]{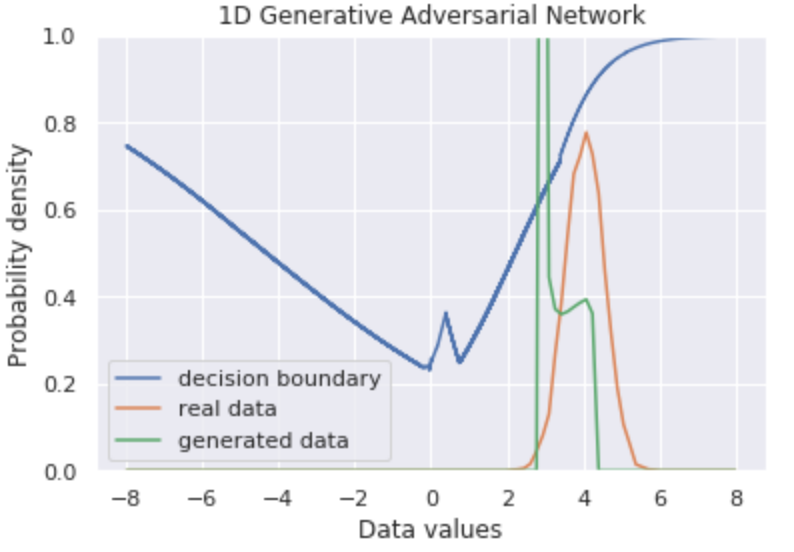}\quad
\includegraphics[width=0.25\textwidth, height=62pt]{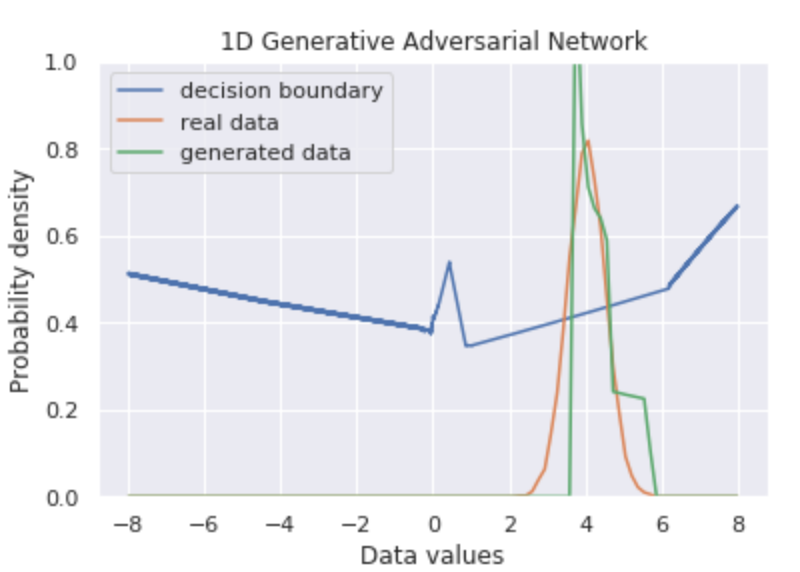}
\caption{\footnotesize GAN trained on 1D Gaussian: (i) DG-approx vs. true DG; (i) beginning of training; (ii) after 500 steps; (iii) after 1500 steps; (iV) at the end of training}
\label{pics:gridsearch}
\end{figure}

\end{document}